\newcommand{\tcr}[1]{{\color{black}{#1}}}
\newcommand{\tcb}[1]{{\color{black}{#1}}}
\def\cS{{\mathcal{S}}}
\def\cD{{\mathcal{D}}}
\def\cA{{\mathcal{A}}}
\def\cN{{\mathcal{N}}}
\def\cM{{\mathcal{M}}}
\def\cO{{\mathcal{O}}}
\def\cT{{\mathcal{T}}}
\def\cL{{\mathcal{L}}}
\def\hT{{\widehat{\mathcal{T}}}}
\def\tT{{\widetilde{\mathcal{T}}}}
\def\VV{{\mathbb{V}}}
\def\EE{{\mathbb{E}}}
\def\RR{{\mathbb{R}}}
\def\hQ{{\hat{Q}}}
\def\hw{{\widehat{w}}}
\def\tw{{\widetilde{w}}}
\def\bI{{\mathrm{\mathbf{I}}}}
\def\hD{{\widehat{\mathcal{D}}}}
\newtheorem*{theorem*}{Theorem}
\newtheorem{theorem}{Theorem}
\newtheorem{lemma}{Lemma}
\newtheorem*{corollary*}{Corollary}
\newtheorem{corollary}{Corollary}
\newtheorem{definition}{Definition}
\journal{Artificial Intelligence}
\begin{document}

\begin{frontmatter}



\title{Pessimistic Value Iteration for Multi-Task Data Sharing in Offline \\Reinforcement Learning}

\author[shlab]{Chenjia Bai}\ead{baichenjia@pjlab.org.cn}
\author[northwestern]{Lingxiao Wang}
\author[tju]{Jianye Hao}
\author[yale]{Zhuoran Yang}
\author[shlab,nwpu]{Bin Zhao}
\author[nwpu]{Zhen Wang\corref{cor1}}\ead{w-zhen@nwpu.edu.cn}
\author[shlab,nwpu]{Xuelong Li\corref{cor1}}\ead{li@nwpu.edu.cn}

\affiliation[shlab]{organization={Shanghai Artificial Intelligence Laboratory},
    city={Shanghai},
    country={China}}
\affiliation[northwestern]{organization={Department of Industrial Engineering and Management Sciences, Northwestern University},
    city={Evanston},
	state={IL},
    country={USA}}
\affiliation[tju]{organization={Tianjin University},
    city={Tianjin},
    country={China}}
\affiliation[yale]{organization={Department of Statistics and Data Science, Yale University},
    city={New Haven},
	state={CT},
    country={USA}}
\affiliation[nwpu]{organization={School of Artificial Intelligence, OPtics and ElectroNics (iOPEN), Northwestern Polytechnical University},
    city={Xi'an},
    country={China}}
\cortext[cor1]{Corresponding Author}

\begin{abstract}
Offline Reinforcement Learning (RL) has shown promising results in learning a task-specific policy from a fixed dataset. However, successful offline RL often relies heavily on the coverage and quality of the given dataset. In scenarios where the dataset for a specific task is limited, a natural approach is to improve offline RL with datasets from other tasks, namely, to conduct Multi-Task Data Sharing (MTDS). Nevertheless, directly sharing datasets from other tasks exacerbates the distribution shift in offline RL. In this paper, we propose an uncertainty-based MTDS approach that shares the entire dataset without data selection. Given ensemble-based uncertainty quantification, we perform pessimistic value iteration on the shared offline dataset, which provides a unified framework for single- and multi-task offline RL. We further provide theoretical analysis, which shows that the optimality gap of our method is only related to the expected data coverage of the shared dataset, thus resolving the distribution shift issue in data sharing. Empirically, we release an MTDS benchmark and collect datasets from three challenging domains. The experimental results show our algorithm outperforms the previous state-of-the-art methods in challenging MTDS problems. See \url{https://github.com/Baichenjia/UTDS} for the datasets and code.
\end{abstract}




\begin{keyword}
Uncertainty Quantification \sep Data Sharing \sep Pessimistic Value Iteration \sep Offline Reinforcement Learning 
\end{keyword}

\end{frontmatter}


\section{Introduction}

Deep Reinforcement Learning (DRL) \citep{sutton-18} is a growing direction in Decision Making and plays an important role in broad Artificial Intelligence research. DRL has achieved remarkable success in a variety of tasks, including Atari games \cite{DQN-2015}, Go \cite{AlphaGo-2018}, and StarCraft \cite{AlphaStar-2019}. However, in most successful applications, DRL requires millions of interactions with the environment. In real-world applications such as navigation \citep{navigate-18} and healthcare \citep{healthcare}, acquiring a large number of samples by following a possibly suboptimal policy can be costly and dangerous. Alternatively, practitioners seek to develop \tcr{\emph{Offline RL} \citep{levine2020offline,lange2012batch} algorithms} that learn a policy based solely on an offline dataset, where the dataset is typically available. 
The most significant challenge in offline RL is the distribution shift issue, where the offline dataset does not match the current policy in optimization. Such a distribution shift leads to inaccurate policy evaluation, which hinders the performance of offline RL. To tackle this problem, most of the previous offline RL methods enforce policy constraints between the learned policy and the behavior policy that collects the \tcr{dataset \citep{bcq-2019,bear-2019}}. As a consequence, the performance of such methods relies heavily on the quality of the behavior policy. Alternatively, previous methods also measure the uncertainty of state-action pairs and utilize such uncertainty measurement to conduct conservative policy evaluation. Nevertheless, such methods rely on the coverage of the given dataset in the state-action space \citep{pevi-2021,xie2021policy}. As a result, if the offline dataset for a specific task is of low quality or has limited data coverage, the performance of offline RL algorithms is limited.

To this end, our research aims to develop a Multi-Task Data Sharing (MTDS) method, which enhances the offline RL on a specific task by utilizing datasets from other relevant tasks. The multi-task datasets are typically accessible in practical offline RL problems. For example, one can collect different datasets for various tasks with the same robot arm, where each task has a relatively small dataset. In such a scenario, directly training individual policies for each task with the corresponding dataset is insufficient for good performance since the data coverage for a single dataset is limited. Thus, a natural idea is to utilize offline datasets from other tasks in the same domain to help each task learn better (namely, MTDS). In particular, in learning a task $A_i\in A$ from domain $A$ with dataset $\cD_{A_i}$, an MTDS method constructs a mixed dataset $\hD_{A_i}$ by relabeling experience $e_j=(s,a,r_{A_j}(s,a),s')$ from task $A_j\in A$ to $e_{j\rightarrow i}=(s,a,r_{A_i}(s,a),s')$, where $i\neq j$. The shared tasks come from the same domain with the same dynamics; thus we only need to modify the rewards in the data sharing. We then train the offline policy for task $A_i$ on the mixed dataset $\hD_{A_i}$. 

Though such an MTDS process is easy to implement, naively sharing data can exacerbate the distribution shift between behavior and learned policies, which degrades performance compared to single-task training. In addition, when incorporating policy constraints, the different behavior policies of the mixed dataset drive the learned policy towards different directions, which results in conflicting gradient directions and instability in optimization. Previous methods propose Conservative Data Sharing (CDS) \citep{CDS-2021} to address this problem, which minimizes distribution shift by defining an effective behavior policy and only sharing data relevant to the main task. However, such a method hinges on policy constraints, while we insist policy coverage instead of the optimality of behavioral policy suits better in MTDS. In addition, the data selection process of CDS discards a large amount of shared data that could be potentially informative in training. 

\begin{figure}[t]
\centering
\includegraphics[width=1\columnwidth]{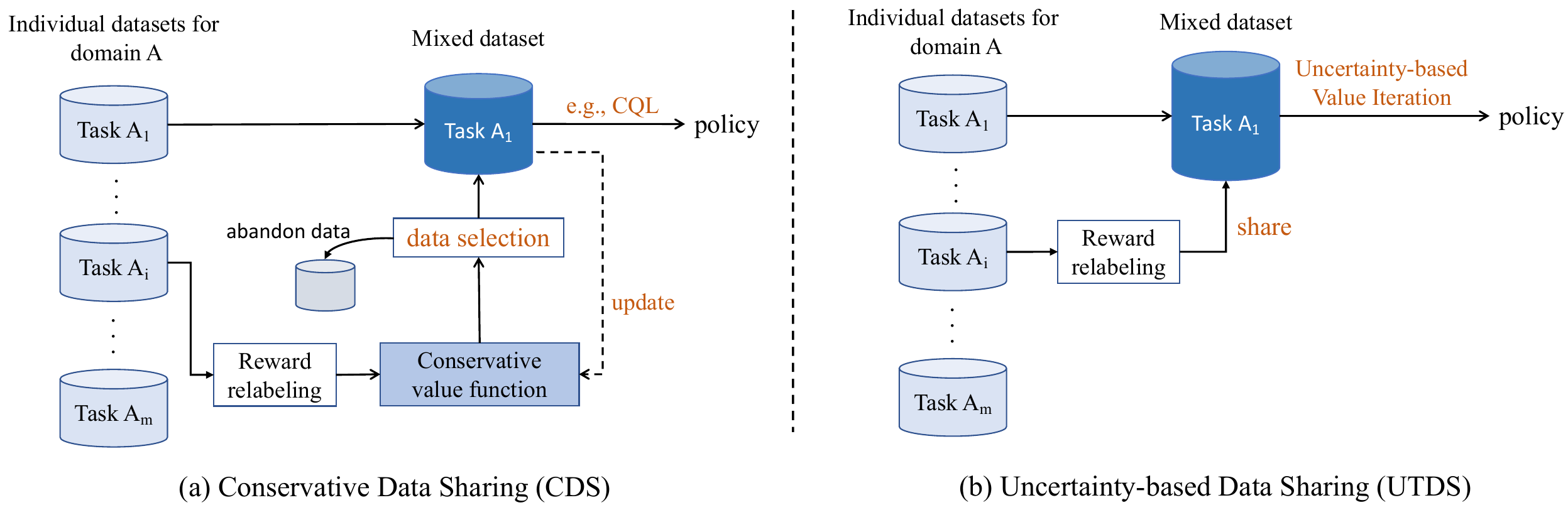}
\caption{The illustration of CDS and UTDS for MTDS in training task $A_1$. (a) CDS includes a data selection process through the learned conservative value function. The selected data is added to the mixed dataset.
(b) UTDS can share all data from other tasks without data selection. In policy training, UTDS performs pessimistic updates based on uncertainty in the large shared dataset.}
\label{fig:intro}
\end{figure}

In this paper, we propose \underline{U}ncer\underline{t}ainty-based \underline{D}ata \underline{S}haring (UTDS) algorithm, which allows arbitrary data sharing without data selection mechanisms. In particular, we first perform standard MTDS to obtain a mixed dataset with other tasks, then train an ensemble of $Q$-functions \citep{boot-2016} to provide uncertainty quantification for the mixed dataset. By measuring the uncertainty, we perform pessimistic value iteration to utilize the uncertainty as a penalty in offline training. Such an uncertainty quantifier considers the data coverage of the mixed dataset, which is less affected by the deviation between the behavior policies and the learned policy. In addition, we further penalize the out-of-distribution (OOD) actions within the support of the mixed dataset to improve the performance of the learned policy in the OOD region. A key factor for our proposed UTDS is that even if data sharing does not benefit the data coverage of the optimal policy, it does not degrade the UTDS learning results. Such a key factor makes UTDS inherently different from CDS \citep{CDS-2021}, which may have degraded performance without appropriate data selection. We illustrate the difference between CDS and UTDS in Figure~\ref{fig:intro}. 

A preliminary version of this work appeared as a spotlight conference paper \citep{PBRL-2022}, which mainly considers single-task offline RL. Nevertheless, we study the multi-task setting and data sharing problem in the present work. Our contributions are summarized as follows.
\begin{itemize}
\item The proposed UTDS algorithm provides a unified view for both single-task and multi-task offline RL learning without data selection. In particular, when the number of shared datasets is equal to one (namely $m=1$), UTDS degenerates to uncertainty-based single-task offline learning.
\item We provide a novel theoretical analysis in linear MDPs for MTDS. The analysis shows that the optimality gap of UTDS in data sharing is only related to the expected data coverage of the shared dataset on the optimal policy. Thus, UTDS is less affected by the change of the behavior policy in data sharing. 
\item Empirically, we conduct experiments on a suite of domains built on DeepMind Control Suite \citep{dmc-2018} and collect multi-task datasets to construct a benchmark for large-scale MTDS. We release the collected datasets and open-source code for the data-generation, and also reproduce several MTDS baselines for further utilization \footnote{See \url{https://github.com/Baichenjia/UTDS} for the datasets and code.}. 
\end{itemize}

\section{Preliminaries}

\subsection{Offline Reinforcement Learning} We consider an episodic MDP defined by the tuple $(\cS,\cA,T,r,P)$, where $\mathcal{S}$ is the state space, $\mathcal{A}$ is the action space, $T\in\mathbb{N}$ is the length of episodes, $r$ is the reward function, and $P$ is the transition distribution. The goal of RL is to find a policy $\pi$ that maximizes the expected cumulative rewards $\mathbb{E}\big[\sum_{i=0}^{T-1}{\gamma^t r_i}]$, where $\gamma\in[0,1)$ is the discount factor in episodic settings. The corresponding $Q$-function of the optimal policy satisfies the following Bellman operator, as
\begin{equation}
\mathcal{T}Q(s,a):=r(s,a)+\gamma \mathbb{E}_{s'\sim T(\cdot|s,a)}\big[\max_{a'} Q^-(s',a')\big],
\end{equation}
where $\theta$ is the parameters of $Q$-network. \tcb{We remark that relaxing the deterministic assumption of reward function to a stochastic one does not affect our method. Nevertheless, a deterministic reward $r(s,a)$ is quite standard in offline RL literature, and existing offline RL benchmarks also have deterministic reward functions.} In Deep RL (DRL), the $Q$-value is updated by minimizing the TD-error, namely $\mathbb{E}_{(s,a,r,s')}[(Q-\cT Q)^2]$. Empirically, the target $\mathcal{T}Q$ is typically calculated by a separate target-network parameterized by $\theta^-$ without gradient propagation \citep{DQN-2015}. In online RL, one typically samples the transitions $(s,a,r,s')$ through iteratively interacting with the environment. The $Q$-network is then trained by sampling from the collected transitions.

In offline RL, the agent is not allowed to interact with the environment. The experiences are sampled from an offline dataset. Naive off-policy methods such as $Q$-learning suffer from the \textit{distributional shift}, which is caused by different visitation distribution of the behavior policy and the learned policy. Specifically, the greedy action $a'$ chosen by the target $Q$-network in $s'$ can be an OOD-action since $(s',a')$ is scarcely covered by the dateset $\mathcal{D}_{\rm in}$. Thus, the value functions evaluated on such OOD actions typically suffer from significant extrapolation errors. Such errors can be further amplified through propagation and potentially diverges during training. 

\subsection{Multi-Task Data Sharing (MTDS)} A multi-task Markov Decision Process (MDP) is defined by the tuple $\cM=(\cS,\cA,P,\gamma,\{R_i,i\}_{i=1}^m)$, where $m$ is the number of tasks from the same domain, and we omit the domain notation $A$. We remark that all tasks share the same state space, action space, and transition function. Meanwhile, different tasks have different reward functions $\{R_i\}$ indexed by $i\in[m]$. Correspondingly, $Q^{\pi_i}_i(a|s)$ is the value function of task $i\in[m]$ with respect to the policy $\pi_i$. 

The multi-task dataset $\cD$ contains $m$ per-task dataset $\cD=\cup_{i=1}^m \cD_i$, where $\cD_i=\{(s,a,r_i,s')\}$ is the dataset for the $i$-th task and the reward function is returned by $R_i$. We denote the behavior policy for each dataset by $\{\pi_{\beta_1},\ldots,\pi_{\beta_m}\}$. For the $i$-th task, MTDS shares the dataset $\cD_j$ from the $j$-th task and relabels the rewards in $\cD_j$ with the reward function $R_i$. We denote the relabeled dataset from task $j\rightarrow i$ by $\cD_{j\rightarrow i}$. Then $\hD_i = \cD_i\cup \cD_{j\rightarrow i}$ denotes the shared dataset that contains all shared data from task $j$, where $j\neq i$. To conduct data sharing, we assume that the reward functions for all tasks are known to us, which is commonly achievable in robotic locomotion and manipulation tasks \citep{dmc-2018}. In special cases, if such reward functions are otherwise unknown, one can learn a parametric reward function based on the dataset. 

\subsection{Conservative Data Sharing (CDS)}
Previous work shows that naively sharing data to construct $\hD_i$ can degrade performance \citep{CDS-2021}. The reason is that data sharing increases the policy divergence $D(\pi_i\|\hat\pi_{\beta_i})(s)$ between the learned policy and the behavior policy as the shared data modifies $\hat\pi_{\beta_i}$. To this end, CDS optimizes the following objective to select the shared data, as
\begin{equation}\label{eq:cds-eq}
J_{\rm cds}=\max_{\hat\pi_{\beta_i}}\EE_{s\sim \hD_i}\big[\EE_{a\sim \pi_i}[Q(s,a)]-\alpha D(\pi_i\|\hat\pi_{\beta_i})(s)\big],
\end{equation}
where CDS selects data to obtain a behavior policy with small policy divergence. Theoretically, let $\pi^{\rm cds}_i$ and $\pi^*_{\beta_i}$ be policies that maximize Eq.~\eqref{eq:cds-eq}, then $w.p. \geq 1-\delta$, the policy improvement bound can be quantified as $J(\pi^{\rm cds}_i)\geq J(\pi_{\beta_i})-\delta_{\rm cds}
$, where
\begin{equation}\label{eq::CDS_bound111}
\delta_{\rm cds}\approx \cO\big(\nicefrac{1}{(1-\gamma)^2}\big) \EE_{s}\Big[\sqrt{\big(D(\pi^{\rm cds}_i\big\|\pi^*_{\beta_i})(s)+1\big)\:\big/\:\big|\hD_i(s)\big|} ~\Big] - \big[\alpha D(\pi^{\rm cds}_i,\pi^*_{\beta_i})+J(\pi^*_{\beta_i})-J(\pi_{\beta_i})\big].
\end{equation}
According to Eq.~\eqref{eq::CDS_bound111}, the learned policy should be close to the behavior policy to reduce this bound. We remark that such a lower bound is closely related to imitation learning rather than policy optimization since it only cares about the relationship to the behavior policy $\pi^*_{\beta_i}$. In contrast, our analysis of UTDS provides the optimality gap between the learned policy and the optimal policy, which provides useful guidance for data sharing without policy constraints.


In CDS implementation, optimizing $J_{\rm cds}$ can be approximately achieved by selecting transitions with a conservative $Q$-value more than the top $k$-th quantiles. Nevertheless, since the shared dataset changes with the update of $Q$-function, the learning process becomes unstable with the change of the effective behavior policy per update. Choosing a small $k$ (e.g., 10\%) may overcome such a problem, while it drops $90\%$ of the shared dataset that may contain helpful information for RL.

\section{Method}
\label{sec:method}

In this section, we introduce our proposed UTDS method. In UTDS, we collect all data from the shared dataset without data selection and measure the uncertainty through an ensemble of $Q$-networks. UTDS then performs the pessimistic value update to penalize the state-action pairs with large uncertainties in the shared dataset, which results in a pessimistic data-sharing policy.

\subsection{Uncertainty Quantifier}

Considering a shared dataset $\hD_i=\cD_i \cup \cD_{j\rightarrow i}$ that contains data from $i$-th task and relabeled data from the $j$-th task. We parameterize the $Q$-function for task $i$ as $Q_i(s,a;\theta_i)$, where $\theta_i$ denotes the parameters. The value function can be obtained via the following target,
\begin{equation}
\hT Q_i(s,a)=r_i(s,a)+\gamma \EE_{s'\sim \hD_i,a'\sim \pi_i} \big[Q_i^-(s',a')\big],
\end{equation}
where we denote $\hT Q_i$ as the target corresponding to the experience $(s,a,r,s')$  sampled from the shared dataset $\hD_i$, and we select $a'$ following the policy $\pi_i$. The target-network $Q_i^-$ is parameterized by $\theta_i^-$, which is updated by $\theta_i^-\leftarrow (1-\tau)\theta_i + \tau\theta_i$ without gradient propagation. 

To acquire the uncertainty quantification, we train $N$ independent $Q$-networks $\{Q_i^1,\dots,Q_i^N\}$ with different random initializations and independent target networks. We then measure the uncertainty of $(s,a)\sim\hD_i$ by the standard deviation of the ensemble-$Q$ predictions, namely,
\begin{equation}\label{eq:uncertainty-gamma}
\Gamma_i(s,a)=\sqrt{\VV[Q^j_i(s,a)]}, \quad j\in[1,\dots,N],
\end{equation}
where $\VV$ is the variance calculator. The deviation among deep ensemble-$Q$ networks trained with independent initializations tends to be large if the corresponding state-action pair scarcely occurs in the dataset. Such a deviation shrinks as the sample size increases with the data sharing. Deep ensemble-$Q$ networks trained with independent initializations will typically find different solutions, which corresponds to a variety of $Q$-functions centered in large basins of attraction \citep{bayesian-2020}. 

\begin{figure}[t]
\centering
\includegraphics[width=0.6\columnwidth]{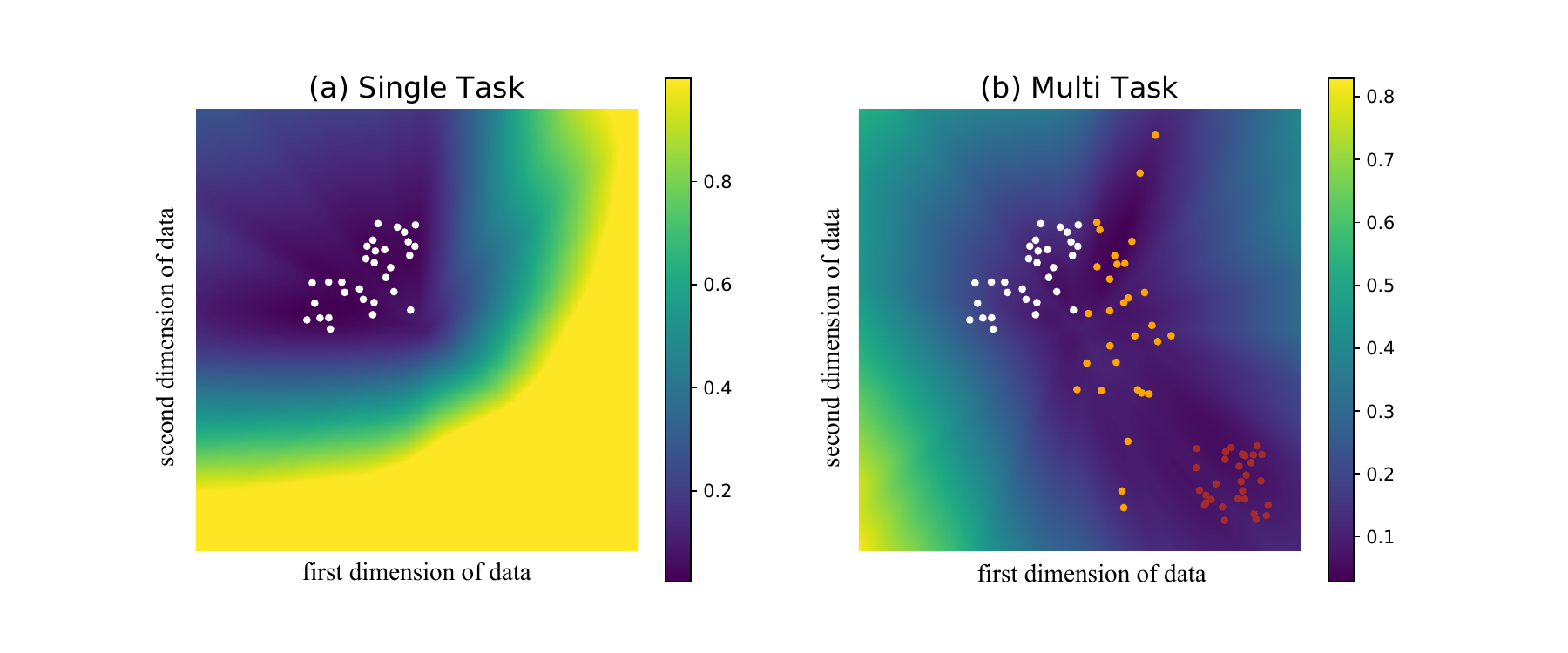}
\caption{An illustration of the uncertainty quantification of UTDS in (a) single-task dataset and (b) multi-task shared dataset. The multi-task datasets (i.e., the white, brown, and orange points) are generated by different Gaussian distributions. The uncertainty measured by the ensemble networks is represented by the color scales in the figures. The darker color means smaller uncertainty. As shown in the figures, sharing more data decreases the uncertainty.}
\label{fig:ucb-point}
\end{figure}

From the Bayesian perspective, the obtained ensemble-$Q$ networks is a non-parametric approximation of the posterior of the $Q$-function \citep{bayesian-2019,d2021repulsive}, which yields similar value on areas with rich data and diversely on areas with scarce data. Thus, the deviation among the ensemble $Q$-functions yields an epistemic uncertainty estimation, which we aim to utilize as a penalization in estimating the $Q$-functions. Here $\Gamma_t$ captures the radius of the confidence interval for the $Q$-function, which measures the uncertainty in the shared datasets. Recent works \citep{trust-2019,SGD-2022} also show that ensemble methods with stochastic gradient descent (SGD) and independent initialization are well generalized, thus capturing the epistemic uncertainty of the OOD data. We remark that although the ensemble-based method has been adopted in single-task offline RL \citep{EDAC-2021,PBRL-2022}, we present the first practical algorithm for uncertainty-based MTDS in offline RL.

We illustrate the uncertainty estimation with ensembles on a prediction task in $\mathbb{R}^2$ plane, as shown in Figure~\ref{fig:ucb-point}. The data are drawn from the Gaussian distributions, and the response is output by a random network. Figure~\ref{fig:ucb-point}(a) shows the uncertainty measured by the ensemble in the single-task dataset, and we find that the uncertainty is small for the state-action pairs near the data cluster. Nevertheless, since the single dataset has limited coverage, most areas still have large uncertainty. In contrast, in Figure~\ref{fig:ucb-point}(b), we share other two datasets and the uncertainty distribution changes. More areas are `known' and with low uncertainty, which helps reduce the uncertainty of the state-action pairs induced by the optimal policy. As we will discuss later, if the data sharing improves the data coverage on the trajectory induced by the optimal policy, UTDS enjoys a tighter optimality bound and better performance. In contrast, CDS that measures the policy divergence fails to adapt to the dataset similarity measured in reference to estimating the optimal value function, which means divergence quantification with trajectory distribution is inaccurate for uncertainty quantification.


\subsection{Pessimistic Value Iteration}

Based on the uncertainty quantifier $\Gamma_i$ in Eq.~\eqref{eq:uncertainty-gamma}, UTDS learns a pessimistic value function on the shared dataset. The Bellman target for each $Q$-function in the ensemble takes the following form,
\begin{equation}
\label{eq:bellman-UTDS}
\hT^{\rm UTDS} Q_i(s,a)=r_i(s,a)+\gamma \EE_{s'\sim \hD_i,a'\sim \pi_i} \big[Q_i^-(s',a')-\beta_1\Gamma_i(s',a')\big],
\end{equation}
where $\beta_1$ is a tuning parameter. Importantly, the uncertainty $\Gamma_i$ quantifies the deviation of a datapoint from the shared dataset, and the penalization enforces pessimism for the target-$Q$ value.

We remark that there are two options to penalize the $Q$-functions through the operator $\widehat\cT^{\rm in}$. In Eq.~(\ref{eq:bellman-UTDS}), we penalize the next-$Q$ value $Q^{-}_i(s', a')$ with the corresponding uncertainty $\Gamma_i(s',a')$. Alternatively, we can also penalize the immediate reward by $\hat{r}(s,a):=r(s,a)-\Gamma_i(s,a)$ and use $\hat{r}(s,a)$ in place of $r(s, a)$ for the target. Nevertheless, since the datapoint $(s, a) \in \cD_i$ lies on rich-data areas, the penalization  $\Gamma_i(s,a)$ on the immediate reward is usually very small thus having less effect in training. In UTDS, we penalize the uncertainty of $(s',a')$ in the next-$Q$ value.

Nevertheless, the target in Eq.~\eqref{eq:bellman-UTDS} only captures the uncertainty of experiences from the in-distribution data. Our empirical findings suggest that solely penalizing the uncertainty for in-distribution target is insufficient to control the OOD performance of the fitted $Q$-functions. In addition, we also need to penalize the value function for OOD actions. In UTDS, we sample OOD actions $a^{\rm ood}\sim \pi_i(s)$ following the learned policy, and the OOD states are still sampled from the shared dataset as $s\sim \hD_i$. We highlight that such OOD sampling requires only the offline dataset $\hD_i$ and does not require additional generative models or access to the simulator.

It remains to design the target for OOD samples. Since the transition $P(\cdot | s, a^{\rm ood})$ and reward $r(s, a^{\rm ood})$ are unknown, the true target of OOD sample is inapplicable. In UTDS, we propose a pseudo-target for the OOD datapoints. We quantify the uncertainty $\Gamma_i(s,a^{\rm ood})$ of the OOD data through the ensemble networks and use the following loss function to enforce pessimism for the OOD data, 
\begin{equation}
\label{eq:ood_loss}
\cL_{\rm ood}(s)=\EE_{s \sim \hD_i, a^{\rm ood} \sim\pi_i} \big[\big({\hT}^{\rm ood}Q_i(s,a^{\rm ood})-Q_i(s,a^{\rm ood})\big)^2\big],
\end{equation}
where ${\hT}^{\rm ood}Q_i(s,a^{\rm ood})$ is the pseudo-target for the OOD datapoints as,
\begin{equation}\label{eq:ood-target}
\hT^{\rm ood}Q_i(s,a^{\rm ood}):=Q_i(s,a^{\rm ood})-\beta_2\Gamma_i(s,a^{\rm ood}),
\end{equation}
where $\beta_2$ is a tuning parameter, and we remark ${\hT}^{\rm ood}Q_i(s,a^{\rm ood})$ is the target without gradient propagation. Our approach to OOD uncertainty quantification is inspired by PBRL \citep{PBRL-2022}, which we extend to the MTDS settings. As illustrated in Figure~\ref{fig:ucb-point}, $\Gamma_i$ provides well-calibrated uncertainty for both the in-distribution data and the OOD data.

We find that $\beta_2$ is important to the empirical performance. Specifically,
\begin{itemize}
\item At the beginning of training, both the $Q$-functions and the corresponding uncertainty quantifications are inaccurate. We use a large $\beta_2$ to enforce a strong regularization on OOD actions.
\item We then decrease $\beta_2$ exponentially with a factor $\alpha$ since the value estimation and uncertainty quantification becomes more accurate in training, which also prevents overly pessimism in the training process. We remark that a smaller $\beta_2$ requires more accurate uncertainty estimate for the pessimistic target estimation $\hT^{\rm ood} Q_i$. 
\end{itemize}

We remark that applying the operator for infinite times to $Q_i(s,a^{\rm ood})$ leads to $(\hT^{\rm ood})^\infty Q_i(s,a^{\rm ood})$, which equals
\begin{equation}\nonumber
Q_i(s,a^{\rm ood})- \sum\nolimits_{k=0}^{\infty} \alpha^k \beta_2\:\Gamma_i(s,a^{\rm ood})=Q_i(s,a^{\rm ood})-\frac{\beta_2}{(1-\alpha)} \: \Gamma_i(s,a^{\rm ood}).
\end{equation}
Thus, iteratively applying the OOD operator does not lead to the negative infinity target. In addition, a decaying parameter $\beta_2$ stabilizes the convergence of $Q_i(s,a^{\rm ood})$ in the training from the empirical perspective. Meanwhile, the operator $\hT^{\rm UTDS} Q_i(s,a)$ is a contraction mapping (see Appendix A.5 for a proof), which ensures that $Q_i(s,a)$ also converges to a fixed point. 

To conclude, when training the $i$-th task, the loss function for the $n$-th $Q$-function in the ensemble is
\begin{equation}\label{eq:UTDS-Q}
\cL_{\rm UTDS}^n=\EE_{s,a,r,s'\sim \hD_i} \Big[\Big(\hT^{\rm UTDS} Q_{\theta_i}^n(s,a) - Q_{\theta_i}^n(s,a)\Big)^2\Big] + \cL_{\rm ood}(s),
\end{equation}
where $\hT^{\rm UTDS}$ and $\cL_{\rm ood}$ are defined in 
Eq.~\eqref{eq:bellman-UTDS} and Eq.~\eqref{eq:ood_loss}, respectively. Each function $Q_{\theta_i}^n$ has its own target-network, where $n\in[N]$. The policy $\pi_{\phi_i}(s)$ for the $i$-th task is updated by minimizing the loss function as 
\begin{equation}\label{eq:UTDS-policy}
\cL_{\rm policy}=-\EE_{s\sim \hD_i,a\sim \pi_{\phi_i}} \Big[\min_{n=1,\ldots,N}Q_{\theta_i}^n(s,a)\Big],
\end{equation}
which takes a greedy action based on the minimum of the ensemble $Q$-functions. See Algorithm~\ref{alg:UTDS} for a summary of UTDS. 

\begin{algorithm}[t]
\caption{UTDS for training the $i$ -th task.}
\label{alg:UTDS}
\begin{algorithmic}[1]
\STATE {{\bf Initialize:} $N$ ensemble-$Q$ and target-$Q$ nets with parameter $\theta$ and $\theta^-$, initialize the policy $\pi_{\phi_i}$;}
\WHILE {\emph{not coverage}}
\STATE {Sample a batch of transition $(s,a,r,s')$ from $\hD_i$ that contains shared data from other tasks;}
\STATE {Perform OOD sampling based on the current policy to obtain $(s,a^{\rm ood})$ pairs;}
\STATE {Calculate the uncertainty $\Gamma_i(s,a)$ and $\Gamma_i(s,a^{\rm ood})$ based on the ensemble-$Q$ networks;}
\STATE {Preform pessimistic update via $\hT^{\rm UTDS}$ and $\hT^{\rm ood}$ to train the critic parameters $\theta_i$ via Eq.~\eqref{eq:UTDS-Q};}
\STATE {Train the pessimistic policy to train the actor parameters $\phi_i$ via Eq.~\eqref{eq:UTDS-policy};}
\ENDWHILE
\end{algorithmic}
\end{algorithm}

\section{Theoretical Analysis}

In this section, we consider linear MDPs \citep{lsvi-2020,pevi-2021} as a simplification, where we have access to a feature map of the state-action pair,  $\phi(s,a):\cS\times\cA\rightarrow\RR^d$. Meanwhile, the transition kernel and the reward function are linear in $\phi(s,a)$. We estimate the action value function by $Q_i(s,a)\approx \hw_i^{\top}\phi(s,a)$. See \S\ref{sec:theore} for the details. 

\subsection{UTDS in Linear MDPs}

From the theoretical perspective, an appropriate uncertainty quantification is essential to the provable efficiency in offline RL. Specifically, the $\xi$-uncertainty quantifier plays a central role in the analysis of both online and offline RL \citep{jaksch2010near,azar2017minimax, wang2020reward, lsvi-2020, pevi-2021, xie2021bellman, xie2021policy}.

\begin{definition}[$\xi$-Uncertainty Quantifier \citep{pevi-2021}]\label{def1}
The set of penalization $\{\Gamma_t\}_{t\in[T]}$ forms a $\xi$-Uncertainty Quantifier if it holds with probability at least $1 - \xi$ that
\begin{equation*}
|\widehat \cT V_{t+1}(s, a) - \cT V_{t+1}(s, a)| \leq \Gamma_t(s, a)
\end{equation*}
for all $(s, a)\in\cS\times\cA$, where $\cT$ is the Bellman equation and $\widehat \cT$ is the empirical Bellman equation that estimates $\cT$ based on the offline data.
\end{definition}

In linear MDPs \citep{lsvi-2020, wang2020reward, pevi-2021} where the transition kernel and reward function are assumed to be linear to the state-action representation $\phi:\mathcal{S}\times\mathcal{A}\rightarrow\mathbb{R}^d$, The following lower confidence bound (LCB)-penalty \citep{bandit-2011, lsvi-2020} is known to be a $\xi$-uncertainty quantifier for appropriately selected $\{\beta_t\}_{t\in[T]}$,
\begin{equation}
\label{eq:lcb}
\Gamma^{\rm lcb}(s_t,a_t)=\beta_t\cdot\big[\phi(s_t,a_t)^\top\Lambda_i^{-1}\phi(s_t,a_t)\big]^{\nicefrac{1}{2}},
\end{equation} 
where $\Lambda_i = \sum^{|\cD_i|}_{k=1}\phi(s^k_t, a^k_t)\phi(s^k_t, a^k_t)^\top + \lambda \cdot \bI$ is the covariance matrix corresponding to the dataset of task $i$, which plays the role of a pseudo-count intuitively. We remark that under such linear MDP assumptions, the penalty proposed in UTDS and $\Gamma^{\rm lcb}(s_t,a_t)$ in linear MDPs is equivalent under a Bayesian perspective. Specifically, we make the following claim.

\newtheorem{claim}{Claim}
\begin{claim}
\label{lem:lcb-main}
In linear MDPs, the proposed bootstrapped uncertainty $\beta_t\cdot\Gamma(s_t,a_t)$ is an estimation to the LCB-penalty $\Gamma^{\rm lcb}(s_t,a_t)$ in Eq. (\ref{eq:lcb}) for an appropriately selected tuning parameter $\beta_t$.
\end{claim}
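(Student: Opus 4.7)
The plan is to establish the claim by viewing the ensemble of $Q$-networks as an approximate Bayesian posterior for the linear weight $w_i$, then matching the resulting predictive variance to the LCB term $\phi(s_t,a_t)^\top \Lambda_i^{-1}\phi(s_t,a_t)$. Concretely, under the linear MDP assumption each ensemble member $Q_i^n(s,a)=(\hw_i^n)^\top \phi(s,a)$ is obtained by least-squares fitting of the Bellman target on $\hD_i$ starting from an independent random initialization $\tw_i^n$. I would interpret each initialization as a sample from a Gaussian prior $\cN(0,\sigma_0^2\bI_d)$ over $w_i$, so that the ensemble of fitted solutions forms a randomized prior / bootstrap approximation of the Bayesian posterior over $w_i$ given the dataset $\hD_i$.

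First I would write out the regularized least-squares solution explicitly: $\hw_i^n = \Lambda_i^{-1}\bigl(\sum_k \phi(s_t^k,a_t^k)\,y_t^{k,n}+\lambda \tw_i^n\bigr)$, where $y_t^{k,n}$ is the Bellman target used by the $n$-th ensemble member. Because only $\tw_i^n$ and (possibly bootstrapped) target noise differ across members, the ensemble-induced covariance of $\hw_i^n$ is of the form $\Lambda_i^{-1} M \Lambda_i^{-1}$ with $M$ a diagonal matrix whose leading term is $\lambda^2 \sigma_0^2 \bI_d$ plus a target-noise contribution $\sigma^2\sum_k \phi(s_t^k,a_t^k)\phi(s_t^k,a_t^k)^\top$. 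With $\lambda$ chosen compatibly with $\sigma_0^2$ and $\sigma^2$, this collapses to $\VV[\hw_i^n]\propto \Lambda_i^{-1}$, which is the standard Bayesian linear regression posterior covariance.

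Next I would push this through the linear parameterization: $\Gamma_i(s_t,a_t)^2=\VV[(\hw_i^n)^\top\phi(s_t,a_t)]=\phi(s_t,a_t)^\top\VV[\hw_i^n]\phi(s_t,a_t)\propto \phi(s_t,a_t)^\top \Lambda_i^{-1}\phi(s_t,a_t)$. Taking square roots and absorbing the proportionality constant (which depends on $\sigma,\sigma_0,\lambda$ and the ensemble size $N$) into the tuning parameter $\beta_t$ then yields $\beta_t\cdot\Gamma_i(s_t,a_t)\approx \Gamma^{\rm lcb}(s_t,a_t)$, establishing the claim up to the Monte-Carlo error from using $N$ ensemble samples. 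A short remark would then note that by Definition \ref{def1} and the fact that $\Gamma^{\rm lcb}$ is known to be a $\xi$-uncertainty quantifier in linear MDPs, the bootstrapped $\beta_t\cdot\Gamma_i$ inherits this property for appropriate $\beta_t$.

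The main obstacle is making the ``equivalence'' rigorous rather than purely heuristic: standard ensemble training does not literally implement posterior sampling, and the approximation quality depends on (i) the randomized-prior interpretation of independent initializations being valid for the nonlinear parameterization reduced to the linear one, and (ii) the Monte-Carlo error $O(1/\sqrt{N})$ in estimating the posterior variance from $N$ ensemble samples. I would handle (i) by explicitly restricting to the linear regression setting where randomized-prior-function results (e.g.\ in the style of Osband et al.) give an exact posterior sampling interpretation under Gaussian noise and prior, and handle (ii) by stating the claim as an estimator-level equivalence in expectation and folding the $N$-dependence into the calibration of $\beta_t$. A full concentration-style version would be left to the appendix alongside the formal $\xi$-uncertainty-quantifier verification.
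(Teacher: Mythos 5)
Your proposal is correct and lands on the same core calculation as the paper's Lemma~1 in \S\ref{sec:theore}: assume a Gaussian prior $w_i\sim\cN(0,\bI/\lambda)$ and unit Gaussian target noise, derive the posterior $\hw_i\,|\,\cD_i\sim\cN(\mu_i,\Lambda_i^{-1})$ by Bayes' rule, and conclude $\VV[\phi^\top\hw_i]=\phi^\top\Lambda_i^{-1}\phi$ so that the ensemble standard deviation matches $\Gamma^{\rm lcb}$ up to the tuning factor $\beta_t$. Where you go beyond the paper is in justifying \emph{why} the ensemble variance tracks this posterior variance: the paper simply asserts that independently initialized ensembles approximate the posterior (citing prior work) and then computes with the posterior directly, whereas you derive the ensemble-induced sandwich covariance $\Lambda_i^{-1}M\Lambda_i^{-1}$ from the randomized-prior least-squares solution and identify the compatibility condition ($\lambda^2\sigma_0^2=\sigma^2\lambda$, i.e.\ $\sigma_0^2=\sigma^2/\lambda$) under which it collapses to $\sigma^2\Lambda_i^{-1}$. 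That extra step is the honest content of the "equivalence," and making the $O(1/\sqrt{N})$ Monte-Carlo error explicit is also something the paper omits (it uses $N=5$ in practice and only gestures at this in the visualization section). One small slip: your matrix $M=\lambda^2\sigma_0^2\bI+\sigma^2\sum_k\phi_k\phi_k^\top$ is not diagonal in general — it is the sum of a scaled identity and the (non-diagonal) empirical feature second-moment matrix — but this does not affect the conclusion, since the collapse to $\propto\Lambda_i^{-1}$ only requires $M\propto\Lambda_i$, not diagonality.
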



The LCB in Eq.~\eqref{eq:lcb} can only be applied in linear cases. We refer to \S\ref{sec:theore} for a detailed explanation and proof. Intuitively, the ensemble $Q$-functions estimates a non-parametric $Q$-posterior \citep{boot-2016, boot-2018}. Correspondingly, the uncertainty quantifier $\Gamma(s_t,a_t)$ estimates the standard deviation of the $Q$-posterior, which scales with the LCB-penalty in linear MDPs. As an example, we show that under the tabular setting, $\Gamma^{\rm lcb}(s_t,a_t)$ is approximately proportional to the reciprocal pseudo-count of the corresponding state-action pair in the dataset. In offline RL, such uncertainty quantification measures how trustworthy the value estimations on state-action pairs are. A low LCB-penalty (or high pseudo-count) indicates that the corresponding state-action pair aligns with the support of offline data.

In the generic MDP settings, we estimate the uncertainty based on the non-parametric ensemble $Q$ networks. Let us consider a scenario where we share a dataset $\cD_j$ to the $i$-th task and obtain $\cD_{j\rightarrow i}$. Correspondingly, we solve the offline RL with the joint dataset $\hD_i=\cD_i\cup \cD_{j\rightarrow i}$. The parameter of the $Q$-function $Q_i$ learned from $\hD_i$ can be solved in a closed form following the least squares value iteration algorithm (LSVI), which iteratively minimizes the following loss function,
\begin{equation}
\label{eq:UTDS-share-lsvi-main}
\tw_{ij} = \min_{w\in \RR^d}
\sum_{k=1}^{|\hD_i|}\bigl(\phi(s^{k}_t, a^{k}_t)^\top w - r(s^{k}_t, a^{k}_t)- V_{t+1}(s^{k}_{t+1})\bigr)^2 + \sum\nolimits_{\cD^{\rm ood}_i \cup \cD^{\rm ood}_j}\bigl(\phi(s^{\rm ood}, a^{\rm ood})^\top w - y^{\rm ood}\bigr)^2,
\end{equation}
where the data in the first and second terms are based on the shared dataset and the OOD data, respectively. We use $\cD^{\rm ood}_i \cup \cD^{\rm ood}_j$ to represent the OOD data from the shared dataset for convenience. As discussed in \S\ref{sec:method}, we sample $s^{\rm ood}$ from $\hD_i$, and sample $a^{\rm ood}$ by following the learned policy $\pi_i$. The target $y^{\rm ood}$ defined in Eq.~\eqref{eq:ood-target} does not rely on the relabeled reward. The solution of $\tw_{i,j}$ is
\begin{equation}
\label{eq:w-UTDS-share}
\tw_{ij} = \widetilde\Lambda^{-1}_{ij}\left(\sum\nolimits^{|\hD_i|}_{k=1}\phi(s^{k}_t, a^{k}_t) y^{k}_t + \sum\nolimits_{\cD^{\rm ood}_i \cup \cD^{\rm ood}_j}\bigl(\phi(s^{\rm ood}, a^{\rm ood}) y^{\rm ood}\right),
\end{equation}
where the covariance matrix $\widetilde\Lambda_{ij}$ for the mixed data $\hD_i$ is
\begin{equation}
\begin{aligned}
\widetilde\Lambda_{ij} = (\Lambda_i + \Lambda_i^{\rm ood}) + (\Lambda_j + \Lambda_j^{\rm ood}) := \widetilde\Lambda_i + \widetilde\Lambda_j,
\end{aligned}
\label{eq:lambda-UTDS-share}
\end{equation}
where $\Lambda_i=\sum^{|\cD_i|}_{k=1}\phi(s^{k}_t, a^{k}_t)\phi(s^{k}_t, a^{k}_t)^\top$ is the covariance matrix of $\cD_i$, and $\Lambda_j$ is the covariance matrix of $\cD_{j\rightarrow i}$. Meanwhile, $\Lambda_i^{\rm ood}=\sum_{\cD^{\rm ood}_i}\phi(s^{\rm ood}, a^{\rm ood})\phi(s^{\rm ood}, a^{\rm ood})^\top$ is the covariance matrix of OOD data $\cD_i^{\rm ood}$, and $\Lambda_j^{\rm ood}$ is the covariance matrix of $\cD^{\rm ood}_{j}$. Based on the solution, the following theorem establishes the relationship between the uncertainty in single dataset and shared dataset.

\begin{theorem}\label{thm:uncertainty-d}
For a given state-action pair $(s,a)$, we denote the uncertainty for the single-task dataset $\cD_i$ and shared dataset $\hD_i=\cD_i\cup \cD_{j\rightarrow i}$ as $\Gamma_i(s,a;\cD_i)$ and $\Gamma_i(s,a;\hD_i)$, respectively. Then the following inequality holds as
\begin{equation}\label{eq:lambda-ij}
\Gamma^{\rm lcb}_i(s,a;\cD_i)\geq \Gamma^{\rm lcb}_i(s,a;\hD_i), {\:\:\rm and\:\:} \Gamma_i(s,a;\cD_i)\geq \Gamma_i(s,a;\hD_i),
\end{equation}
where signifies that the shared data reduce the ensemble uncertainty. 
\end{theorem}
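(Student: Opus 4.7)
The plan is to reduce both inequalities to a single monotonicity fact for Rayleigh quotients of covariance matrices. First I would treat the LCB inequality, which is an essentially deterministic linear-algebra statement. Recall from Eq.~\eqref{eq:lambda-UTDS-share} that the covariance matrix for the shared dataset decomposes as $\widetilde\Lambda_{ij}=\widetilde\Lambda_i+\widetilde\Lambda_j$, where $\widetilde\Lambda_i=\Lambda_i+\Lambda_i^{\rm ood}$ corresponds to the single-task data plus its OOD samples, and $\widetilde\Lambda_j=\Lambda_j+\Lambda_j^{\rm ood}$ corresponds to the relabeled data from task $j$. Since $\widetilde\Lambda_j$ is a sum of outer products $\phi\phi^\top$, it is positive semidefinite, so $\widetilde\Lambda_{ij}\succeq\widetilde\Lambda_i$ in the Loewner order. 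Both matrices are positive definite thanks to the $\lambda\cdot\bI$ regularization inherited by $\widetilde\Lambda_i$, hence the inversion-reversing property of the Loewner order yields $\widetilde\Lambda_{ij}^{-1}\preceq\widetilde\Lambda_i^{-1}$.

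Applying this to the quadratic form that defines the LCB penalty in Eq.~\eqref{eq:lcb} gives, for every $(s,a)$,
\begin{equation*}
\phi(s,a)^\top\widetilde\Lambda_{ij}^{-1}\phi(s,a)\;\leq\;\phi(s,a)^\top\widetilde\Lambda_i^{-1}\phi(s,a).
\end{equation*}
Taking square roots and multiplying by the common tuning constant $\beta_t$ (which is a function of dimensions and failure probability, not of the dataset) gives the first inequality $\Gamma^{\rm lcb}_i(s,a;\cD_i)\geq\Gamma^{\rm lcb}_i(s,a;\hD_i)$. A small bookkeeping step here is to check that the $\beta_t$ one uses on the single-dataset side remains a valid calibration on the shared-dataset side; this is immediate because the relabeled samples only enlarge the effective sample budget, so the confidence radius may be set to the same value without loss.

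For the second inequality, I would invoke Claim~\ref{lem:lcb-main}, which states that in the linear-MDP regime the ensemble-based uncertainty $\Gamma_i(s,a)$ is an estimator of $\Gamma^{\rm lcb}_i(s,a)$ with the same dataset-dependent covariance structure. Concretely, the closed-form LSVI solution in Eq.~\eqref{eq:w-UTDS-share} shows that ensemble members trained with independent initializations and independent bootstraps differ in their weights through a term whose posterior covariance is proportional to $\widetilde\Lambda_{ij}^{-1}$. Hence the variance of $\phi(s,a)^\top\tw_{ij}$ across ensemble members, which is exactly $\Gamma_i(s,a;\hD_i)^2$, is proportional to $\phi(s,a)^\top\widetilde\Lambda_{ij}^{-1}\phi(s,a)$, and likewise for $\cD_i$. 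The monotonicity derived in the first part therefore transfers verbatim, giving $\Gamma_i(s,a;\cD_i)\geq\Gamma_i(s,a;\hD_i)$.

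The main obstacle is the second step: making rigorous the claim that the ensemble standard deviation is, up to constants, the LCB quantity, because the ensemble is a non-parametric approximation to a Bayesian posterior rather than a closed-form object. My plan is to sidestep this by leaning entirely on Claim~\ref{lem:lcb-main} (proved earlier in the linear-MDP analysis), and to note in a short remark that once the two uncertainty notions are identified up to the same $\beta_t$-scaling, the PSD monotonicity argument for $\widetilde\Lambda_{ij}^{-1}\preceq\widetilde\Lambda_i^{-1}$ is all that the theorem truly relies on. The first inequality is purely algebraic and carries essentially no risk.
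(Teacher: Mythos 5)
Your proposal is correct and follows essentially the same route as the paper's proof: decompose $\widetilde\Lambda_{ij}=\widetilde\Lambda_i+\widetilde\Lambda_j$ with $\widetilde\Lambda_j\succeq 0$, deduce $\phi^\top\widetilde\Lambda_{ij}^{-1}\phi\leq\phi^\top\widetilde\Lambda_i^{-1}\phi$, and transfer the conclusion to the ensemble uncertainty via the Bayesian-linear-regression equivalence (Lemma~\ref{app:lemma-ucb}). The only cosmetic difference is that you establish the quadratic-form monotonicity through Loewner-order antitonicity of matrix inversion, whereas the paper argues via a generalized Rayleigh-quotient bound; the two are interchangeable here.
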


We refer to \S\ref{sec:theore} for a proof. Theorem~\ref{thm:uncertainty-d} shows that the uncertainty decreases with more shared data, which is also illustrated in Figure~\ref{fig:ucb-point}. If a $(s,a)$ pair scarcely occurs in the single-task dataset, the LCB penalty $\Gamma_i(s,a)$ will be high and the corresponding $Q_i(s,a)$ function is pessimistic, which makes the agent hardly choose this action. Nevertheless, such pessimism comes from the lack of knowledge of the environment, which does not indicate $a$ is actually a bad choice. In UTDS, the penalty $\Gamma_i(s,a)$ will gradually decrease with more shared $(s,a)$ (or similar) samples, which makes the value function becomes less pessimistic and extends the agent's knowledge of the environment. 

\subsection{Optimality Gap}

Recent theoretical analysis shows that appropriate uncertainty quantification leads to provable efficiency in offline RL \citep{xie2021bellman,PBRL-2022}. In particular, the pessimistic value iteration \citep{pevi-2021} with a general $\xi$-uncertainty quantifier as the penalty achieves provably efficient pessimism in offline RL. In linear MDPs, the LCB-penalty defined in Eq.~\eqref{eq:lcb} is known to be a $\xi$-uncertainty quantifier for the appropriately selected factor $\{\beta_t\}_{t\in[T]}$. In the following, we show that UTDS with the shared dataset $\hD_i$ also forms a valid $\xi$-uncertainty quantifier. We denote $\cT V_{t+1}(s_t,a_t)=r(s_t,a_t)+\gamma \mathbb{E}_{s_{t+1}\sim P_t(\cdot|s_t,a_t)}\big[V_{t+1}(s_{t+1},a_{t+1})\big]$ as the true Bellman target.
\begin{theorem}
\label{thm:UTDS-lcb-share}
Let $\widetilde\Lambda_{ij} \succeq \lambda \cdot \bI$, if we set the OOD target as $y^{\rm ood} = \cT V_{t+1}(s^{\rm ood}, a^{\rm ood})$ for the shared dataset $\hD_i=\cD_i\cup \cD_{j\rightarrow i}$, then it holds for $\beta_t=\cO\bigl(T\cdot \sqrt{d} \cdot \log(T/\xi)\bigr)$ that
\begin{equation}\nonumber
\Gamma^{\rm lcb}_i(s_t,a_t;\hD_i)=\beta_t\big[\phi(s_t,a_t)^\top \widetilde \Lambda_{ij}^{-1}\phi(s_t,a_t)\big]^{\nicefrac{1}{2}}
\end{equation}
forms a valid $\xi$-uncertainty quantifier, where $\widetilde\Lambda_{ij}$ is the covariance matrix given in Eq.~\eqref{eq:lambda-UTDS-share}.
\end{theorem}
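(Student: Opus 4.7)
The plan is to verify Definition~\ref{def1} directly, i.e., show that with probability at least $1-\xi$,
$|\widehat\cT V_{t+1}(s,a)-\cT V_{t+1}(s,a)|\le \Gamma^{\rm lcb}_i(s,a;\hD_i)$ for every $(s,a)\in\cS\times\cA$. In linear MDPs, the true Bellman target is linear in $\phi$, so there exists $w^*_t$ with $\cT V_{t+1}(s,a)=\phi(s,a)^\top w^*_t$, and by construction $\widehat\cT V_{t+1}(s,a)=\phi(s,a)^\top \tw_{ij}$ with $\tw_{ij}$ given by Eq.~\eqref{eq:w-UTDS-share}. The first step is to derive a clean closed-form expression for the residual $\tw_{ij}-w^*_t$ by inserting $\widetilde\Lambda_{ij}w^*_t$ inside the sum and regrouping.

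Using $\widetilde\Lambda_{ij}=\lambda\bI+\sum_{k=1}^{|\hD_i|}\phi_k\phi_k^\top+\sum_{\rm ood}\phi^{\rm ood}(\phi^{\rm ood})^\top$, the algebra yields
\begin{equation*}
\tw_{ij}-w^*_t=\widetilde\Lambda_{ij}^{-1}\Bigl(\sum_{k=1}^{|\hD_i|}\phi_k\,\eta_k+\sum_{\rm ood}\phi^{\rm ood}\bigl(y^{\rm ood}-\phi^{\rm ood\,\top}w^*_t\bigr)-\lambda w^*_t\Bigr),
\end{equation*}
where $\eta_k:=V_{t+1}(s^k_{t+1})-\mathbb{E}_{s'\sim P(\cdot|s^k_t,a^k_t)}[V_{t+1}(s')]$ is a zero-mean martingale difference. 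The critical observation, which the hypothesis $y^{\rm ood}=\cT V_{t+1}(s^{\rm ood},a^{\rm ood})$ is designed for, is that $y^{\rm ood}-\phi^{\rm ood\,\top}w^*_t=0$, so the OOD contribution vanishes. Applying $\phi(s,a)^\top$ and Cauchy--Schwarz in the $\widetilde\Lambda_{ij}^{-1}$-inner product then gives
\begin{equation*}
|\widehat\cT V_{t+1}(s,a)-\cT V_{t+1}(s,a)|\le\|\phi(s,a)\|_{\widetilde\Lambda_{ij}^{-1}}\Bigl(\bigl\|\textstyle\sum_k\phi_k\eta_k\bigr\|_{\widetilde\Lambda_{ij}^{-1}}+\lambda\|w^*_t\|_{\widetilde\Lambda_{ij}^{-1}}\Bigr).
\end{equation*}

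Next I would bound the two scalars in the parenthesis. For the regularization term, the assumption $\widetilde\Lambda_{ij}\succeq\lambda\bI$ gives $\lambda\|w^*_t\|_{\widetilde\Lambda_{ij}^{-1}}\le\sqrt{\lambda}\|w^*_t\|_2$, which is $\cO(\sqrt{\lambda\,d}\,T)$ under the standard linear MDP boundedness on $w^*_t$. For the stochastic term, since $\widetilde\Lambda_{ij}^{-1}\preceq(\lambda\bI+\sum_k\phi_k\phi_k^\top)^{-1}$, the norm is dominated by the self-normalized process over the in-distribution samples only. Invoking the self-normalized concentration inequality for vector-valued martingales of Abbasi-Yadkori et al.\ (2011), together with a uniform-covering argument over the class of value functions $V_{t+1}$ parameterized by $\tw$ (the standard device in \cite{lsvi-2020,pevi-2021}), bounds this term by $\cO(T\sqrt{d\log(T/\xi)})$ uniformly in $(s,a)$ with probability $1-\xi$. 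Combining the two bounds and taking $\beta_t=\cO(T\sqrt{d}\log(T/\xi))$ absorbs all constants and the log-covering number, yielding the claim.

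The main obstacle is the stochastic term. The difficulty is that the value function $V_{t+1}$ used in forming $\eta_k$ is itself data-dependent (it is built from $\tw_{t+1}$ that depends on $\hD_i$), so $\eta_k$ is not a martingale difference with respect to the naive filtration. Overcoming this requires the standard $\varepsilon$-net / covering argument: first establish a deterministic bound on $\|\tw\|$ (hence on the covering number of the class of induced $V$-functions), then prove the self-normalized bound uniformly over this covering class, and finally take a union bound across $t\in[T]$. The novelty relative to \cite{pevi-2021} is purely bookkeeping---we must verify that enlarging the Gram matrix from $\widetilde\Lambda_i$ to $\widetilde\Lambda_{ij}=\widetilde\Lambda_i+\widetilde\Lambda_j$ only shrinks the self-normalized term (which follows from the PSD ordering) and does not increase the covering radius, while the OOD contributions on both tasks drop out thanks to the chosen pseudo-target.
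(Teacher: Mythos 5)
Your proposal is correct and follows essentially the same route as the paper's proof: both express the Bellman error as $\phi(s,a)^\top(w_t-\tw_{ij})$ via the closed-form LSVI solution, observe that the OOD contribution vanishes under the choice $y^{\rm ood}=\cT V_{t+1}(s^{\rm ood},a^{\rm ood})$, and bound the remaining in-distribution residual by self-normalized concentration together with the covering argument over data-dependent value functions, arriving at $\beta_t\bigl[\phi(s,a)^\top\widetilde\Lambda_{ij}^{-1}\phi(s,a)\bigr]^{\nicefrac{1}{2}}$ with $\beta_t=\cO(T\sqrt{d}\log(T/\xi))$. The only cosmetic differences are that you treat $\hD_i$ as a single self-normalized process while the paper splits it into separate $\cD_i$ and $\cD_{j\rightarrow i}$ terms and pays a factor $2\max\{\beta_i,\beta_j\}$, and that you carry an explicit ridge term $\lambda\bI$ (hence a $\lambda w_t^*$ residual) where the paper instead drops the ridge and assumes $\widetilde\Lambda_{ij}\succeq\lambda\cdot\bI$ directly.
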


We refer to \S\ref{sec:theore} for a proof. Based on Theorem \ref{thm:UTDS-lcb-share}, we further characterize the optimality gap of UTDS based on the pessimistic value iteration as follows.

\begin{corollary}
\label{coro-UTDS-gap}
Under the same conditions as Theorem \ref{thm:UTDS-lcb-share}, for the uncertainty quantification $\Gamma_i(s,a;\cD_i)$ and $\Gamma_i(s,a;\hD_i)$ defined in $\cD_i$ and $\hD_i=\cD_i\cup \cD_{j\rightarrow i}$ respectively, we have
\begin{equation}\label{eq:sub-bound}
{\rm SubOpt} (\pi_i^*, \widetilde\pi_i) \leq \sum\nolimits_{t=1}^{T} \mathbb{E}_{\pi^*_i} \big[ \Gamma_i^{\rm lcb}(s_t,a_t;\hD_i) \big] 
\leq \sum\nolimits_{t=1}^{T} \mathbb{E}_{\pi^*_i} \big[ \Gamma_i^{\rm lcb}(s_t,a_t;\cD_i) \big],
\end{equation}
where $\widetilde\pi_i$ and $\pi^{*}_i$ are the learned policy and the optimal policy in $\hD_i$, respectively.
\end{corollary}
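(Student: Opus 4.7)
The plan is to decompose the two-sided inequality into two parts, each of which invokes a result already established in the excerpt. The first inequality ${\rm SubOpt}(\pi^*_i,\widetilde\pi_i) \leq \sum_{t=1}^T \EE_{\pi^*_i}[\Gamma^{\rm lcb}_i(s_t,a_t;\hD_i)]$ is a direct application of the pessimistic value iteration (PEVI) framework of \citep{pevi-2021} to the shared-dataset setting, whereas the second inequality is a pointwise consequence of Theorem~\ref{thm:uncertainty-d} under expectation with respect to the optimal policy $\pi_i^*$.

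For the first inequality, I would first invoke Theorem~\ref{thm:UTDS-lcb-share}, which certifies that $\Gamma^{\rm lcb}_i(\cdot,\cdot;\hD_i)$ is a valid $\xi$-uncertainty quantifier in the sense of Definition~\ref{def1} when the OOD target is $y^{\rm ood}=\cT V_{t+1}(s^{\rm ood},a^{\rm ood})$ and $\beta_t=\cO(T\sqrt{d}\log(T/\xi))$. Then I would appeal to the generic PEVI suboptimality decomposition: letting $\widetilde V_t$ denote the pessimistic value function computed via $\hT^{\rm UTDS}$ on $\hD_i$, a telescoping argument along the trajectory induced by $\pi^*_i$ (see Lemma 3.1 and Theorem 4.2 of \citep{pevi-2021}) yields
\begin{equation*}
V^{\pi^*_i}_1(s_1)-\widetilde V_1(s_1) \leq \sum_{t=1}^T \EE_{\pi^*_i}\big[(\cT V_{t+1}-\widehat \cT V_{t+1})(s_t,a_t)\big] \leq \sum_{t=1}^T \EE_{\pi^*_i}\big[\Gamma^{\rm lcb}_i(s_t,a_t;\hD_i)\big],
\end{equation*}
while pessimism guarantees $\widetilde V_1(s_1)\leq V^{\widetilde\pi_i}_1(s_1)$, so that the suboptimality ${\rm SubOpt}(\pi^*_i,\widetilde\pi_i)=V^{\pi^*_i}_1(s_1)-V^{\widetilde\pi_i}_1(s_1)$ inherits the same upper bound.

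For the second inequality, I would simply take the expectation of the pointwise bound $\Gamma^{\rm lcb}_i(s_t,a_t;\hD_i)\leq \Gamma^{\rm lcb}_i(s_t,a_t;\cD_i)$ established in Theorem~\ref{thm:uncertainty-d} with respect to the distribution of $(s_t,a_t)$ induced by $\pi^*_i$, and then sum over $t\in[T]$; since the inequality is pointwise on $\cS\times\cA$, monotonicity of expectation preserves the order, and the claim follows immediately. The main obstacle, and the only place where care is required, is ensuring that the PEVI decomposition still applies cleanly in the presence of the OOD pseudo-target in Eq.~\eqref{eq:ood-target}: when $y^{\rm ood}=\cT V_{t+1}(s^{\rm ood},a^{\rm ood})$ as assumed in Theorem~\ref{thm:UTDS-lcb-share}, the OOD regression term in Eq.~\eqref{eq:UTDS-share-lsvi-main} is unbiased for the true Bellman target, so the standard concentration argument for least-squares value iteration with covariance $\widetilde\Lambda_{ij}$ goes through without modification. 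Everything beyond this verification is bookkeeping, and no new technical machinery is required.
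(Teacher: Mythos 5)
Your proposal is correct and follows essentially the same route as the paper: the first inequality is obtained by certifying $\Gamma^{\rm lcb}_i(\cdot,\cdot;\hD_i)$ as a valid $\xi$-uncertainty quantifier via Theorem~\ref{thm:UTDS-lcb-share} and then invoking the PEVI suboptimality bound of \citep{pevi-2021}, while the second follows pointwise from Theorem~\ref{thm:uncertainty-d} under the expectation over $\pi^*_i$. Your additional check that the OOD regression term is unbiased when $y^{\rm ood}=\cT V_{t+1}(s^{\rm ood},a^{\rm ood})$ is exactly the step the paper handles by showing term (iii) vanishes in the proof of Theorem~\ref{thm:UTDS-lcb-share}, so nothing is missing.
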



The first inequality can be directly obtained from \citep{pevi-2021} since the covariance matrix $\widetilde{\Lambda}_{ij}$, the learned policy $\widetilde{\pi}_i$ and the optimal policy $\pi_i^*$ are all defined in the shared dataset $\widehat{D}_i$. The second upper bound follows from the fact that $\Gamma_i(s_t,a_t;\hD_i) \leq \Gamma_i(s_t,a_t;\cD_i)$ (Theorem \ref{thm:uncertainty-d}), namely, a larger dataset leads to tighter uncertainty. Corollary \ref{coro-UTDS-gap} shows 
that the optimality gap shrinks if the data coverage of the optimal policy $\pi^*$ is better, in the sense that the expected uncertainty under the optimal policy $\pi^*$ is smaller.

According to Corollary \ref{coro-UTDS-gap}, we remark that having more data does not guarantee a strict decrease in the optimality gap. As an example, let the $i$-th task and the $j$-th task be irrelevant in the sense that the shared dataset $\cD_{j\rightarrow i}$ does not contain any transitions induced by the optimal policy of the task $i$. Then, such data sharing does not help reduce the sub-optimality as the expected uncertainty with respect to the optimal policy $\pi^*_i$ in $\hD_i$ is the same as that in $\cD_i$. Nevertheless, data sharing does not decrease the learning outcome in such a case, which makes UTDS different from previous methods \citep{CDS-2021,UCDS-2022}. In particular, in policy constraint-based methods, sharing data across irrelevant tasks may exacerbate the distribution shift and hinder learning performance. 

\section{Related Work}

\textbf{Offline RL}~~Since the offline RL dataset often has limited coverage, the actions chosen by greedy policy could be OOD actions that are not contained in the dataset, and the corresponding value function typically suffers from a large extrapolation error. This problem is also known as the distribution shift problem \citep{bear-2019}. Previous model-free offline RL algorithms typically rely on policy constraints to restrict the learned policy from producing the OOD actions. In particular, previous works add behavior cloning (BC) loss in policy training \citep{bcq-2019,td3bc-2021,emaq-2021}, measure the divergence between the behavior policy and the learned policy \citep{bear-2019,brac-2019,fisher-2021}, apply advantage-weighted constraints to balance BC and advantages \citep{keep-2020,CRR-2020}, penalize the prediction-error of a variational auto-encoder \citep{anti-2022}, and learn latent actions (or primitives) from the offline data \citep{psla-2020,opal-2021}. Nevertheless, such methods may cause overly conservative value functions and are easily affected by the behavior policy \citep{awac-2020,offline-online-2021}. We remark that the OOD actions that align closely with the support of offline data could also be trustworthy. CQL \citep{cql-2020} directly minimizes the $Q$-values of OOD samples and thus casts an implicit policy constraint. Our method is related to CQL in the sense that both UTDS and CQL enforce conservatism in $Q$-learning. In contrast, we conduct explicit uncertainty quantification for OOD actions, while CQL penalizes the $Q$-values of all OOD samples equally. However, directly adopting such policy constraints to MTDS is challenging due to the possibly diverse behavioral policies from other tasks, yielding loose policy constraints. Another line of research adopts uncertainty-based pessimism to penalize state action pairs with large uncertainty \citep{UWAC-2020,mopo-2020,PBRL-2022,why-so-pess}. Our algorithm is inspired by the uncertainty-based pessimism methods in single-task offline RL.

In contrast with model-free algorithms, model-based algorithms learn the dynamics model directly with supervised learning. Similar to our work, MOPO \citep{mopo-2020} and MOReL \citep{morel-2020} incorporate ensembles of dynamics-models for uncertainty quantification, and penalize the value function through pessimistic updates. Other than the uncertainty quantification, previous model-based methods also attempt to constrain the learned policy through BC loss \citep{deploy-2020}, advantage-weighted prior \citep{mabe-2021}, CQL-style penalty \citep{combo-2021}, and Riemannian submanifold \citep{gelato-2021}. Decision Transformer \citep{transformer-2021} builds a transformer-style dynamic model and casts the problem of offline RL as conditional sequence modeling. However, such model-based methods may suffer from additional computation costs and may perform suboptimally in complex environments \citep{pets-2018,mbpo-2020}. In contrast, UTDS conducts model-free learning and is less affected by such challenges.

\textbf{Data Sharing in RL}~~Previous data sharing methods focus on solving online multi-task and multi-goal challenges. The intuition is using experiences from relevant tasks to solve the given task, where the relevant tasks are typically selected by human knowledge \citep{mt-opt} or hindsight inverse RL \citep{hindsight-1,hindsight-2}. Direct data sharing on offline RL fails due to distribution shift. CDS \citep{CDS-2021} designs the selection criteria and selects the data relevant to the main task. CDS-Zero \citep{UCDS-2022} removes the relabeling process by setting the shared reward to zero to reduce the bias in the reward. The CDS-based methods discard a large amount of data and can be computationally inefficient, since the selection changes with the update of value functions. Other methods also study data sharing in meta-RL \citep{meta-rl} and dynamics adaptation \citep{DARA-2022,ball2021augmented}, where the main and shared tasks have different dynamics. On the contrary, we study the data sharing among tasks from the same domain. 

\textbf{Uncertainty Quantification}~~Our method is related to the previous online RL exploration algorithms based on uncertainty quantification, including bootstrapped $Q$-networks \citep{ob2i-2021,db}, ensemble dynamics \citep{plan2explore-2020}, Bayesian NN \citep{ube-2018,bayesian-2018}, and distributional value functions \citep{dis-2019,ids-2019,bai2022monotonic,shi2023robust}. Uncertainty quantification is more challenging in offline RL than its online counterpart due to the limited coverage of offline data and the distribution shift of the learned policy. In model-based offline RL, MOPO \citep{mopo-2020} and MOReL \citep{morel-2020} incorporate ensemble dynamics-model for uncertainty quantification. BOPAH \citep{lee2020batch} \tcb{and MOOSE \citep{swazinna2021overcoming}} combine uncertainty penalization and behavior-policy constraints. In model-free offline RL, UWAC \citep{UWAC-2020} adopts dropout-based uncertainty \citep{dropout-2016} while relying on policy constraints in learning value functions. In contrast, UTDS does not require additional policy constraints. In addition, according to the study in image prediction with data shift \citep{trust-2019}, the bootstrapped uncertainty is more robust to data shift than the dropout-based approach \cite{rorl,cucb}. EDAC \citep{EDAC-2021} is a concurrent work that uses the ensemble $Q$-network. Specifically, EDAC calculates the gradients of each Q-function and diversifies such gradients to ensure sufficient penalization for OOD actions. In contrast, UTDS penalizes the OOD actions through direct OOD sampling and the associated uncertainty quantification.

\section{Experiments}

In experiments, we provide a suite of domains built on DeepMind Control Suite \citep{dmc-2018} and collect multi-task datasets to construct a benchmark for large-scale MTDS. We do not use the standard D4RL \citep{d4rl-2020} environment since D4RL has a collection of single task datasets.

\subsection{Tasks and Datasets}

The environment contains $3$ domains with $4$ tasks per domain, resulting in $12$ tasks in total. (\romannumeral 1)~\textbf{Walker} (\emph{Stand, Walk, Run, Flip}) tasks aim to control a biped in a 2D vertical plane. Different tasks learn different balancing and locomotion skills. (\romannumeral 2)~\textbf{Quadruped} (\emph{Jump, Roll-Fast, Walk, Run}) tasks aim to control a quadruped within a 3D space. Quadruped learns different moving and balancing skills in a 3D space, which is harder than Walker because of the high-dimensional state-action space. (\romannumeral 3)~\textbf{Jaco Arm} (\emph{Reach top left, Reach top right, Reach bottom left, Reach bottom right}) tasks aim to control a 6-DOF robotic arm with a three-finger gripper to move to different positions. 

For each task, we run the TD3 algorithm \citep{td3-2018} to collect five types of datasets (i.e., \emph{random, medium, medium-replay, replay,} and \emph{expert}) by following the standard settings in offline RL. Each dataset contains $10^3$ episodes of interactions. For MTDS, we share the dataset of each task with the other 3 tasks of the same domain. Meanwhile, since all tasks are associated with $5$ different types of data, the data sharing setup leads to a variety of combinations. To reduce the computation burden, we only share the \emph{replay} dataset for each other tasks, which ensures sufficient coverage of the shared dataset. As a result, we have $4*3*5=60$ two-task sharing settings for each domain.

\tcb{We follow the setting of D4RL to collect the multi-task datasets. (\romannumeral1) The `medium' dataset is generated by training a policy online using TD3, early-stopping the training, and collecting samples from this partially-trained policy. (\romannumeral2) The `random' datasets are generated by unrolling a randomly initialized policy. (\romannumeral3) The `medium-replay' dataset consists of recording all samples in the replay buffer observed during training until the policy reaches the `medium' level of performance. (\romannumeral4) The `expert' dataset is generated by an expert policy trained by a TD3 agent. (\romannumeral5) The `medium-expert' dataset is collected by mixing equal amounts of expert demonstrations and suboptimal data, generated via a partially trained policy or by unrolling a uniform-at-random policy.}

\paragraph{Baselines}
We compare the proposed UTDS with several baselines, including (\romannumeral 1)~\textbf{Direct Sharing} (i.e., CQL) that shares all relabeled data from $\cD_j$ and trains the policy through CQL \citep{cql-2020}; (\romannumeral 2)~\textbf{CDS} \citep{CDS-2021} that selects transitions with conservative $Q$-values within the highest $10\%$ quantiles of $\cD_{j\rightarrow i}$, where the conservative $Q$-value is learned by CQL; and (\romannumeral 3)~\textbf{CDS-Zero} \citep{UCDS-2022} that conservatively sets the shared reward to zero in CDS. Since the original work of CDS or CDS-Zero does not release the code, we implement the algorithms in our tasks and datasets based on their original papers. See \S\ref{sec:imple} for implementation details of UTDS and baselines.

\begin{figure}[t]
\centering
\includegraphics[width=1.0\columnwidth]{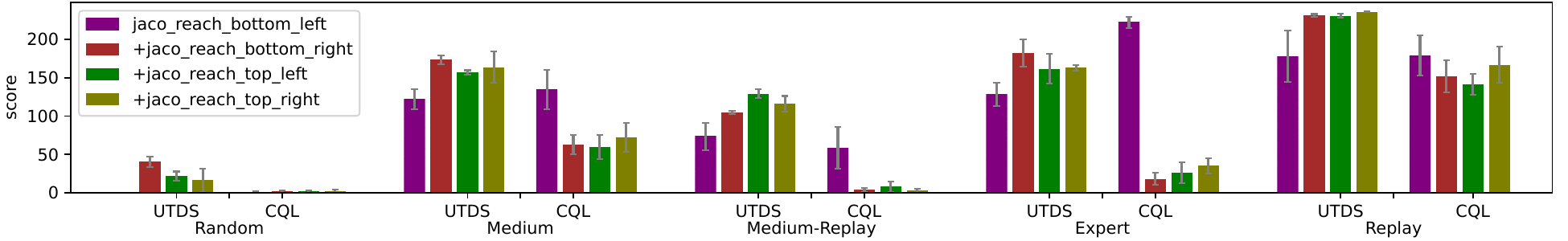}
\caption{The comparison between UTDS and Direct Sharing in \emph{Jaco Arm}. The main task is \emph{Reach-Bottom-Left}, and the shared data are replay datasets from the other three tasks (denoted as `+'). We show results of 5 dataset types of the main task. The shadow bars show the single-task scores.}
\label{fig:share-all-partial}
\end{figure}

\begin{figure*}[t]
\centering
\includegraphics[width=1.0\columnwidth]{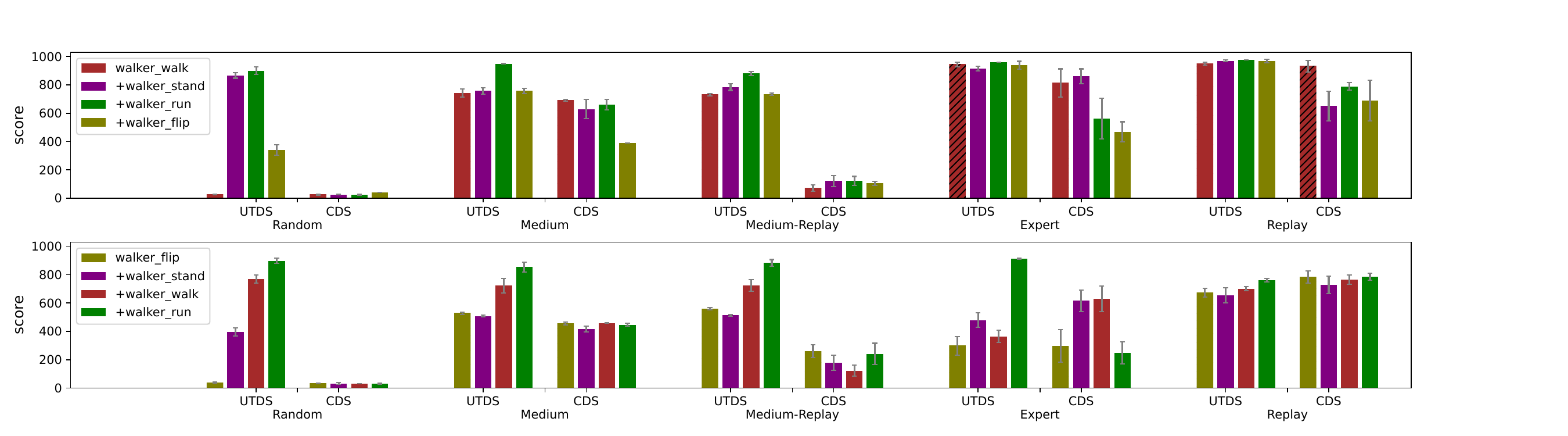}
\caption{The comparison between UTDS and CDS in \emph{Walker}. The main tasks are \emph{Walker-Walk} (Top) and \emph{Walker-Flip} (Bottom), respectively. UTDS generally improves the performance compared to the single-task scores (i.e., the shadow bar), especially for non-expert datasets.}
\label{fig:share-walk-partial}
\end{figure*}

\subsection{Experimental Results}

\paragraph{Directly Sharing in Jaco} Figure~\ref{fig:share-all-partial} shows the result of single-task training, Direct Sharing, and UTDS in \emph{Jaco Arm} domain. Since the different tasks in \emph{Jaco Arm} require moving the robot arm toward different directions, the behavior policies among the tasks differs significantly. The diversity in behavior policies exacerbate the distribution shift in policy constraints, which makes Direct Sharing performs poor in the single-task training. In contrast, the diversity in behavior policies benefit the performance of UTDS. As the result, we observe improved performance of UTDS with data sharing in all settings. We defer the complete results to \S\ref{sec:app-exp}.

\paragraph{Data Sharing in Walker}
We show the result of comparisons between UTDS and CDS in \emph{Walker-Walk} and \emph{Walker-Flip} in Figure~\ref{fig:share-walk-partial}. We find that UTDS generally improves the single-task performance via data sharing. The improvement is significant for non-expert datasets (i.e., \emph{random, medium}, and \emph{medium-replay}) generated by sub-optimal policies, which barely cover the optimal trajectories for the main task. As discussed in Corollary \ref{coro-UTDS-gap}, since the shared data potentially contains trajectories induced by the optimal policy $\pi^*_i$, UTDS achieves a tighter sub-optimality bound $\EE_{\pi^*_i} [\Gamma_i(s,a)]$ and attains better performance than CDS. 

For \emph{expert} and \emph{replay} datasets that cover the optimal trajectories sufficiently well, the improvement 
\begin{wrapfigure}{r}{0.45\textwidth}
\vspace{-15pt}
  \begin{center}
    \includegraphics[width=0.44\textwidth]{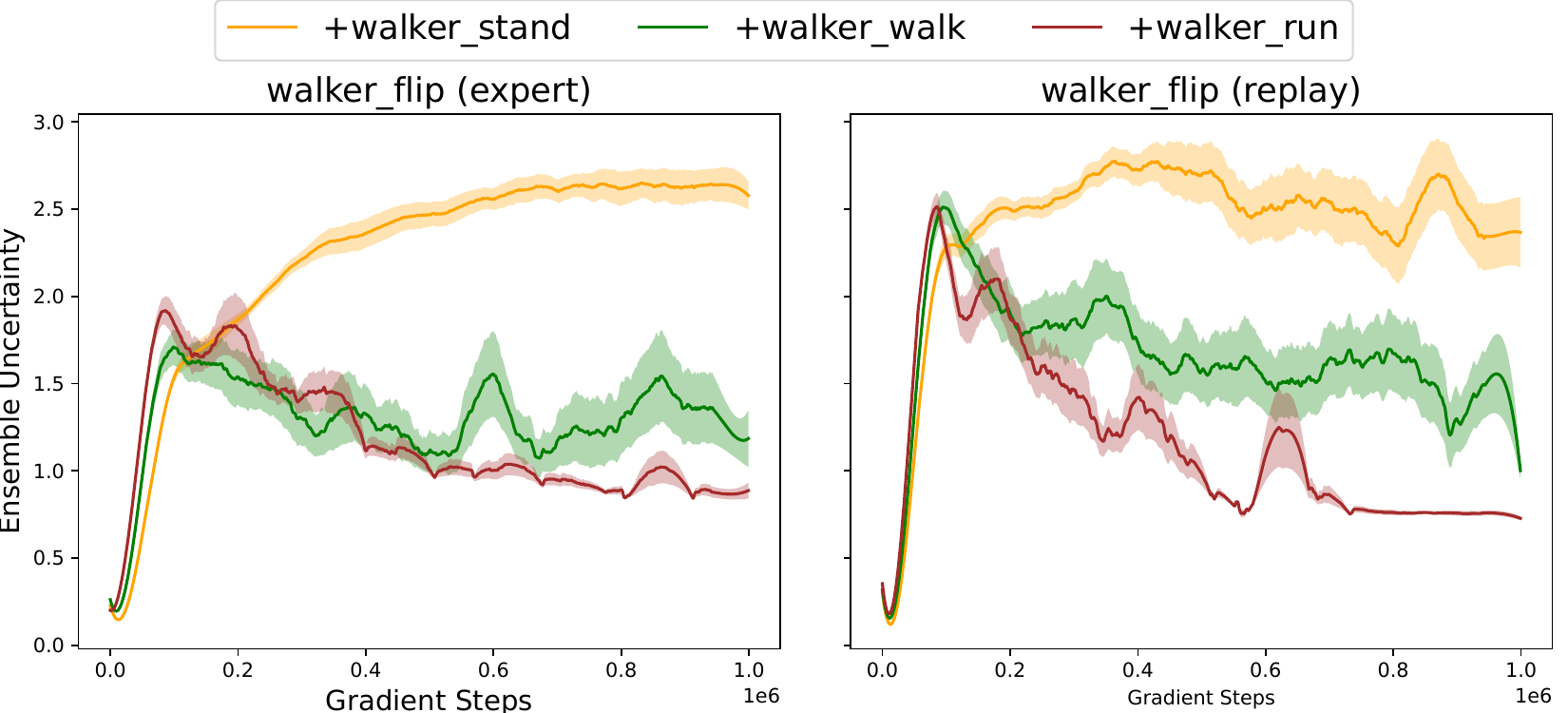}
  \end{center}
  \vspace{-15pt}
  \caption{The illustration of ensemble uncertainty of UTDS calculated on training batches. The main task is \emph{Walker Flip} with \emph{expert} (left) and \emph{replay} (right) datasets.}
  \vspace{-10pt}
  \label{fig:ucb_walker}
\end{wrapfigure}
of data sharing with UTDS is marginal for most cases since it is harder for UTDS to reduce the expected uncertainty $\EE_{\pi^*_i} [\Gamma_i(s,a)]$ with the shared data. The only exception is \emph{Walker-Flip} (\emph{expert} and \emph{replay}) tasks, where sharing the \emph{Walker-Run} dataset improves the performance of UTDS significantly. Intuitively, since the two tasks are closely related, experiences from \emph{Walker-Run} can potentially help the agent in learning complex locomotion skills and improve the learning of \emph{Walker-Flip}. Figure \ref{fig:ucb_walker} shows that data sharing with \emph{Walker-Run} leads to less uncertainty compared to other shared tasks, which results in tighter sub-optimality bound and better performance. We defer the complete results and the comparison of CDS-Zero in \S\ref{sec:app-exp}. We find CDS-Zero performs very similarly to CDS. 

\begin{figure*}[t]
\centering
\includegraphics[width=1.0\columnwidth]{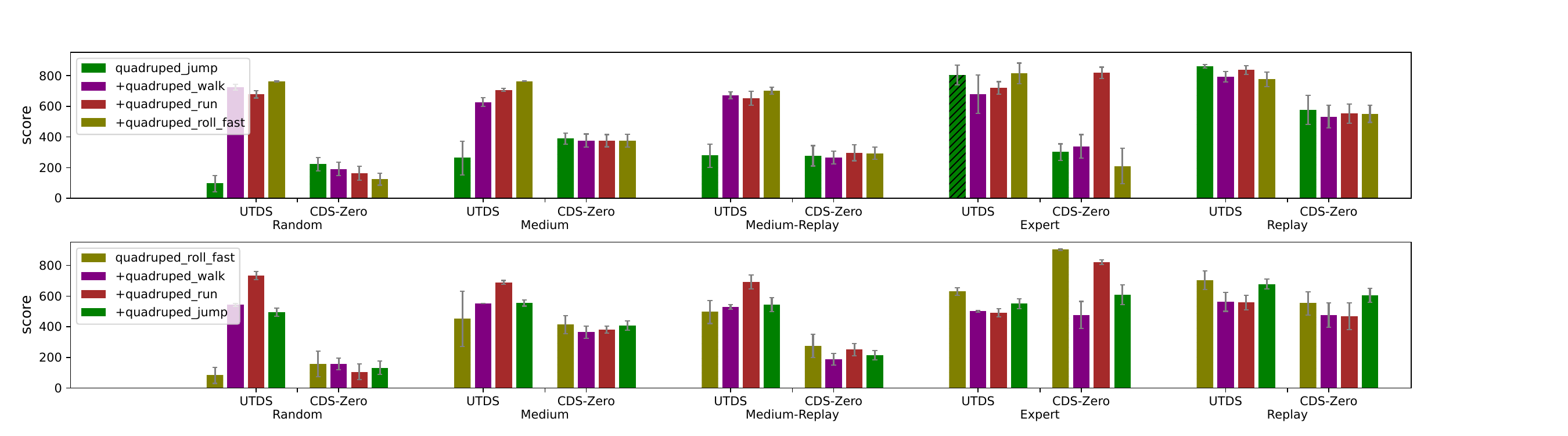}
\vspace{-2em}
\caption{The comparison between the proposed UTDS and CDS-Zero in \emph{Quadruped}. The main tasks are \emph{Quadruped-Jump} (Top) and \emph{Quadruped-Roll-Fast} (Bottom), respectively.}

\label{fig:share-quad-partial}
\end{figure*}

\paragraph{Data Sharing in Quadruped} 
We show the comparison in \emph{Quadruped Jump} and \emph{Quadruped Roll-Fast} in Figure~\ref{fig:share-quad-partial}. We find UTDS consistently improves the performance compared to single-task training in \emph{random, medium}, and \emph{medium-replay} datasets. In contrast, CDS-Zero performs poorly in most sharing tasks due to the significant distribution shift induced by the difference among tasks. 

Nevertheless, we find that the performance of UTDS is slightly decreased when the main task is \emph{expert} or \emph{replay} comparing with the single-task training. We hypothesize that such a phenomenon is also caused by inaccurate feature representation in uncertainty estimation. In particular, our theory requires that the feature $\phi(s,a)$ is accurate for uncertainty quantification, which is not the case in our implementation since we use a deep neural network to learn the representation. Since the \emph{Quadruped} domain has high-dimensional states, the representation learning in the \emph{Quadruped} domain is more challenging. We leave this for future research. The complete results are given in \S\ref{sec:app-exp}. One possible remedy is combining UTDS with representation learning methods such as contrastive learning \citep{ATC-2021} and bootstrapping \citep{BYOL-2020}, which we leave as future research directions. The complete results are given in \S\ref{sec:app-exp}.

\subsection{Visualization of the Uncertainty}

We visualize the change of uncertainty with data sharing in Walker and Quadruped domains, as shown in Fig.~\ref{fig:app-visu-walker} and Fig.~\ref{fig:app-visu-quad}, respectively. We use PCA to project the state-action pairs to two-dimensional space. In both figures, the white dots denote data samples from the main task, and the orange dots denote data sampled from the shared task. We use the ensemble $Q$-networks trained by UTDS on the main task and shared dataset to illustrate the change of uncertainty in data sharing. According to the result, we find (\romannumeral1) in single-task results, the shared data can be considered as OOD data and have large uncertainty; (\romannumeral2) in shared-task results, the uncertainty of the main task data slightly increases with data sharing, and the uncertainties of the shared data are significantly decreased. According to our analysis, the reduced uncertainty makes the value function less pessimistic and also shrinks the sub-optimality bound, thus extending the agent's knowledge of the environment. 

We remark that although the uncertainties quantifier $\Gamma^{\rm lcb}(s,a)=\left[\phi(s,a)^\top\Lambda^{-1}\phi(s,a)\right]^{\nicefrac{1}{2}}$ for any $(s,a)$ pair is guaranteed to reduce with data sharing in linear and tabular MDPs, we find it is somewhat defective. According to the visualization, the uncertainties for some data points from the main task slightly increase. Such effect occurs for several reasons. (\romannumeral1) The linear and tabular MDP settings assume that $\phi(s,a)$ is known and fixed in learning. In contrast, for with high-dimensional problem, $\phi(s,a)$ is randomly initialized by a neural network and is often learned with the value function, which makes the uncertainty estimation inaccurate. (\romannumeral2) We use ensemble $Q$-networks to estimate the uncertainty and prove that the LCB-penalty and ensemble uncertainty are equivalent under mild conditions. In practice, the ensemble uncertainty is accurate with infinite networks, while it is unachievable and we only use 5 ensembles in our experiments. The empirical result shows 5 ensembles are sufficient to provide strong performance. 

\begin{figure*}[h!]
\centering
\includegraphics[width=0.8\columnwidth]{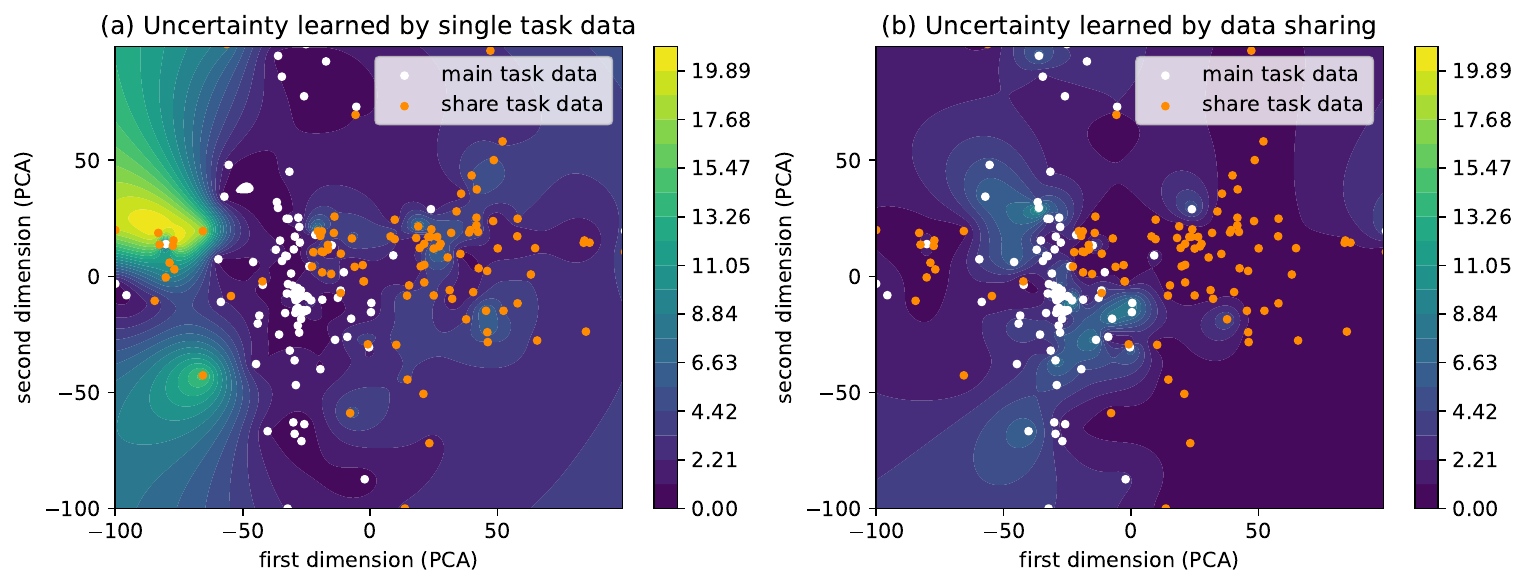}
\caption{Visualization of the uncertainty in data sharing for Walker domain. We share data from Walker-Run (replay) to Walker-Flip (medium-replay) task. (a) The ensemble $Q$-networks are trained in (a) single-task data and (b) shared dataset. We evaluate the uncertainty for both the main task data and the shared data.}
\label{fig:app-visu-walker}
\end{figure*}

\begin{figure*}[h!]
\centering
\includegraphics[width=0.8\columnwidth]{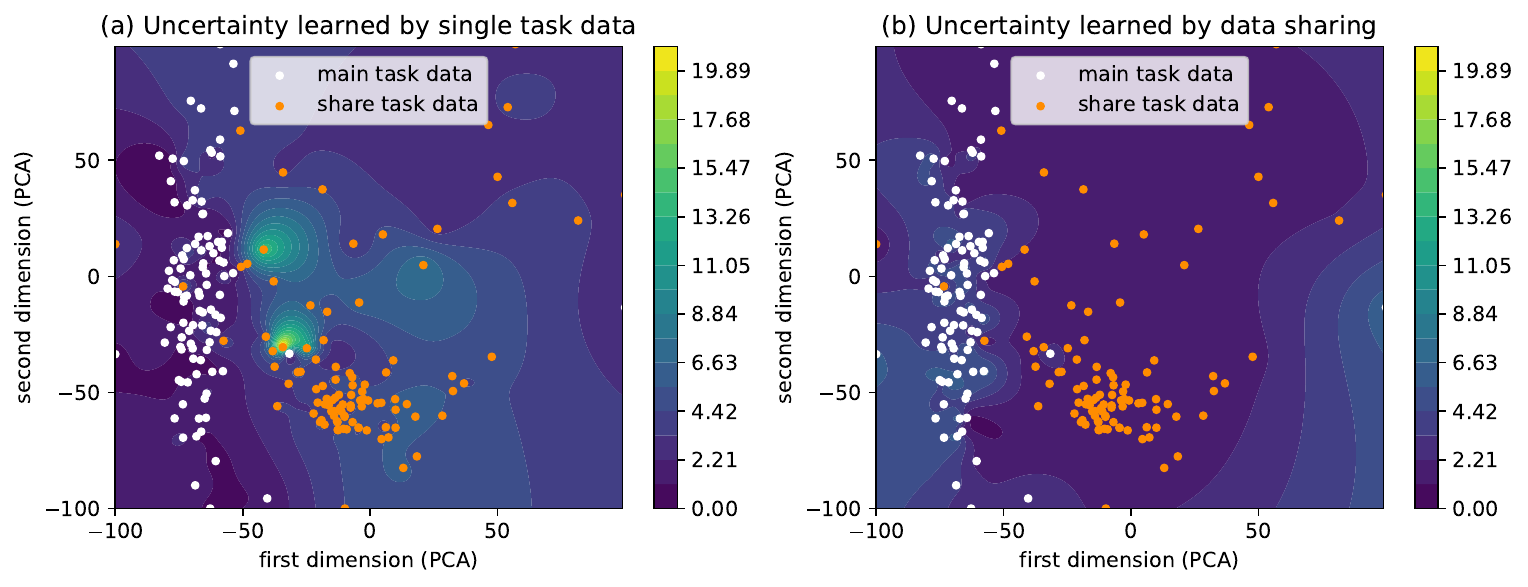}
\caption{Visualization of the uncertainty in data sharing for Quadruped domain. We share data from Quadruped Roll-Fast (replay) to Quadruped Jump (medium) task. The ensemble $Q$-networks are trained in (a) single-task data and (b) shared dataset. We find the uncertainties in for many areas are largely reduced with data sharing.}
\label{fig:app-visu-quad}
\end{figure*}

\section{Conclusion}

In this paper, we propose UTDS algorithm for multi-task offline RL to resolve the distribution shift in data sharing. We apply ensemble $Q$-networks for uncertainty quantification and obtain a pessimistic value function for state-action pairs with low data coverage. Our theoretical result shows that the suboptimality gap of UTDS is related to the expected uncertainty of the shared dataset. Experiments on the proposed benchmark show that UTDS outperforms other data-sharing algorithms in various scenarios. We remark that although UTDS may be slightly weaker in expert or replay dataset, it is of less concern in practical applications as the single-task RL algorithms with uncertainty penalty or policy constraints are sufficient to establish strong performances. 

\section*{Acknowledgments}

This work is supported by the National Natural Science Foundation of China (Grant No.62306242), the National Key R\&D Program of China (Grant No.2022ZD0160100), and Shanghai Artificial Intelligence Laboratory.

\bibliographystyle{elsarticle-num} 
\bibliography{iclr2023_conference}

\begin{thebibliography}{10}
\expandafter\ifx\csname url\endcsname\relax
  \def\url#1{\texttt{#1}}\fi
\expandafter\ifx\csname urlprefix\endcsname\relax\def\urlprefix{URL }\fi
\expandafter\ifx\csname href\endcsname\relax
  \def\href#1#2{#2} \def\path#1{#1}\fi

\bibitem{sutton-18}
R.~S. Sutton, A.~G. Barto, Reinforcement learning: An introduction, MIT press, 2018.

\bibitem{DQN-2015}
V.~Mnih, K.~Kavukcuoglu, D.~Silver, A.~A. Rusu, J.~Veness, M.~G. Bellemare, A.~Graves, M.~A. Riedmiller, A.~Fidjeland, G.~Ostrovski, S.~Petersen, C.~Beattie, A.~Sadik, I.~Antonoglou, H.~King, D.~Kumaran, D.~Wierstra, S.~Legg, D.~Hassabis, Human-level control through deep reinforcement learning, Nature 518 (2015) 529--533.

\bibitem{AlphaGo-2018}
D.~Silver, T.~Hubert, J.~Schrittwieser, I.~Antonoglou, M.~Lai, A.~Guez, M.~Lanctot, L.~Sifre, D.~Kumaran, T.~Graepel, et~al., A general reinforcement learning algorithm that masters chess, shogi, and go through self-play, Science 362 (2018) 1140--1144.

\bibitem{AlphaStar-2019}
O.~Vinyals, I.~Babuschkin, W.~M. Czarnecki, M.~Mathieu, A.~Dudzik, J.~Chung, D.~H. Choi, R.~Powell, T.~Ewalds, P.~Georgiev, et~al., Grandmaster level in starcraft ii using multi-agent reinforcement learning, Nature (2019) 1--5.

\bibitem{navigate-18}
P.~Mirowski, M.~Grimes, M.~Malinowski, K.~M. Hermann, K.~Anderson, D.~Teplyashin, K.~Simonyan, A.~Zisserman, R.~Hadsell, et~al., Learning to navigate in cities without a map, Advances in Neural Information Processing Systems 31 (2018) 2419--2430.

\bibitem{healthcare}
C.~Yu, J.~Liu, S.~Nemati, Reinforcement learning in healthcare: A survey, arXiv preprint arXiv:1908.08796 (2019).

\bibitem{levine2020offline}
S.~Levine, A.~Kumar, G.~Tucker, J.~Fu, Offline reinforcement learning: Tutorial, review, and perspectives on open problems, arXiv preprint arXiv:2005.01643 (2020).

\bibitem{lange2012batch}
S.~Lange, T.~Gabel, M.~Riedmiller, Batch reinforcement learning, in: Reinforcement learning: State-of-the-art, Springer, 2012, pp. 45--73.

\bibitem{bcq-2019}
S.~Fujimoto, D.~Meger, D.~Precup, Off-policy deep reinforcement learning without exploration, in: International Conference on Machine Learning, PMLR, 2019, pp. 2052--2062.

\bibitem{bear-2019}
A.~Kumar, J.~Fu, M.~Soh, G.~Tucker, S.~Levine, Stabilizing off-policy q-learning via bootstrapping error reduction, in: Advances in Neural Information Processing Systems, Vol.~32, 2019, pp. 11784--11794.

\bibitem{pevi-2021}
Y.~Jin, Z.~Yang, Z.~Wang, Is pessimism provably efficient for offline rl?, in: International Conference on Machine Learning, PMLR, 2021, pp. 5084--5096.

\bibitem{xie2021policy}
T.~Xie, N.~Jiang, H.~Wang, C.~Xiong, Y.~Bai, Policy finetuning: Bridging sample-efficient offline and online reinforcement learning, arXiv preprint arXiv:2106.04895 (2021).

\bibitem{CDS-2021}
T.~Yu, A.~Kumar, Y.~Chebotar, K.~Hausman, S.~Levine, C.~Finn, Conservative data sharing for multi-task offline reinforcement learning, Advances in Neural Information Processing Systems 34 (2021) 11501--11516.

\bibitem{boot-2016}
I.~Osband, C.~Blundell, A.~Pritzel, B.~Van~Roy, Deep exploration via bootstrapped dqn, in: Advances in Neural Information Processing Systems, Vol.~29, 2016, pp. 4026--4034.

\bibitem{PBRL-2022}
C.~Bai, L.~Wang, Z.~Yang, Z.-H. Deng, A.~Garg, P.~Liu, Z.~Wang, Pessimistic bootstrapping for uncertainty-driven offline reinforcement learning, in: International Conference on Learning Representations, 2022.

\bibitem{dmc-2018}
Y.~Tassa, Y.~Doron, A.~Muldal, T.~Erez, Y.~Li, D.~d.~L. Casas, D.~Budden, A.~Abdolmaleki, J.~Merel, A.~Lefrancq, et~al., Deepmind control suite, arXiv preprint arXiv:1801.00690 (2018).

\bibitem{bayesian-2020}
A.~G. Wilson, P.~Izmailov, Bayesian deep learning and a probabilistic perspective of generalization, Advances in Neural Information Processing Systems 33 (2020) 4697--4708.

\bibitem{bayesian-2019}
P.~Chaudhari, A.~Choromanska, S.~Soatto, Y.~LeCun, C.~Baldassi, C.~Borgs, J.~Chayes, L.~Sagun, R.~Zecchina, Entropy-sgd: Biasing gradient descent into wide valleys, Journal of Statistical Mechanics: Theory and Experiment 2019~(12) (2019) 124018.

\bibitem{d2021repulsive}
F.~D'Angelo, V.~Fortuin, Repulsive deep ensembles are bayesian, Advances in Neural Information Processing Systems 34 (2021) 3451--3465.

\bibitem{trust-2019}
Y.~Ovadia, E.~Fertig, J.~Ren, Z.~Nado, D.~Sculley, S.~Nowozin, J.~Dillon, B.~Lakshminarayanan, J.~Snoek, Can you trust your model's uncertainty? evaluating predictive uncertainty under dataset shift, Advances in Neural Information Processing Systems 32 (2019) 13991--14002.

\bibitem{SGD-2022}
Y.~Jiang, V.~Nagarajan, C.~Baek, J.~Z. Kolter, Assessing generalization of {SGD} via disagreement, in: International Conference on Learning Representations, 2022.

\bibitem{EDAC-2021}
G.~An, S.~Moon, J.-H. Kim, H.~O. Song, Uncertainty-based offline reinforcement learning with diversified q-ensemble, in: Advances in Neural Information Processing Systems, 2021.

\bibitem{lsvi-2020}
C.~Jin, Z.~Yang, Z.~Wang, M.~I. Jordan, Provably efficient reinforcement learning with linear function approximation, in: Conference on Learning Theory, PMLR, 2020, pp. 2137--2143.

\bibitem{jaksch2010near}
T.~Jaksch, R.~Ortner, P.~Auer, Near-optimal regret bounds for reinforcement learning., Journal of Machine Learning Research (2010).

\bibitem{azar2017minimax}
M.~G. Azar, I.~Osband, R.~Munos, Minimax regret bounds for reinforcement learning, in: International Conference on Machine Learning, 2017.

\bibitem{wang2020reward}
R.~Wang, S.~S. Du, L.~F. Yang, R.~Salakhutdinov, On reward-free reinforcement learning with linear function approximation, in: Advances in Neural Information Processing Systems, 2020.

\bibitem{xie2021bellman}
T.~Xie, C.-A. Cheng, N.~Jiang, P.~Mineiro, A.~Agarwal, Bellman-consistent pessimism for offline reinforcement learning, in: Advances in Neural Information Processing Systems, 2021.

\bibitem{bandit-2011}
Y.~Abbasi-Yadkori, D.~P{\'a}l, C.~Szepesv{\'a}ri, Improved algorithms for linear stochastic bandits, in: Advances in Neural Information Processing Systems, Vol.~24, 2011, pp. 2312--2320.

\bibitem{boot-2018}
I.~Osband, J.~Aslanides, A.~Cassirer, Randomized prior functions for deep reinforcement learning, Advances in Neural Information Processing Systems 31 (2018).

\bibitem{UCDS-2022}
T.~Yu, A.~Kumar, Y.~Chebotar, K.~Hausman, C.~Finn, S.~Levine, How to leverage unlabeled data in offline reinforcement learning, in: International Conference on Machine Learning, PMLR, 2022.

\bibitem{td3bc-2021}
S.~Fujimoto, S.~S. Gu, A minimalist approach to offline reinforcement learning, in: Advances in Neural Information Processing Systems, 2021.

\bibitem{emaq-2021}
S.~K.~S. Ghasemipour, D.~Schuurmans, S.~S. Gu, Emaq: Expected-max q-learning operator for simple yet effective offline and online rl, in: International Conference on Machine Learning, PMLR, 2021, pp. 3682--3691.

\bibitem{brac-2019}
Y.~Wu, G.~Tucker, O.~Nachum, Behavior regularized offline reinforcement learning, arXiv preprint arXiv:1911.11361 (2019).

\bibitem{fisher-2021}
I.~Kostrikov, R.~Fergus, J.~Tompson, O.~Nachum, Offline reinforcement learning with fisher divergence critic regularization, in: International Conference on Machine Learning, PMLR, 2021, pp. 5774--5783.

\bibitem{keep-2020}
N.~Y. Siegel, J.~T. Springenberg, F.~Berkenkamp, A.~Abdolmaleki, M.~Neunert, T.~Lampe, R.~Hafner, N.~Heess, M.~Riedmiller, Keep doing what worked: Behavioral modelling priors for offline reinforcement learning, in: International Conference on Learning Representations, 2020.

\bibitem{CRR-2020}
Z.~Wang, A.~Novikov, K.~Zolna, J.~S. Merel, J.~T. Springenberg, S.~E. Reed, B.~Shahriari, N.~Siegel, C.~Gulcehre, N.~Heess, et~al., Critic regularized regression, Advances in Neural Information Processing Systems 33 (2020).

\bibitem{anti-2022}
S.~Rezaeifar, R.~Dadashi, N.~Vieillard, L.~Hussenot, O.~Bachem, O.~Pietquin, M.~Geist, Offline reinforcement learning as anti-exploration, in: AAAI Conference on Artificial Intelligence, Vol.~36, 2022, pp. 8106--8114.

\bibitem{psla-2020}
W.~Zhou, S.~Bajracharya, D.~Held, Plas: Latent action space for offline reinforcement learning, in: Conference on Robot Learning, 2020.

\bibitem{opal-2021}
A.~Ajay, A.~Kumar, P.~Agrawal, S.~Levine, O.~Nachum, Opal: Offline primitive discovery for accelerating offline reinforcement learning, in: International Conference on Learning Representations, 2021.

\bibitem{awac-2020}
A.~Nair, M.~Dalal, A.~Gupta, S.~Levine, Accelerating online reinforcement learning with offline datasets, arXiv preprint arXiv:2006.09359 (2020).

\bibitem{offline-online-2021}
S.~Lee, Y.~Seo, K.~Lee, P.~Abbeel, J.~Shin, Offline-to-online reinforcement learning via balanced replay and pessimistic q-ensemble, in: Annual Conference on Robot Learning, 2021.

\bibitem{cql-2020}
A.~Kumar, A.~Zhou, G.~Tucker, S.~Levine, Conservative q-learning for offline reinforcement learning, in: H.~Larochelle, M.~Ranzato, R.~Hadsell, M.~F. Balcan, H.~Lin (Eds.), Advances in Neural Information Processing Systems, Vol.~33, 2020, pp. 1179--1191.

\bibitem{UWAC-2020}
Y.~Wu, S.~Zhai, N.~Srivastava, J.~M. Susskind, J.~Zhang, R.~Salakhutdinov, H.~Goh, Uncertainty weighted actor-critic for offline reinforcement learning, in: International Conference on Machine Learning, Vol. 139, 2021, pp. 11319--11328.

\bibitem{mopo-2020}
T.~Yu, G.~Thomas, L.~Yu, S.~Ermon, J.~Y. Zou, S.~Levine, C.~Finn, T.~Ma, Mopo: Model-based offline policy optimization, in: Advances in Neural Information Processing Systems, Vol.~33, 2020, pp. 14129--14142.

\bibitem{why-so-pess}
S.~K.~S. Ghasemipour, S.~S. Gu, O.~Nachum, Why so pessimistic? estimating uncertainties for offline rl through ensembles, and why their independence matters, arXiv preprint arXiv:2205.13703 (2022).

\bibitem{morel-2020}
R.~Kidambi, A.~Rajeswaran, P.~Netrapalli, T.~Joachims, Morel: Model-based offline reinforcement learning, in: Advances in Neural Information Processing Systems, Vol.~33, 2020, pp. 21810--21823.

\bibitem{deploy-2020}
T.~Matsushima, H.~Furuta, Y.~Matsuo, O.~Nachum, S.~Gu, Deployment-efficient reinforcement learning via model-based offline optimization, in: International Conference on Learning Representations, 2020.

\bibitem{mabe-2021}
C.~Cang, A.~Rajeswaran, P.~Abbeel, M.~Laskin, Behavioral priors and dynamics models: Improving performance and domain transfer in offline rl, arXiv preprint arXiv:2106.09119 (2021).

\bibitem{combo-2021}
T.~Yu, A.~Kumar, R.~Rafailov, A.~Rajeswaran, S.~Levine, C.~Finn, Combo: Conservative offline model-based policy optimization, in: Advances in Neural Information Processing Systems, 2021.

\bibitem{gelato-2021}
G.~Tennenholtz, N.~Baram, S.~Mannor, {GELATO:} geometrically enriched latent model for offline reinforcement learning, CoRR abs/2102.11327 (2021).
\newblock \href {http://arxiv.org/abs/2102.11327} {\path{arXiv:2102.11327}}.

\bibitem{transformer-2021}
L.~Chen, K.~Lu, A.~Rajeswaran, K.~Lee, A.~Grover, M.~Laskin, P.~Abbeel, A.~Srinivas, I.~Mordatch, Decision transformer: Reinforcement learning via sequence modeling, in: Advances in Neural Information Processing Systems, 2021.

\bibitem{pets-2018}
K.~Chua, R.~Calandra, R.~McAllister, S.~Levine, Deep reinforcement learning in a handful of trials using probabilistic dynamics models, Advances in Neural Information Processing Systems 31 (2018).

\bibitem{mbpo-2020}
M.~Janner, J.~Fu, M.~Zhang, S.~Levine, When to trust your model: Model-based policy optimization, Advances in Neural Information Processing Systems 32 (2019) 12519--12530.

\bibitem{mt-opt}
D.~Kalashnikov, J.~Varley, Y.~Chebotar, B.~Swanson, R.~Jonschkowski, C.~Finn, S.~Levine, K.~Hausman, Mt-opt: Continuous multi-task robotic reinforcement learning at scale, arXiv preprint arXiv:2104.08212 (2021).

\bibitem{hindsight-1}
B.~Eysenbach, X.~Geng, S.~Levine, R.~R. Salakhutdinov, Rewriting history with inverse rl: Hindsight inference for policy improvement, Advances in Neural Information Processing Systems 33 (2020) 14783--14795.

\bibitem{hindsight-2}
A.~Li, L.~Pinto, P.~Abbeel, Generalized hindsight for reinforcement learning, Advances in Neural Information Processing Systems 33 (2020) 7754--7767.

\bibitem{meta-rl}
E.~Mitchell, R.~Rafailov, X.~B. Peng, S.~Levine, C.~Finn, Offline meta-reinforcement learning with advantage weighting, in: International Conference on Machine Learning, PMLR, 2021, pp. 7780--7791.

\bibitem{DARA-2022}
J.~Liu, Z.~Hongyin, D.~Wang, Dara: Dynamics-aware reward augmentation in offline reinforcement learning, in: International Conference on Learning Representations, 2021.

\bibitem{ball2021augmented}
P.~J. Ball, C.~Lu, J.~Parker-Holder, S.~Roberts, Augmented world models facilitate zero-shot dynamics generalization from a single offline environment, in: International Conference on Machine Learning, PMLR, 2021, pp. 619--629.

\bibitem{ob2i-2021}
C.~Bai, L.~Wang, L.~Han, J.~Hao, A.~Garg, P.~Liu, Z.~Wang, Principled exploration via optimistic bootstrapping and backward induction, in: International Conference on Machine Learning, PMLR, 2021, pp. 577--587.

\bibitem{db}
C.~Bai, L.~Wang, L.~Han, A.~Garg, J.~Hao, P.~Liu, Z.~Wang, Dynamic bottleneck for robust self-supervised exploration, Advances in Neural Information Processing Systems 34 (2021) 17007--17020.

\bibitem{plan2explore-2020}
R.~Sekar, O.~Rybkin, K.~Daniilidis, P.~Abbeel, D.~Hafner, D.~Pathak, Planning to explore via self-supervised world models, in: International Conference on Machine Learning, PMLR, 2020, pp. 8583--8592.

\bibitem{ube-2018}
B.~O’Donoghue, I.~Osband, R.~Munos, V.~Mnih, The uncertainty bellman equation and exploration, in: International Conference on Machine Learning, 2018, pp. 3836--3845.

\bibitem{bayesian-2018}
K.~Azizzadenesheli, E.~Brunskill, A.~Anandkumar, Efficient exploration through bayesian deep q-networks, in: 2018 Information Theory and Applications Workshop (ITA), IEEE, 2018, pp. 1--9.

\bibitem{dis-2019}
B.~Mavrin, H.~Yao, L.~Kong, K.~Wu, Y.~Yu, Distributional reinforcement learning for efficient exploration, in: International conference on machine learning, PMLR, 2019, pp. 4424--4434.

\bibitem{ids-2019}
N.~Nikolov, J.~Kirschner, F.~Berkenkamp, A.~Krause, Information-directed exploration for deep reinforcement learning, in: International Conference on Learning Representations, 2019.

\bibitem{bai2022monotonic}
C.~Bai, T.~Xiao, Z.~Zhu, L.~Wang, F.~Zhou, A.~Garg, B.~He, P.~Liu, Z.~Wang, Monotonic quantile network for worst-case offline reinforcement learning, IEEE Transactions on Neural Networks and Learning Systems (2022).

\bibitem{shi2023robust}
J.~Shi, C.~Bai, H.~He, L.~Han, D.~Wang, B.~Zhao, X.~Li, X.~Li, Robust quadrupedal locomotion via risk-averse policy learning, arXiv preprint arXiv:2308.09405 (2023).

\bibitem{lee2020batch}
B.~Lee, J.~Lee, P.~Vrancx, D.~Kim, K.-E. Kim, Batch reinforcement learning with hyperparameter gradients, in: International Conference on Machine Learning, PMLR, 2020, pp. 5725--5735.

\bibitem{swazinna2021overcoming}
P.~Swazinna, S.~Udluft, T.~Runkler, Overcoming model bias for robust offline deep reinforcement learning, Engineering Applications of Artificial Intelligence 104 (2021) 104366.

\bibitem{dropout-2016}
Y.~Gal, Z.~Ghahramani, Dropout as a bayesian approximation: Representing model uncertainty in deep learning, in: International Conference on Machine Learning, PMLR, 2016, pp. 1050--1059.

\bibitem{rorl}
R.~Yang, C.~Bai, X.~Ma, Z.~Wang, C.~Zhang, L.~Han, Rorl: Robust offline reinforcement learning via conservative smoothing, Advances in Neural Information Processing Systems 35 (2022) 23851--23866.

\bibitem{cucb}
S.~Qiu, L.~Wang, C.~Bai, Z.~Yang, Z.~Wang, Contrastive ucb: Provably efficient contrastive self-supervised learning in online reinforcement learning, in: International Conference on Machine Learning, PMLR, 2022, pp. 18168--18210.

\bibitem{d4rl-2020}
J.~Fu, A.~Kumar, O.~Nachum, G.~Tucker, S.~Levine, D4rl: Datasets for deep data-driven reinforcement learning, arXiv preprint arXiv:2004.07219 (2020).

\bibitem{td3-2018}
S.~Fujimoto, H.~Hoof, D.~Meger, Addressing function approximation error in actor-critic methods, in: International Conference on Machine Learning, PMLR, 2018, pp. 1587--1596.

\bibitem{ATC-2021}
A.~Stooke, K.~Lee, P.~Abbeel, M.~Laskin, Decoupling representation learning from reinforcement learning, in: International Conference on Machine Learning, PMLR, 2021, pp. 9870--9879.

\bibitem{BYOL-2020}
J.-B. Grill, F.~Strub, F.~Altch{\'e}, C.~Tallec, P.~Richemond, E.~Buchatskaya, C.~Doersch, B.~Avila~Pires, Z.~Guo, M.~Gheshlaghi~Azar, et~al., Bootstrap your own latent-a new approach to self-supervised learning, Advances in Neural Information Processing Systems 33 (2020) 21271--21284.

\bibitem{agarwal2021deep}
R.~Agarwal, M.~Schwarzer, P.~S. Castro, A.~Courville, M.~G. Bellemare, Deep reinforcement learning at the edge of the statistical precipice, in: Advances in Neural Information Processing Systems, 2021.

\end{thebibliography}
\clearpage

\appendix

\section{Theoretical Analysis}
\label{sec:theore}

In this section, we give a detailed theoretical analysis of UTDS in linear MDPs \citep{lsvi-2020}. 

\subsection{UTDS for Single-Task Linear MDPs}

In linear MDPs, we assume that the transition dynamics and reward function are linear to the state-action feature embedding $\phi: \cS\times\cA\mapsto \RR^d$, as
\begin{equation}
\label{eq:app-linearMDP}
\mathbb{P}_t(s_{t+1} \,|\, s_t, a_t) = \langle \psi(s_{t+1}), \phi(s_t, a_t) \rangle, \quad r(s_t, a_t) = \theta^\top \phi(s_t, a_t), \quad\forall(s_{t+1}, a_t, s_t)\in\mathcal{S}\times\mathcal{A}\times\mathcal{S},
\end{equation}
where we assume the feature is bounded by $\|\phi\|_2 \leq 1$, and the reward function $r:\cS\times\cA\mapsto[0, 1]$ is also bounded. Since the transition function and reward function are linear to $\phi$, the state-action value function for any policy $\pi$ is also linear in $\phi$, as $Q_t(s_t,a_t)=w_t^\top \phi(s_t,a_t)$. In the following, we omit the notation of time step $t$ in the $Q$-function and the uncertainty. Our discussion is applicable to the solution for arbitrary time step.

For training a specific task $i$ with a single dataset $\cD_i$ in offline RL, we perform Least-Squares Value Iteration (LSVI) to obtain the $Q$-function. The parameter $w_i$ of $Q_i$-function can be solved by minimizing the following loss function as
\begin{equation}
\label{eq:app-lsvi-qlearning}
\hw_i = \min_{w\in \RR^d}\sum_{k=1}^{|\cD_i|}\bigl(\phi(s^k_t, a^k_t)^\top w - r(s^k_t, a^k_t)- V_{t+1}(s^k_{t+1})\bigr)^2 + \lambda\cdot \|w\|_2^2,
\end{equation}
where the experience $(s^k_t, a^k_t, r^k_t, s^k_{t+1})$ is sampled from $\cD_i$, and $V_{t+1}$ is the estimated value function in the $(t+1)$-th step. We denote $y^k_t=r(s^k_t, a^k_t)+ V_{t+1}(s^k_{t+1})$ as the target of LSVI, then the parameter $\hw_i$ can be solved in a closed-form as
\begin{equation}
\label{eq:app-lambda-i}
\hw_i = \Lambda^{-1}_i\sum^{|\cD_i|}_{k=1}\phi(s^k_t, a^k_t) y^k_t,\quad \Lambda_i = \sum^{|\cD_i|}_{k=1}\phi(s^k_t, a^k_t)\phi(s^k_t, a^k_t)^\top + \lambda \cdot \bI,
\end{equation}
where the covariance matrix $\Lambda_i$ accumulates the state-action features from the single-task dataset $|\cD_i|$. The $L_2$-norm of $w_i$ provides regularizations to ensure that $\Lambda_i$ is positive definite. Based on the solution of $w_i$, the action-value function can be estimated by $Q_i(s,a)\approx \hw_i^{\top}\phi(s,a)$. 

In offline RL with linear function assumption, a Lower-Confidence-Bound (LCB) measures the confidence interval of $Q$-function learned by the given dataset. When we learn value functions from the single $\cD_i$, the LCB-penalty for a specific $(s,a)$ pair is given as
\begin{equation}\label{eq:app-lcb}
\Gamma^{\rm lcb}_i(s,a)= \big[\phi(s,a)^\top\Lambda_i^{-1}\phi(s,a)\big]^{\nicefrac{1}{2}},
\end{equation}
which forms an uncertainty quantification with the covariance matrix $\Lambda_i^{-1}$ given the dataset $\cD_i$ \citep{bandit-2011, lsvi-2020, pevi-2021}. Intuitively, $\Gamma^{\rm lcb}_i(s,a)$ can be considered as a reciprocal pseudo-count of the state-action pair in the representation space. 

Without loss of generality, we assume that $w_i$ has a Gaussian prior as $w_i\sim \cN(0,\bI/\lambda)$, and denote the noise in regression as $\epsilon_t^k=y_t^k-w_i^\top \phi(s_t^k,a_t^k)$. Recall that the uncertainty quantification used in UTDS for a specific task $i$ is defined as Eq.~\eqref{eq:uncertainty-gamma} as $\Gamma_i(s,a)=\sqrt{\VV[Q^j_i(s,a)]}$, $j\in[1,\dots,N]$, then the following Lemma established the connection between $\Gamma_i(s,a)$ and the provable efficient LCB-penalty defined in Eq.~\eqref{eq:app-lcb}.

\begin{lemma}[Equivalence between LCB-penalty and Ensemble Uncertainty] We assume that $\epsilon$ follows the standard Gaussian as $\mathcal N(0, 1)$ given the state-action pair $(s,a)$. It then holds for the posterior of $w_i$ given $\cD_i$ that
\begin{equation}
\VV_{\hw_i}[Q_i(s,a)]=\VV_{\hw_i}\bigl(\phi(s, a)^\top \hw_i \bigr) = \phi(s, a)^\top \Lambda^{-1}_i \phi(s, a), \quad \forall (s, a)\in\cS\times\cA.
\end{equation}
\label{app:lemma-ucb}
\end{lemma}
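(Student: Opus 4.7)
The plan is to recognize this as a standard Bayesian linear regression calculation under the linear MDP assumption. Specifically, I would treat $\{y_t^k\}_{k=1}^{|\cD_i|}$ as noisy linear observations of the form $y_t^k = \phi(s_t^k, a_t^k)^\top w_i + \epsilon_t^k$ with $\epsilon_t^k \sim \cN(0,1)$ i.i.d., and combine this likelihood with the Gaussian prior $w_i \sim \cN(0, \bI/\lambda)$ to obtain a Gaussian posterior over $w_i$.

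First, I would stack the feature vectors into a design matrix $\Phi \in \RR^{|\cD_i|\times d}$ and the targets into $y\in\RR^{|\cD_i|}$. The log-posterior is then proportional to $-\tfrac{1}{2}\|y - \Phi w\|_2^2 - \tfrac{\lambda}{2}\|w\|_2^2$, which is a quadratic form in $w$. Completing the square in $w$ identifies the posterior as $\cN\!\bigl(\Lambda_i^{-1} \Phi^\top y,\ \Lambda_i^{-1}\bigr)$, where $\Lambda_i = \Phi^\top\Phi + \lambda\bI$ matches the expression defined in Eq.~\eqref{eq:app-lambda-i}. Notably, the posterior mean coincides with the ridge regression solution $\hw_i$ from LSVI, which is the usual MAP--posterior-mean duality for Gaussian linear models.

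Next, since $Q_i(s,a) = \phi(s,a)^\top \hw_i$ is a linear functional of $\hw_i$, its posterior distribution is also Gaussian, with mean $\phi(s,a)^\top \Lambda_i^{-1} \Phi^\top y$ and variance obtained by the standard push-forward of a Gaussian through a linear map:
\begin{equation*}
\VV_{\hw_i}\bigl(\phi(s,a)^\top \hw_i\bigr) = \phi(s,a)^\top \bigl(\VV_{\hw_i}[\hw_i]\bigr)\phi(s,a) = \phi(s,a)^\top \Lambda_i^{-1} \phi(s,a),
\end{equation*}
which is exactly the claim. This also matches $\bigl(\Gamma_i^{\rm lcb}(s,a)\bigr)^2$ from Eq.~\eqref{eq:app-lcb} up to the choice of $\beta_t$, justifying the interpretation of the ensemble variance as a (non-parametric) estimate of the LCB-penalty used in Claim~\ref{lem:lcb-main}.

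The calculation itself is routine; the main subtlety is bookkeeping around what is random. Here $\hw_i$ is viewed as a random variable drawn from the posterior given $\cD_i$, while $(s,a)$ and $\phi(s,a)$ are treated as fixed, so $\VV_{\hw_i}[\cdot]$ is variance with respect to the posterior randomness only. I would also briefly note the implicit step that couples this Bayesian posterior variance to the actual ensemble variance used in UTDS: the $N$ independently initialized ensemble heads trained on $\cD_i$ serve as approximate posterior samples (in the spirit of bootstrapped/Bayesian ensembles cited in the main text), so $\Gamma_i(s,a) = \sqrt{\VV[Q_i^j(s,a)]}$ approximates the Bayesian posterior standard deviation, which by the calculation above equals $\bigl[\phi(s,a)^\top \Lambda_i^{-1}\phi(s,a)\bigr]^{1/2}$. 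This is the key conceptual point to flag, though for the lemma as stated the proof reduces to the Gaussian linear model computation outlined above.
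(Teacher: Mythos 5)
Your proposal is correct and follows essentially the same route as the paper: a standard Bayesian linear regression argument with Gaussian prior $w_i\sim\cN(0,\bI/\lambda)$ and unit-variance Gaussian likelihood, completing the square to identify the posterior $\cN(\mu_i,\Lambda_i^{-1})$ and then pushing it through the linear map $\phi(s,a)^\top(\cdot)$. The extra remarks on the MAP/posterior-mean duality and on the ensemble heads as approximate posterior samples are consistent with the paper's discussion surrounding the lemma.
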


\begin{proof}
This lemma is similar to Lemma 1 in \citep{PBRL-2022}, and the proof follows standard Bayesian linear regression. Under the assumptions, we have that $y\:|\:(s, a),\hw_i\sim \cN(\hw_i^\top \phi(s, a), 1)$. Since the prior distribution $w_i\sim \cN(0, \bI/\lambda)$. The posterior distribution of $\hw_i$ can be obtained following the Bayesian rule as 
\begin{equation}\label{eq::pf_bayes_rule}
\log p(\hw_i \:|\: \cD_i)=\log p(\hw_i) + \log p(\cD_i \:|\: \hw_i)+{\rm Const.},
\end{equation}
Plugging the prior and likelihood distribution into Eq.~(\ref{eq::pf_bayes_rule}) yields
\begin{equation}
\begin{aligned}
\log p(\hw_i\:|\: \cD_i) &= -\|\hw_i\|^2 / 2 -\sum^{|\cD_i|}_{k=1} \| \hw_i^\top\phi(s^k_t, a^k_t) - y^k_t\|^2/2 + {\rm Const.}\\
&= -(\hw_i - \mu_i)^\top \Lambda^{-1}_i(\hw_i - \mu_i)/2 + {\rm Const.},
\end{aligned}
\end{equation}
where we define $\mu_i = \Lambda^{-1}_i \sum^{|\cD_i|}_{k= 1}\phi(s^k_t, a^k_t) y^k_t$, and $\Lambda_i=\sum_{k=1}^{|\cD_i|}\phi(s_t^k,a_t^k)\phi(s_t^k,a_t^k)^\top+\lambda \cdot \mathrm{\mathbf{I}}$. Then we obtain that $\hw_i = w\:|\: \cD_i \sim \cN(\mu_i, \Lambda^{-1}_i)$. It then holds for all $(s, a)\in\cD_i$ that
\begin{equation}
\VV_{\hw_i}\bigl(Q_i(s, a)\bigr) = \VV_{\hw_i}\bigl(\phi(s, a)^\top \hw_i \bigr) = \phi(s, a)^\top \Lambda^{-1}_i \phi(s,a).
\end{equation}
Then we have $\Gamma_i=\sqrt{\VV[Q_i(s, a)]}=\big[\phi(s,a)^\top\Lambda_i^{-1}\phi(s,a)\big]^{\nicefrac{1}{2}}=\Gamma_i^{\rm lcb}$, which concludes our proof.
\end{proof}

In Lemma~\ref{app:lemma-ucb}, we show that the uncertainty estimated by the standard deviation of the $Q$-posterior is equivalent to the LCB-penalty. LCB can only be applied in linear cases, while the ensemble uncertainty is more general with non-parametric ensemble $Q$-networks to handle problems with high-dimensional states and actions. 

In UTDS, we further include an OOD sampling process in $\hT^{\rm ood}$. Then the optimization objective becomes
\begin{equation}\label{eq:app-UTDS-single}
\tw_i = \min_{w\in \RR^d}\sum_{k=1}^{|\cD_i|}\bigl(\phi(s^k_t, a^k_t)^\top w - r(s^k_t, a^k_t)- V_{t+1}(s^k_{t+1})\bigr)^2 + \sum_{s^{\rm ood},a^{\rm ood}\sim \cD^{\rm ood}_i}\bigl(\phi(s^{\rm ood}, a^{\rm ood})^\top w - y^{\rm ood}\bigr)^2,
\end{equation}
where we denote $\cD^{\rm ood}_i$ as an OOD dataset for $\cD_i$ for simplicity. In practice, we sample $s^{\rm ood}$ from the original dataset $\cD_i$, and sample $a^{\rm ood}$ by following the learned policy $\pi$. Such an OOD sampling process is easy to implement since it only relies on the current policy. In Eq.~\eqref{eq:app-UTDS-single}, we remove the $\cL_2$ regularization $\lambda \|w\|_2^2$ since it is only suitable for linear cases. For deep offline RL, we find the OOD sampling process provides sufficient regularization for the extrapolation behavior of OOD data. The explicit solution of Eq.~\eqref{eq:app-UTDS-single} in LSVI is given as 
\begin{equation}
\tw_i = \widetilde\Lambda^{-1}_i\Big(\sum^{|\cD_i|}_{k=1}\phi(s^k_t, a^k_t) y^k_t + \sum_{\cD^{\rm ood}_i} \phi(s^{\rm ood}, a^{\rm ood}) y^{\rm ood} \Big),
\end{equation}
where the covariance matrix is
\begin{equation}
\label{eq:app-lambda-UTDS-single}
\widetilde\Lambda_i = \sum^{|\cD_i|}_{k=1}\phi(s^k_t, a^k_t)\phi(s^k_t, a^k_t)^\top + \sum_{\cD^{\rm ood}_i} \phi(s^{\rm ood},a^{\rm ood})\phi(s^{\rm ood},a^{\rm ood})^{\top}:= \Lambda_i+ \Lambda^{\rm ood}_i,	
\end{equation}
where we denote the second term that accumulates the OOD features as $\Lambda^{\rm ood}_i$.

\subsection{UTDS for Multi-Task Data Sharing}

In MTDS, considering we share a dataset $\cD_j$ to task $i$ as $\cD_{j\rightarrow i}$, then we have $\hD_i=\cD_i\cup \cD_{j\rightarrow i}$ in training. We denote the parameter of $Q$-function learned in  $\hD_i=\cD_i\cup\cD_{j\rightarrow i}$ as $w_{ij}$, then we have
\begin{equation}\label{eq:app-UTDS-share-lsvi}
\begin{aligned}
&\tw_{ij} = \min_{w\in \RR^d}
\sum_{k_1=1}^{|\cD_i|}\bigl(\phi(s^{k_1}_t, a^{k_1}_t)^\top w - r(s^{k_1}_t, a^{k_1}_t)- V_{t+1}(s^{k_1}_{t+1})\bigr)^2 \\
& + \sum_{k_2=1}^{|\cD_{j\rightarrow i}|}\bigl(\phi(s^{k_2}_t, a^{k_2}_t)^\top w - r(s^{k_2}_t, a^{k_2}_t)- V_{t+1}(s^{k_2}_{t+1})\bigr)^2+
\sum_{\cD^{\rm ood}_i \cup \cD^{\rm ood}_j}\bigl(\phi(s^{\rm ood}, a^{\rm ood})^\top w - y^{\rm ood}\bigr)^2,
\end{aligned}
\end{equation}
where the transitions in the first summation are sampled from the original dataset of task $i$, the transitions in the second summation are sampled from the relabeled dataset $\cD_{j\rightarrow i}$ from task $j$, and the last term are sampled from the OOD data $\cD^{\rm ood}_i$ and $\cD^{\rm ood}_j$. In addition, we remark that we do not need relabeled rewards for $(s^{\rm ood},a^{\rm ood})\sim \cD^{\rm ood}_j$ since the target $y^{\rm ood}$ defined in Eq.~\eqref{eq:ood-target} is a pseudo-target that enforces pessimism based on the current value function and the uncertainty quantification. The calculation of $y^{\rm ood}$ does not rely on the reward function.

Following LSVI, the solution of parameter $\tw_{ij}$ in Eq.~\eqref{eq:app-UTDS-share-lsvi} is
\begin{equation}
\label{eq:app-w-UTDS-share}
\tw_{ij} = \widetilde\Lambda^{-1}_{ij}\left(\sum^{|\cD_i|}_{k_1=1}\phi(s^{k_1}_t, a^{k_1}_t) y^{k_1}_t + \sum^{|\cD_{j\rightarrow i}|}_{k_2=1}\phi(s^{k_2}_t, a^{k_2}_t) y^{k_2}_t + 
\sum_{\cD^{\rm ood}_i \cup \cD^{\rm ood}_j}\bigl(\phi(s^{\rm ood}, a^{\rm ood}) y^{\rm ood}\right),
\end{equation}
where we accumulate all OOD data from $\cD^{\rm ood}_i$ and $\cD^{\rm ood}_j$ in the last term. The covariance matrix $\widetilde\Lambda_{ij}$ for the mixed data $\hD_i$ becomes
\begin{equation}
\begin{aligned}
\quad \widetilde\Lambda_{ij} &= 
\sum^{|\cD_i|}_{k_1=1}\phi(s^{k_1}_t, a^{k_1}_t)\phi(s^{k_1}_t, a^{k_1}_t)^\top +
\sum_{\cD^{\rm ood}_i}\phi(s^{\rm ood}, a^{\rm ood})\phi(s^{\rm ood}, a^{\rm ood})^\top\\
&\qquad \qquad +\sum^{|\cD_{j\rightarrow i}|}_{k_2=1}\phi(s^{k_2}_t, a^{k_2}_t)\phi(s^{k_2}_t, a^{k_2}_t)^\top + 
\sum_{\cD^{\rm ood}_j}\phi(s^{\rm ood}, a^{\rm ood})\phi(s^{\rm ood}, a^{\rm ood})^\top \\
&= \Lambda_i + \Lambda_i^{\rm ood} + \Lambda_j + \Lambda_j^{\rm ood} 
= \widetilde\Lambda_i + \widetilde\Lambda_j,
\end{aligned}
\label{eq:app-lambda-UTDS-share}
\end{equation}
which contains the feature covariance matrix from both task $i$ and task $j$. The definition of $\Lambda_i$ and $\Lambda_i^{\rm ood}$ are give in Eq.~\eqref{eq:app-lambda-UTDS-single}. 

Based on the covariance matrix given in Eq.~\eqref{eq:app-lambda-UTDS-single} and  Eq.~\eqref{eq:app-lambda-UTDS-share}, the following theorem establishes the relationship between the uncertainty estimation in the single dataset $\cD_i$ and the shared dataset $\hD_i=\cD_i\cup\cD_{j\rightarrow i}$.

\begin{theorem*}[Theorem \ref{thm:uncertainty-d} restate]
For a given state-action pair $(s,a)$, we denote the uncertainty for the single-task dataset $\cD_i$ and shared dataset $\hD_i=\cD_i\cup \cD_{j\rightarrow i}$ as $\Gamma_i(s,a;\cD_i)$ and $\Gamma_i(s,a;\hD_i)$, respectively. Then the following inequality holds as
\begin{equation}\label{eq:app-lambda-ij}
\Gamma^{\rm lcb}_i(s,a;\cD_i)\geq \Gamma^{\rm lcb}_i(s,a;\hD_i), {\:\:\rm and\:\:} \Gamma_i(s,a;\cD_i)\geq \Gamma_i(s,a;\hD_i),
\end{equation}
where signifies that the shared data reduces the uncertainty. 
\end{theorem*}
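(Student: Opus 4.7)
The plan is to prove both inequalities through a single algebraic observation: adding the shared task's data only enlarges the covariance matrix in the Loewner order, and matrix inversion reverses that order on the cone of positive definite matrices. I would then lift the matrix inequality to the quadratic form $\phi(s,a)^\top\Lambda^{-1}\phi(s,a)$ and take square roots, which preserves inequalities between nonnegative numbers.

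First, I would verify the Loewner inequality $\widetilde\Lambda_{ij}\succeq\widetilde\Lambda_i$. From Eq.~\eqref{eq:app-lambda-UTDS-share} we have $\widetilde\Lambda_{ij}=\widetilde\Lambda_i+\widetilde\Lambda_j$, and $\widetilde\Lambda_j=\Lambda_j+\Lambda_j^{\rm ood}$ is a sum of outer products of the form $\phi\phi^\top$, hence positive semidefinite. This directly yields $\widetilde\Lambda_{ij}\succeq\widetilde\Lambda_i\succ 0$. The standard fact that matrix inversion reverses Loewner order on the cone of positive definite matrices then implies $\widetilde\Lambda_i^{-1}\succeq\widetilde\Lambda_{ij}^{-1}$. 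Sandwiching both sides with $\phi(s,a)$ gives $\phi(s,a)^\top\widetilde\Lambda_i^{-1}\phi(s,a)\geq\phi(s,a)^\top\widetilde\Lambda_{ij}^{-1}\phi(s,a)$, and taking square roots yields the first inequality $\Gamma^{\rm lcb}_i(s,a;\cD_i)\geq\Gamma^{\rm lcb}_i(s,a;\hD_i)$.

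For the second inequality on the ensemble-based uncertainty, I would appeal to Lemma~\ref{app:lemma-ucb}, which identifies the Bayesian posterior variance of $Q_i(s,a)=\phi(s,a)^\top\hw_i$ with the quadratic form $\phi(s,a)^\top\Lambda^{-1}\phi(s,a)$ under a Gaussian prior and unit-variance observation noise. Applying this identification separately to the single-task posterior (with covariance $\widetilde\Lambda_i$) and the shared-task posterior (with covariance $\widetilde\Lambda_{ij}$), the claim $\Gamma_i(s,a;\cD_i)\geq\Gamma_i(s,a;\hD_i)$ reduces to the LCB-penalty inequality already established above.

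The main obstacle I expect is not analytical but a book-keeping subtlety around invertibility: in Eq.~\eqref{eq:app-lambda-UTDS-single} the ridge term $\lambda\|w\|_2^2$ was dropped, so $\widetilde\Lambda_i$ is only guaranteed to be positive semidefinite. If $\widetilde\Lambda_i$ fails to be strictly positive definite, the inversion step must either be replaced by a Moore--Penrose pseudoinverse argument (checking that $\phi(s,a)$ lies in the range of $\widetilde\Lambda_i$) or by adding a vanishing perturbation $\widetilde\Lambda_i+\epsilon\bI$ and letting $\epsilon\downarrow 0$. Under the mild assumption $\widetilde\Lambda_i\succ 0$ commonly adopted in the linear MDP literature (and consistent with the hypothesis $\widetilde\Lambda_{ij}\succeq\lambda\cdot\bI$ used in Theorem~\ref{thm:UTDS-lcb-share}), the argument proceeds cleanly as above.
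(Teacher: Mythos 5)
Your proposal is correct and follows essentially the same route as the paper: establish $\widetilde\Lambda_{ij}=\widetilde\Lambda_i+\widetilde\Lambda_j\succeq\widetilde\Lambda_i$ because $\widetilde\Lambda_j$ is a sum of outer products, deduce $\phi(s,a)^\top\widetilde\Lambda_i^{-1}\phi(s,a)\geq\phi(s,a)^\top\widetilde\Lambda_{ij}^{-1}\phi(s,a)$, and transfer the conclusion to the ensemble uncertainty via the posterior-variance equivalence of Lemma~\ref{app:lemma-ucb}. The only difference is the middle step, where you use operator-antitonicity of inversion on the positive-definite cone while the paper lower-bounds the generalized Rayleigh quotient $\phi^\top\widetilde\Lambda_i^{-1}\phi\,/\,\phi^\top\widetilde\Lambda_{ij}^{-1}\phi$ by $1+\lambda_{\min}(\widetilde\Lambda_j\widetilde\Lambda_i^{-1})\geq 1$; these are equivalent and both standard. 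Your point about invertibility is well taken --- the paper drops the ridge term yet still inverts $\widetilde\Lambda_i$ without comment, and your $\epsilon$-perturbation or pseudoinverse remedy is the right way to make that step airtight.
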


\begin{proof}
In the following, we prove that $\Gamma_i^{\rm lcb}(s,a;\cD_i)\geq \Gamma_i^{\rm lcb}(s,a;\hD_i)$. The original relationship can be obtained by following Lemma~\ref{app:lemma-ucb} that $\Gamma_i(s,a;\cD_i)=\Gamma_i^{\rm lcb}(s,a;\cD_i)$ and $\Gamma_i(s,a;\hD_i)=\Gamma_i^{\rm lcb}(s,a;\hD_i)$. 

For UTDS in the shared dataset $\hD_{i}$, according to the solution of $w_{ij}$ and covariance matrix given in Eq.~\eqref{eq:app-w-UTDS-share} and Eq.~\eqref{eq:app-lambda-UTDS-share}, we have 
\begin{equation}
\widetilde\Lambda_{ij} = \widetilde\Lambda_{i}+\widetilde\Lambda_{j},
\end{equation}
where $\widetilde\Lambda_{j}$ denotes the feature covariance matrix from shared task $j$ with OOD sampling, as $\widetilde\Lambda_{j}=\Lambda_j + \Lambda_j^{\rm ood}$. Since both $\widetilde\Lambda_{ij}$ and $\widetilde\Lambda_{i}$ are positive semi-definite, by following the generalized Rayleigh quotient, we have
\begin{equation}
\frac{\phi^\top \widetilde\Lambda_i^{-1} \phi}{\phi^\top (\widetilde\Lambda_i+\widetilde\Lambda_j)^{-1} \phi} \geq \lambda_{\rm min}\big((\widetilde\Lambda_i+\widetilde\Lambda_j)\widetilde\Lambda_i^{-1}\big) = 
\lambda_{\rm min}\big( \mathrm{I} +\widetilde\Lambda_j \widetilde\Lambda_i^{-1}\big) = 
1 + \lambda_{\rm min}\big(\widetilde\Lambda_j \widetilde\Lambda_i^{-1}\big),
\end{equation}
where $\lambda_{\rm min}(\cdot)$ is the minimum eigenvalue of a matrix. Since $\widetilde\Lambda_{j}$ and $\widetilde\Lambda_{i}^{-1}$ are positive semi-definite matrices, the eigenvalues of $\widetilde\Lambda_j \widetilde\Lambda_i^{-1}$ are non-negative and $\lambda_{\rm min}\big(\widetilde\Lambda_j \widetilde\Lambda_i^{-1}\big) \geq 0$. Then we have 
\begin{equation}
\phi(s,a)^\top \widetilde\Lambda_i^{-1} \phi(s,a) \geq \phi(s,a)^\top (\widetilde\Lambda_i+\widetilde\Lambda_j)^{-1} \phi(s,a) =  \phi(s,a)^\top \widetilde\Lambda_{ij}^{-1} \phi(s,a).
\end{equation}
Then according to the definition of LCB-penalty as $\Gamma_i^{\rm lcb}(s,a;\cD_i)=\big[\phi(s,a)^\top \widetilde\Lambda_i^{-1} \phi(s,a)\big]^{\nicefrac{1}{2}}$, we have 
\begin{equation}
\Gamma_i^{\rm lcb}(s,a;\cD_i)\geq \Gamma_i^{\rm lcb}(s,a;\hD_i).
\end{equation}

Further, if we additionally share dataset $\cD_{k\rightarrow i}$ from task $k$ to task $i$, then the covariance matrix becomes $\widetilde\Lambda_{ijk}=\widetilde\Lambda_i+\widetilde\Lambda_j+\widetilde\Lambda_k$, then we have 
\begin{equation}\label{eq:app-succeq-lambda}
\widetilde\Lambda_{ijk}\succeq \widetilde\Lambda_{ij} \succeq \widetilde\Lambda_{i},
\end{equation}
and the corresponding LCB-penalties have the following relationship, as
\begin{equation}\label{eq:app-lcb-ij-ijk}
\phi(s,a)^\top \widetilde\Lambda_i^{-1} \phi(s,a) \:\geq\: \phi(s,a)^\top \widetilde\Lambda_{ij}^{-1} \phi(s,a) \:\geq\: \phi(s,a)^\top \widetilde\Lambda_{ijk}^{-1} \phi(s,a).
\end{equation}
Then we have 
\begin{equation}
\Gamma_i^{\rm lcb}(s,a;\cD_i) \:\geq\: \Gamma_i^{\rm lcb}(s,a;\cD_i \cup \cD_{j\rightarrow i}) \:\geq\: \Gamma_i^{\rm lcb}(s,a;\cD_i \cup \cD_{j\rightarrow i} \cup \cD_{k\rightarrow i}),
\end{equation}
and thus 
\begin{equation}
\Gamma_i(s,a;\cD_i) \:\geq\: \Gamma_i(s,a;\cD_i \cup \cD_{j\rightarrow i}) \:\geq\: \Gamma_i(s,a;\cD_i \cup \cD_{j\rightarrow i} \cup \cD_{k\rightarrow i})
\end{equation}
by Lemma~\ref{app:lemma-ucb}, which concludes our proof. 
\end{proof}

Theorem~\ref{thm:uncertainty-d} shows that the uncertainty for a specific $(s,a)$ pair will decrease with more shared data, which has also been illustrated in Figure~\ref{fig:ucb-point}. For example, if a $(s,a)$ pair is scarcely occurred in the original dataset $\cD_i$, the uncertainty-based penalty will be high and the agent will hardly choose this action since the corresponding $Q_i(s,a)$ function is pessimistic. However, such pessimism comes from the insufficient knowledge of the environment (i.e., epistemic uncertainty), which does not represents $a$ is actually a bad choice in state $s$. In MTDS, the uncertainty $\Gamma_i(s,a)$ will gradually decrease with more $(s,a)$ pairs shared in $\hD_i$, and the value function may become less pessimistic. Such a property is important since the agent will extend its knowledge in the state-action space with data sharing. In addition, since the neural network has generalization ability around near the input, $\Gamma_i(s,a)$ also decreases when we share similar state-action pairs of $(s,a)$. 

In extreme cases, if the shared data contains sufficient transitions in the whole state-action space, the covariance matrix $\widetilde\Lambda_i$ is full rank and $\lambda_{\min}(\Lambda_i)\rightarrow\infty$. Then the uncertainty quantification $\phi(s,a)^\top \widetilde\Lambda_i^{-1} \phi(s,a)\rightarrow 0$. This case is similar to online RL with sufficient exploratory data, and the agent will choose actions based on the estimated value function without uncertainty penalty. 

\subsection{$\xi$-Uncertainty Quantifier}

For offline RL, we learn a pessimistic value function $\hQ_i(s_t,a_t)$ by penalizing the $Q_i$-function with the uncertainty quantification $\Gamma^{\rm lcb}_i(s_t,a_t)$, as
\begin{equation}
\label{eq:app-lsvi-penalty}
\widehat{Q}_i(s_t,a_t) = Q_i(s_t,a_t)-\Gamma^{\rm lcb}_i(s_t,a_t) = \tw^{\top}\phi(s_t,a_t)-\Gamma^{\rm lcb}_i(s_t,a_t),
\end{equation}
where the weight $\tw$ can be $\tw_i$ or $\tw_{ij}$ based on different datasets. Under the linear MDP setting, such pessimistic value iteration is known to be information-theoretically optimal \citep{pevi-2021}. 
In UTDS, we implement this pessimistic value function via uncertainty penalty in $\hT_{\rm UTDS}$ and $\hT_{\rm ood}$.

%

From the theoretical perspective, an appropriate uncertainty quantification is essential to provable efficiency in offline RL \citep{xie2021bellman,xie2021policy,PBRL-2022}. Definition \ref{def1} \citep{pevi-2021} defines a general $\xi$-uncertainty quantifier pessimistic value iteration as a penalty in Eq.~\eqref{eq:app-lsvi-penalty} and achieves provable efficient pessimism in offline RL. In linear MDPs, the LCB-penalty defined in Eq.~\eqref{eq:app-lcb} is known to be a $\xi$-uncertainty quantifier for appropriately selected $\{\beta_t\}_{t\in[T]}$, as $\beta_t[\phi(s_t,a_t)^\top\Lambda_t^{-1}\phi(s_t,a_t)]^{\nicefrac{1}{2}}$. In the following theorem, we show that the proposed UTDS with the shared dataset $\hD_i=\cD_i\cup \cD_{j\rightarrow i}$ also forms a valid $\xi$-uncertainty quantifier with the covariance matrix $\widetilde\Lambda_{ij}$ given in Eq.~\eqref{eq:app-lambda-UTDS-share}.

\begin{theorem*}(Theorem \ref{thm:UTDS-lcb-share} restate)
Let $\widetilde\Lambda_{ij} \succeq \lambda \cdot \bI$, if we set the OOD target as $y^{\rm ood} = \cT V_{t+1}(s^{\rm ood}, a^{\rm ood})$ for shared dataset $\hD_i=\cD_i\cup \cD_{j\rightarrow i}$, then it holds for $\beta_t=\cO\bigl(T\cdot \sqrt{d} \cdot \log(T/\xi)\bigr)$ that
\begin{equation}\nonumber
\Gamma^{\rm lcb}_i(s_t,a_t;\hD_i)=\beta_t\big[\phi(s_t,a_t)^\top \widetilde \Lambda_{ij}^{-1}\phi(s_t,a_t)\big]^{\nicefrac{1}{2}}
\end{equation}
forms a valid $\xi$-uncertainty quantifier, where $\widetilde\Lambda_{ij}$ is the with the covariance matrix given in Eq.~\eqref{eq:app-lambda-UTDS-share}.
\end{theorem*}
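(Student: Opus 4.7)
The plan is to establish the $\xi$-uncertainty property in Definition~\ref{def1} by bounding $|\widehat{\cT}V_{t+1}(s,a) - \cT V_{t+1}(s,a)|$ pointwise. By the linear MDP assumption in Eq.~\eqref{eq:app-linearMDP}, the true Bellman target admits the linear form $\cT V_{t+1}(s,a) = \phi(s,a)^\top w^*$ for some $w^*\in\RR^d$ depending on $V_{t+1}$, while the empirical counterpart obtained from the shared dataset is $\widehat{\cT}V_{t+1}(s,a) = \phi(s,a)^\top \tw_{ij}$ with $\tw_{ij}$ given in Eq.~\eqref{eq:app-w-UTDS-share}. Applying the Cauchy--Schwarz inequality in the weighted norm gives
\begin{equation*}
\bigl|\phi(s,a)^\top(\tw_{ij} - w^*)\bigr| \;\leq\; \bigl\|\phi(s,a)\bigr\|_{\widetilde\Lambda_{ij}^{-1}} \cdot \bigl\|\tw_{ij} - w^*\bigr\|_{\widetilde\Lambda_{ij}},
\end{equation*}
so it suffices to show $\|\tw_{ij} - w^*\|_{\widetilde\Lambda_{ij}} \leq \beta_t$ with probability at least $1-\xi$.

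The next step exploits the specific structure of UTDS. Because the OOD targets satisfy $y^{\rm ood} = \cT V_{t+1}(s^{\rm ood}, a^{\rm ood}) = (\phi^{\rm ood})^\top w^*$ by assumption, the OOD contributions in the normal equations for $\tw_{ij}$ reduce to $\sum_{\rm ood} \phi^{\rm ood} (\phi^{\rm ood})^\top w^* = (\Lambda_i^{\rm ood}+\Lambda_j^{\rm ood})\,w^*$ and thus cancel identically against the corresponding terms in $\widetilde\Lambda_{ij} w^*$. Only the stochastic residuals from the in-distribution samples in $\cD_i$ and $\cD_{j\to i}$ survive, yielding
\begin{equation*}
\tw_{ij} - w^* \;=\; \widetilde\Lambda_{ij}^{-1} \!\!\sum_{k\in \cD_i \cup \cD_{j\to i}}\!\! \phi_k\,\epsilon_k,
\end{equation*}
where $\epsilon_k = V_{t+1}(s_{t+1}^k) - \EE[V_{t+1}(s_{t+1}^k)\,|\,s_t^k, a_t^k]$ is a bounded zero-mean noise. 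Since $V_{t+1}$ is bounded by $T$ under the reward normalization in Eq.~\eqref{eq:app-linearMDP}, each $\epsilon_k$ is sub-Gaussian with parameter $\cO(T)$, and the self-normalized martingale concentration of \citep{bandit-2011} controls $\|\sum_k \phi_k \epsilon_k\|_{\widetilde\Lambda_{ij}^{-1}}$ with high probability; the assumption $\widetilde\Lambda_{ij}\succeq\lambda\cdot\bI$ ensures the determinant ratio inside the bound is well behaved.

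The main technical obstacle is the statistical dependence between $V_{t+1}$ and the offline transitions used to form $\widetilde\Lambda_{ij}$, which breaks the martingale-difference property required for a direct application of the self-normalized bound. The plan is to resolve this by a uniform concentration argument: one first constructs an $\epsilon$-net over the class of pessimistic value functions producible by UTDS, parameterized by the LSVI weight $w$ together with the uncertainty scaling $\beta$, then applies the self-normalized inequality conditional on each fixed element of the net, and finally takes a union bound. Exploiting the boundedness $\|\phi\|_2 \leq 1$ and a pseudo-dimension argument for this parametric class, the covering number satisfies $\log N_\epsilon = \cO(d\log(T/\epsilon))$. Choosing $\epsilon$ polynomially small and combining with the in-distribution concentration yields the scaling $\beta_t = \cO(T\sqrt{d}\,\log(T/\xi))$, at which point $\Gamma_i^{\rm lcb}(s_t,a_t;\hD_i) = \beta_t\,\|\phi(s_t,a_t)\|_{\widetilde\Lambda_{ij}^{-1}}$ verifies Definition~\ref{def1} uniformly over $(s,a)\in\cS\times\cA$.
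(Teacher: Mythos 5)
Your proposal is correct and follows essentially the same route as the paper: express $\cT V_{t+1}$ and $\widehat{\cT}V_{t+1}$ as linear functionals of $\phi$, observe that the OOD residuals vanish identically under the choice $y^{\rm ood}=\cT V_{t+1}(s^{\rm ood},a^{\rm ood})$, and control the remaining in-distribution noise by self-normalized concentration, which yields the bound via Cauchy--Schwarz in the $\widetilde\Lambda_{ij}^{-1}$-weighted norm. If anything, your treatment is slightly cleaner in two places the paper glosses over: you bound the combined sum over $\cD_i\cup\cD_{j\to i}$ directly against $\widetilde\Lambda_{ij}$ rather than splitting into two terms normalized by $\widetilde\Lambda_i$ and $\widetilde\Lambda_j$, and you make explicit the covering-number argument needed to handle the dependence of $V_{t+1}$ on the data, which the paper delegates to citations.
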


\begin{proof}
The proof follows that of the analysis of PBRL \citep{PBRL-2022} in linear MDPs \citep{pevi-2021}. We define the empirical Bellman operator of UTDS learned in $\hD_i$ as $\tT$, then
\begin{equation*}
\tT V_{t+1}(s_t, a_t) = \phi(s_t, a_t)^\top \tw_{ij},
\end{equation*}
where the parameters $\tw_{ij}$ follows the solution in Eq.~\eqref{eq:app-w-UTDS-share}. Following the $\xi$-uncertainty quantifier defined in Definition~\ref{def1}, we upper bound the difference between the empirical Bellman operator of UTDS and the true Bellman operator as
\begin{equation*}
\cT V_{t+1}(s, a) - \tT V_{t+1}(s, a) = \phi(s, a)^\top (w_t - \tw_{ij}). 
\end{equation*}
Here we define $w_t$ as follows
\begin{equation}
\label{eq::pf_def_wt_pess}
w_t = \theta + \int_{\cS} V_{t+1}(s_{t+1})\psi(s_{t+1}) \text{d} s_{t+1},
\end{equation}
where $\theta$ and $\psi$ are defined in Eq.~\eqref{eq:app-linearMDP}. It then holds that 
\begin{equation} 
\begin{aligned}
\label{eq::pf_ood_0}
\cT V_{t+1}(s, a) -& \widetilde \cT V_{t+1}(s, a) = \phi(s, a)^\top (w_t - \tw_{ij})\\
=& \phi(s, a)^\top w_t - \phi(s, a)^\top \widetilde \Lambda^{-1}_{ij}
\sum^{|\cD_i|}_{k_1=1}\phi(s^{k_1}_t, a^{k_1}_t)\bigl(r(s^{k_1}_t, a^{k_1}_t) + V^i_{t+1}(s^{k_1}_{t+1})\bigr)\\
&- \phi(s, a)^\top \widetilde \Lambda^{-1}_{ij}
\sum^{|\cD_{j\rightarrow i}|}_{k_2=1}\phi(s^{k_2}_t, a^{k_2}_t) \bigl(r(s^{k_2}_t, a^{k_2}_t) + V^i_{t+1}(s^{k_2}_{t+1})\bigr)\\
& - \phi(s, a)^\top \widetilde \Lambda^{-1}_{ij}
\sum_{\cD^{\rm ood}_i \cup \cD^{\rm ood}_j} \phi(s^{\rm ood}, a^{\rm ood}) y^{\rm ood},
\end{aligned}
\end{equation}
where we use the solution of $\tw_{ij}$ in Eq.~\eqref{eq:app-w-UTDS-share}. By the definitions of $\widetilde \Lambda_t$ and $w_t$ in Eq.~(\ref{eq:app-lambda-UTDS-share}) and Eq.~(\ref{eq::pf_def_wt_pess}), respectively, we have
\begin{equation}
\begin{aligned}
\label{eq::pf_ood_1}
\phi(s, a)^\top w_t &= \phi(s, a)^\top \widetilde \Lambda_{ij}^{-1} \widetilde \Lambda_{ij} w_t=\phi(s, a)^\top \widetilde \Lambda_{ij}^{-1} \biggl(
\sum^{|\cD_i|}_{k_1=1}\phi(s^{k_1}_t, a^{k_1}_t) \cT V_{t+1}(s^{k_1}_t, a^{k_1}_t) \\
&+\sum^{|\cD_{j\rightarrow i}|}_{k_2=1}\phi(s^{k_2}_t, a^{k_2}_t) \cT V_{t+1}(s^{k_2}_t, a^{k_2}_t)+ 
\sum_{\cD^{\rm ood}_i \cup \cD^{\rm ood}_j} \phi(s^{\rm ood}, a^{\rm ood}) \cT V_{t+1}(\phi(s^{\rm ood}, a^{\rm ood})) \biggr).
\end{aligned}
\end{equation}

Plugging Eq.~(\ref{eq::pf_ood_1}) into Eq.~(\ref{eq::pf_ood_0}) yields
\begin{equation}\label{eq::pf_ood_2}
\cT V_{t+1}(s, a) - \widetilde \cT V_{t+1}(s, a) = \text{(i)} + \text{(ii)}+\text{(iii)},
\end{equation}
where we define
\begin{align*}
\text{(i)} &=\phi(s, a)^\top \widetilde \Lambda_{ij}^{-1}
\sum^{|\cD_i|}_{k_1=1}\phi(s^{k_1}_t, a^{k_1}_t) \bigl(\cT V_{t+1}(s^{k_1}_t, a^{k_1}_t) - r(s^{k_1}_t, a^{k_1}_t) - V^i_{t+1}(s^{k_1}_{t+1}) \bigr),\\
\text{(ii)} &=\phi(s, a)^\top \widetilde \Lambda_{ij}^{-1}
\sum^{|\cD_{j\rightarrow i}|}_{k_2=1}\phi(s^{k_2}_t, a^{k_2}_t) \bigl(\cT V_{t+1}(s^{k_2}_t, a^{k_2}_t) - r(s^{k_2}_t, a^{k_2}_t) - V^i_{t+1}(s^{k_2}_{t+1}) \bigr), \\
\text{(iii)} &= \phi(s, a)^\top \widetilde \Lambda_{ij}^{-1}\sum_{\cD^{\rm ood}_i \cup \cD^{\rm ood}_j} \phi(s^{\rm ood}, a^{\rm ood})\bigl(\cT V_{t+1}(s^{\rm ood}, a^{\rm ood}) - y^{\rm ood}\bigr).
\end{align*}

Following the standard analysis based on the concentration of self-normalized process \citep{bandit-2011, azar2017minimax, wang2020reward, lsvi-2020, pevi-2021} and the fact that $\widetilde\Lambda_{ij} \succeq \lambda \cdot \bI$, it holds that
\begin{equation}
|\text{(i)}| \leq \beta_i \cdot\big[\phi(s,a)^\top\widetilde\Lambda_i^{-1}\phi(s,a)\big]^{\nicefrac{1}{2}},\quad
|\text{(ii)}| \leq \beta_j \cdot\big[\phi(s,a)^\top\widetilde\Lambda_j^{-1}\phi(s,a)\big]^{\nicefrac{1}{2}},
\end{equation}
with probability at least $1-\xi$, where $\beta_{i(j)} = \cO\bigl(T\cdot \sqrt{d} \cdot \text{log}(T/\xi)\bigr)$. By following Eq.~\eqref{eq:app-lcb-ij-ijk}, we have 
\begin{equation}
\begin{aligned}
|\text{(i)}| + |\text{(ii)}| &\leq 2 \max\{\beta_i,\beta_j\} \cdot\big[\phi(s,a)^\top\widetilde\Lambda_{ij}^{-1}\phi(s,a)\big]^{\nicefrac{1}{2}}  \\
&=\beta_t \cdot\big[\phi(s,a)^\top\widetilde\Lambda_{ij}^{-1}\phi(s,a)\big]^{\nicefrac{1}{2}}.
\end{aligned}
\end{equation}
where we denote $\beta_t = 2 \max\{\beta_i,\beta_j\}$. Meanwhile, by setting $y^{\rm ood} = \cT V_{t+1} (s^{\rm ood}, a^{\rm ood})$, it holds that $\text{(iii)} = 0$. Thus, we obtain from Eq.~\eqref{eq::pf_ood_2} that 
\begin{equation}
|\cT V_{t+1}(s, a) - \tT V_{t+1}(s, a)| \leq \beta_t \cdot\big[\phi(s_t,a_t)^\top\Lambda_{ij}^{-1}\phi(s_t,a_t)\big]^{\nicefrac{1}{2}}
\end{equation}
with probability at least $1-\xi$, which concludes our proof.
\end{proof}

Theorem \ref{thm:UTDS-lcb-share} shows that the disagreement among ensemble $Q$-networks is a valid $\xi$-uncertainty quantifier with data sharing, while it needs the covariance matrix $\widetilde\Lambda_{ij}$ is lower bounded. In practice, such an assumption can be achieved through (\romannumeral1) sharing more data to $\hD_i$ since we have $\widetilde\Lambda_{ij} \succeq \widetilde\Lambda_{i}$ according to Eq.~\eqref{eq:app-succeq-lambda}, and (\romannumeral2) randomly generating OOD actions to make the embeddings of the OOD sample isotropic, which ensures the eigenvalues of the covariate matrix $\Lambda^{\rm ood}$ are lower bounded. 

In addition, since the transition function is unknown for OOD samples, the OOD target $\cT V_{t+1}$ is impossible to obtain in practice as it requires knowing the transition at the OOD datapoint. In practice, if TD error is sufficiently minimized, then $Q_{t+1}(s, a)$ should well estimate the target $\cT V_{t+1}$. Thus, in practice, the targets of OOD data are set to be $y_i^{\rm ood}=Q_i(s^{\rm ood}, a^{\rm ood}) - \Gamma_i(s^{\rm ood},a^{\rm ood})$ by assuming the TD-error is sufficiently minimized.

\subsection{Suboptimality Gap}

Theorem \ref{thm:UTDS-lcb-share} allows us to further characterize the optimality gap based on the pessimistic value iteration \citep{pevi-2021}. In particular, the following corollary holds,

\begin{corollary*}[Corollary \ref{coro-UTDS-gap} restate]
Under the same conditions as Theorem \ref{thm:UTDS-lcb-share}, for the uncertainty quantification $\Gamma_i(s,a;\cD_i)$ and $\Gamma_i(s,a;\hD_i)$ defined in $\cD_i$ and $\hD_i=\cD_i\cup \cD_{j\rightarrow i}$ respectively, we have
\begin{equation}
{\rm SubOpt} (\pi_i^*, \widetilde\pi_i) \leq \sum\nolimits_{t=1}^{T} \mathbb{E}_{\pi^*_i} \big[ \Gamma_i^{\rm lcb}(s_t,a_t;\hD_i) \big] 
\leq \sum\nolimits_{t=1}^{T} \mathbb{E}_{\pi^*_i} \big[ \Gamma_i^{\rm lcb}(s_t,a_t;\cD_i) \big],
\end{equation}
where $\widetilde\pi_i$ and $\pi^{*}_i$ are the learned policy and the optimal policy in $\hD_i$, respectively.
\end{corollary*}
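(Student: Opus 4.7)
\textbf{Proof proposal for Corollary \ref{coro-UTDS-gap}.} My plan is to decompose the claimed chain of two inequalities and handle them separately, since they rely on qualitatively different ingredients: the first is the standard suboptimality bound for pessimistic value iteration specialized to the shared dataset, while the second is a pointwise comparison of uncertainty quantifiers that I lift to expectations under the optimal policy.

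For the first inequality, I would invoke the suboptimality decomposition of pessimistic value iteration (Jin et al., pevi-2021). That result states: if $\{\Gamma_t\}_{t\in[T]}$ is a valid $\xi$-uncertainty quantifier in the sense of Definition~\ref{def1} and $\widetilde\pi$ is the greedy policy with respect to the pessimistic $Q$-values $\widehat Q_t = Q_t - \Gamma_t$, then with probability at least $1-\xi$,
\begin{equation*}
{\rm SubOpt}(\pi^*,\widetilde\pi) \;\leq\; \sum_{t=1}^{T} \mathbb{E}_{\pi^*}\bigl[\,\Gamma_t(s_t,a_t)\,\bigr].
\end{equation*}
The key input is that, by Theorem~\ref{thm:UTDS-lcb-share}, our $\Gamma_i^{\rm lcb}(s_t,a_t;\hD_i) = \beta_t[\phi(s_t,a_t)^\top \widetilde\Lambda_{ij}^{-1}\phi(s_t,a_t)]^{1/2}$ is a valid $\xi$-uncertainty quantifier on the shared dataset $\hD_i$ for the stated choice of $\beta_t$. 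Applying the decomposition to task $i$ with dataset $\hD_i$, learned policy $\widetilde\pi_i$, and optimal policy $\pi_i^*$ yields the first inequality directly.

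For the second inequality, I would argue pointwise and then take expectations. Theorem~\ref{thm:uncertainty-d} gives, for every $(s,a)\in\cS\times\cA$,
\begin{equation*}
\Gamma_i^{\rm lcb}(s,a;\hD_i) \;\leq\; \Gamma_i^{\rm lcb}(s,a;\cD_i),
\end{equation*}
since the shared covariance satisfies $\widetilde\Lambda_{ij} = \widetilde\Lambda_i + \widetilde\Lambda_j \succeq \widetilde\Lambda_i$ and consequently $\phi^\top \widetilde\Lambda_{ij}^{-1}\phi \leq \phi^\top \widetilde\Lambda_i^{-1}\phi$ by the generalized Rayleigh quotient argument used in the proof of Theorem~\ref{thm:uncertainty-d}. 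Taking expectation under the trajectory distribution induced by $\pi_i^*$ and summing over $t\in[T]$ preserves the inequality and gives the second bound. A small point to verify is that $\beta_t$ can be chosen uniformly for both the single-task and the shared-task regimes (or otherwise that the $\beta_t$ used on the right-hand side dominates the one on the left), which is immediate from the $\cO(T\sqrt{d}\log(T/\xi))$ scaling in Theorem~\ref{thm:UTDS-lcb-share} since that rate is independent of the dataset size.

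The main obstacle is essentially bookkeeping rather than technical difficulty: I must make sure that the policy $\widetilde\pi_i$ appearing in the suboptimality is precisely the one induced by pessimistic value iteration with the penalty $\Gamma_i^{\rm lcb}(\cdot;\hD_i)$ on $\hD_i$, so that the PEVI decomposition applies verbatim, and that the expectation $\mathbb{E}_{\pi_i^*}$ on both sides is taken with respect to trajectories of the same underlying linear MDP (the reward function of task $i$, shared dynamics). Once these identifications are spelled out, both inequalities follow by a direct citation of the PEVI suboptimality lemma and of Theorem~\ref{thm:uncertainty-d}, with no additional calculation required.
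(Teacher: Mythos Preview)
Your proposal is correct and follows essentially the same route as the paper: the first inequality is obtained by citing the PEVI suboptimality bound of \citep{pevi-2021} together with Theorem~\ref{thm:UTDS-lcb-share} to certify that $\Gamma_i^{\rm lcb}(\cdot;\hD_i)$ is a valid $\xi$-uncertainty quantifier, and the second by applying Theorem~\ref{thm:uncertainty-d} pointwise and taking expectations under $\pi_i^*$. Your additional bookkeeping remarks (matching $\widetilde\pi_i$ to the PEVI output, uniformity of $\beta_t$) are sound and make the argument more careful than the paper's terse version.
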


\begin{proof}
The first inequality holds since $\Gamma^{\rm lcb}_i(s_t,a_t;\hD_i)$ forms a valid $\xi$-uncertainty quantifier by Theorem \ref{thm:UTDS-lcb-share}. We refer to \citep{pevi-2021} for detailed proof. 

The second inequality is induced by $\Gamma_i(s_t,a_t;\hD_i) < \Gamma_i(s_t,a_t;\cD_i)$ in Theorem \ref{thm:uncertainty-d} with the fixed optimal policy $\pi^*_i$. The optimality gap is information-theoretically optimal under the linear MDP setup with finite horizon.
\end{proof}

Consider an extreme case in tabular MDPs, where the state action feature $\phi(s,a)$ is a one-hot vector. Then we have $\Gamma^{\rm lcb}(s,a)=1/\sqrt{N(s,a)+\lambda}$, where $N(s,a)$ is the pseudo-count of the $(s,a)$ pair. In this case, even if we share data that are totally different from the main task, the optimality gap $\mathbb{E}_{\pi^*}[\Gamma^{\rm lcb}(s,a)]$ does not increase since the pseudo-count $N(s,a)$ cannot decrease with data sharing. As a result, even if data sharing does not benefit the data coverage of the optimal policy, it does not degrade performance. Such a key factor makes UTDS inherently different from the policy constraint methods that need appropriate data selection to select similar samples compared to the main task.

\subsection{Supplementary lemma}

According to Lemma \ref{app:lemma-ucb} in our manuscript, the uncertainty term is equivalent to $\Gamma^{\rm lcb}(s,a)$ in linear MDPs. The value of $\Gamma^{\rm lcb}(s,a)=[\phi(s,a)^\top\Lambda^{-1}\phi(s,a)]^{\frac{1}{2}}$ only relies on the state-action embedding in the offline datasets. Then we have the following lemma.

\begin{lemma}[contraction mapping]
For state-action pairs from the offline dataset, the UTDS operator $\widehat{T}^{\rm UTDS}$ is a $\gamma$-contraction operator in the $\mathcal{L}_\infty$-norm.
\end{lemma}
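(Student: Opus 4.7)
The plan is to exploit the fact, highlighted just before the lemma statement, that $\Gamma_i(s',a')$ depends only on the state-action embedding and the covariance structure of the offline dataset, not on the iterate $Q_i$. Consequently, when we compute the difference $\widehat{T}^{\rm UTDS} Q^{(1)}_i - \widehat{T}^{\rm UTDS} Q^{(2)}_i$ for two arbitrary bounded value functions $Q^{(1)}_i, Q^{(2)}_i$, the reward $r_i(s,a)$ and the penalty $\beta_1 \Gamma_i(s',a')$ cancel term-by-term, leaving only the $Q$-dependent part of the target.

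Concretely, I would first fix $(s,a)$ and write
\[
\widehat{T}^{\rm UTDS} Q^{(1)}_i(s,a) - \widehat{T}^{\rm UTDS} Q^{(2)}_i(s,a)
= \gamma\,\mathbb{E}_{s'\sim \widehat{D}_i,\, a'\sim \pi_i}\bigl[Q^{(1)}_i(s',a') - Q^{(2)}_i(s',a')\bigr].
\]
Then, taking absolute values, pushing inside the expectation by Jensen, and bounding the integrand uniformly by $\|Q^{(1)}_i - Q^{(2)}_i\|_\infty$, I get
\[
\bigl|\widehat{T}^{\rm UTDS} Q^{(1)}_i(s,a) - \widehat{T}^{\rm UTDS} Q^{(2)}_i(s,a)\bigr|
\leq \gamma\,\|Q^{(1)}_i - Q^{(2)}_i\|_\infty.
\]
Taking the supremum over $(s,a)$ on the left yields the required $\gamma$-contraction in $\mathcal{L}_\infty$.

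The main subtlety, and the only place I would pause, concerns the role of $\pi_i$ inside the target. If $\pi_i$ is held fixed (policy-evaluation view), the argument above is immediate. If instead $\pi_i$ is the greedy policy with respect to the current $Q_i^-$, then I would replace the expectation over $a'\sim\pi_i$ with a $\max_{a'}$ and use the standard inequality $|\max_{a'} Q^{(1)}_i(s',a') - \max_{a'} Q^{(2)}_i(s',a')| \leq \max_{a'} |Q^{(1)}_i(s',a') - Q^{(2)}_i(s',a')|$ before bounding by $\|Q^{(1)}_i-Q^{(2)}_i\|_\infty$; the rest of the argument is unchanged. In either case, the crucial structural fact is that $\Gamma_i$ is $Q$-independent, which is exactly what the preceding discussion guarantees for state-action pairs lying in the support of $\widehat{D}_i$. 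Combined with Banach's fixed-point theorem, this contraction property gives convergence of the UTDS updates to a unique fixed point, as claimed in Section \ref{sec:method}.
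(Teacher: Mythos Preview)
Your proposal is correct and follows essentially the same approach as the paper: both arguments cancel the $Q$-independent terms $r_i$ and $\Gamma_i$ in the difference $\widehat{\mathcal T}^{\rm UTDS}Q^{(1)}-\widehat{\mathcal T}^{\rm UTDS}Q^{(2)}$ and then bound the remaining expectation by $\gamma\|Q^{(1)}-Q^{(2)}\|_\infty$. Your additional remark on the greedy-$\pi_i$ case is not needed for the paper's version (which treats $\pi$ as fixed inside the expectation), but it does no harm.
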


\begin{proof}
Let $Q_1$ and $Q_2$ be two arbitrary $Q$ functions. Since $a\in{\rm Support}(\mu(\cdot|s))$, then we have
\begin{equation*}
\begin{aligned}
&\| \widehat{\mathcal{T}}^{\rm UTDS}Q_1 - \widehat{\mathcal{T}}^{\rm UTDS}Q_2 \|_\infty
\\&=
\max_{s,a} \Big| \big( r(s,a) + \gamma \mathbb{E}_{s'\sim P,a'\sim\pi}[Q_1(s',a')-\Gamma^{\rm lcb}(s',a')] \big) \\ &\qquad\qquad -
\big( r(s,a) + \gamma \mathbb{E}_{s'\sim P,a'\sim\pi}[Q_2(s',a')-\Gamma^{\rm lcb}(s',a')\big)] \Big| \\
&= \gamma \max_{s,a}\left| \mathbb{E}_{s'\sim P,a'\sim\pi}\left[Q_1(s',a') - Q_2(s',a') \right]  \right| \le \gamma \max_{s,a} \|Q_1 - Q_2\|_\infty 
= \gamma \|Q_1 - Q_2\|_\infty.
\end{aligned}
\end{equation*}
Hence, for some arbitrarily initialized $Q$-function, it is guaranteed to converge to a unique fixed point by repeatedly applying $\widehat{\mathcal{T}}^{\rm UTDS}$.
\end{proof}

\section{Implementation Details}
\label{sec:imple}

In this section, we provide the hyper-parameters and experimental settings. We refer to \url{https://github.com/Baichenjia/UTDS} for the open-source implementation and the released multi-task datasets. 

\subsection{UTDS}

The implementation of ensemble-$Q$ networks in UTDS is based on PBRL and EDAC, while we use much fewer ensemble networks (i.e., with 5 networks) compared to PBRL (with 10 networks) and EDAC (with 10-50 networks). We use the \emph{same} hyper-parameter settings for data-sharing tasks in all domains. The hyper-parameters are listed in Table \ref{tab:hyper-UTDS}. 

\begin{table}[h!]
\small
  \caption{Hyper-parameters of UTDS}
  \label{tab:hyper-UTDS}
  \centering
  \begin{tabular}{p{0.55\columnwidth}p{0.4\columnwidth}}
    \toprule
    Hyper-parameters & Value\\
    \midrule
    The number of bootstrapped networks $N$          & 5  \\
    Policy network  & FC(256,256,256,action\_dim) with ReLU activations\\
    $Q$-network  & FC(256,256,256,1) with ReLU activations\\
    Target network smoothing coefficient  & $5e-3$ \\
    Discount factor $\gamma$ & 0.99 \\
    Learning rate (policy) & $1e-4$ \\
    Learning rate ($Q$-networks) & $1e-4$ \\
	Optimizer & Adam  \\
	Batch size & 1024 \\
	Number of OOD samples for each state & 3 \\
	Factor $\beta_1$ for uncertainty penalty in offline data & 0.001 \\
	Factor $\beta_2$ for uncertainty penalty in OOD data & 3.0$\rightarrow$ 0.1 (first half), 0.1 (other)\\
	Factor $\alpha$ for exponentially decay of $\beta_2$ with training steps & 0.99995 \\
	Training steps & 1M \\
    \bottomrule
  \end{tabular}
\end{table}

For the critic training, we use different factors (i.e., $\beta_1$ and $\beta_2$) for the offline data and the OOD data. The update of the $Q$-function in offline data relies on an ordinary Bellman target with a constant factor for uncertainty penalty. In contrast, since the OOD data does not have the reward and transition function, the pseudo-target is used for uncertainty penalty based on the current value estimation. According to our analysis, $\beta_2$ should exponentially decay to ensure $Q(s^{\rm ood},a^{\rm ood})$ converges to a fixed point. 

For the actor training, we follow SAC-N \citep{EDAC-2021} by using the minimum of ensemble $Q$-function $\min_{n=1,\dots,N} Q_n$ as the target, which is approximately equivalent to using $Q_n-\beta_0 \cdot {\rm Std}(Q_n)$ with a fixed $\beta_0$. 

\subsection{Baselines}

\paragraph{Direct Sharing} 
For the shared dataset, we relabel the reward function based on the Mujoco `Physics' class of DeepMind Control Suite. Then we combine the relabeled data with the original data to construct a mixed dataset for training. For direct sharing, we use CQL \citep{cql-2020} to train a policy based on the mixed dataset. 
 
\paragraph{CDS and CDS-Zero \citep{CDS-2021,UCDS-2022}} 
These two methods are implemented based on CQL. For each training step, we sample experiences ten times the size of the batch size, and calculate the conservative value function for each state-action pair. Then we choose examples with conservative $Q$-values within the top 10\% of the sampled batch in training. For CDS-Zero, we do not perform reward relabeling and set the shared reward to be zero. We remark that we additional use layer-normalization for the actor-critic network in CDS-based methods since we find it helps CDS perform more stable. In contrast, we do not use layer-normalization for the proposed UTDS algorithm. 

\subsection{Experimental Settings} 

\paragraph{Environments and Training} 
We use 12 tasks from \emph{Walker, Quadruped}, and \emph{Jaco Arm} domains in experiments, as shown in Figure~\ref{fig:app-task}. For \emph{Walker} domain, the observation space has 24 dimensions, and the action space has 6 dimensions. For \emph{Quadruped} domain, the observation space has much higher dimensions (i.e., 78), which contains egocentric state, torso velocity, torso upright, IMU, and force torque. The action space of \emph{Quadruped} has 12 dimensions. For \emph{Jaco Arm}, the observation space has 42 dimensions, which contains information of the arm, hand, and target. The action space has 9 dimensions. 

According to the default settings of DeepMind Control Suite, the episode lengths for Walker and Quadruped domains are set to 1000, and the episode length for Joco domain is set to 250. As a result, the maximum episodic reward for Walker and Quadruped domains are 1000, and for Joco domain is 250. In experiments, we run each algorithm for 1M training steps with 3 random seeds. We evaluate the algorithm for 10 episodes every 1K training steps. 

\begin{figure*}[h!]
\centering
\includegraphics[width=1.0\columnwidth]{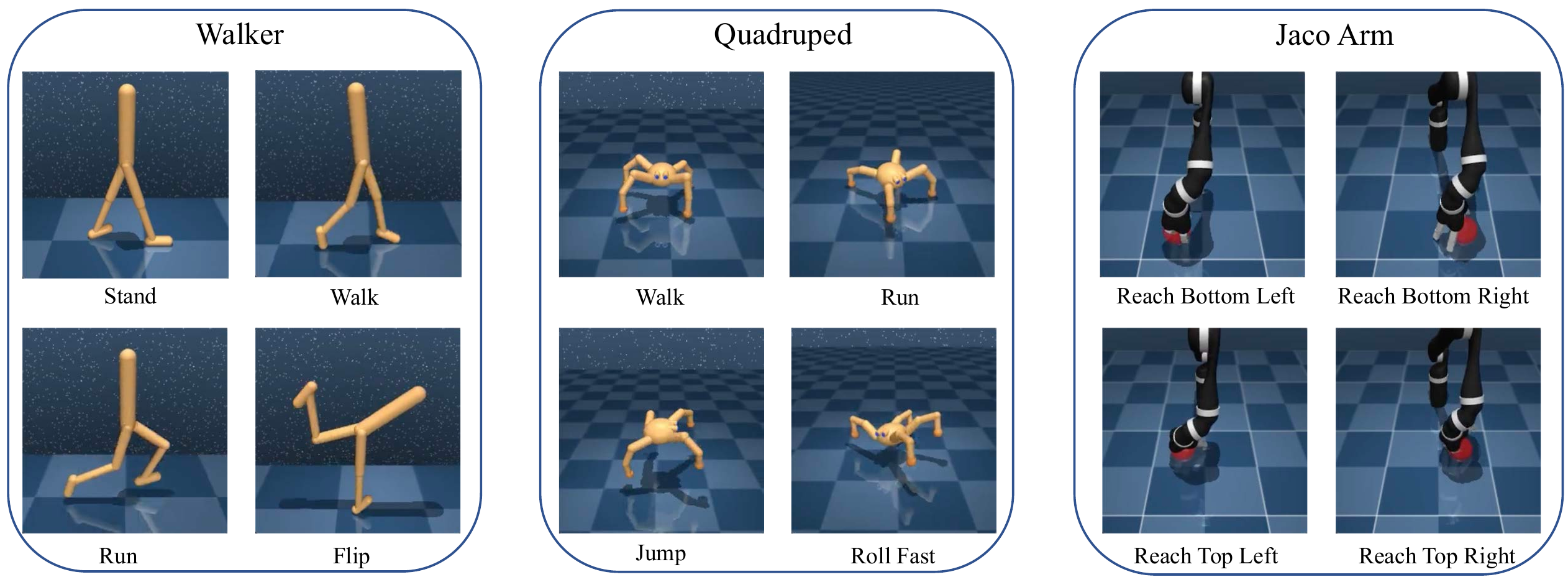}
\caption{The illustration of 12 continuous control tasks from 3 domains in our experiments.}
\label{fig:app-task}
\end{figure*}

\paragraph{Computation Cost Comparison} We compare the computational cost of UTDS, CDS, and CDS-Zero based on the average cost among \emph{Walker} domain. We report the number of parameters, GPU memory, and runtime per sharing task in training. The experiment is conducted on a single RTX-2080Ti GPU. We give the result in Table~\ref{tab:hyper-cost}. The result shows that (\romannumeral1) UTDS requires extra parameters to handle the ensemble of $Q$-networks, while (\romannumeral2) CDS and CDS-Zero need more training time to perform conservative $Q$-value calculation and data-selection per epoch. CDS-Zero is slightly faster since it does not perform reward relabeling. (\romannumeral3) The GPU memories for the two methods are similar since CDS and CDS-Zero need much larger training batches to first perform data selection and then perform RL training. 

\begin{table}[h!]
\small
  \caption{Comparison of computational costs.}
  \label{tab:hyper-cost}
  \centering
  \begin{tabular}{lccc}
    \toprule
    & Runtime (hours / 1M steps) & GPU memory  & Number of parameters \\
    \midrule
    CDS       & 15.75 & 1.31G & 0.37M \\
    CDS-Zero  & 15.68 & 1.31G & 0.37M \\
    UTDS       & 6.56  & 1.34G & 0.81M \\
  \bottomrule
  \end{tabular}
\end{table}



\newpage

\section{Additional Experimental Results}
\label{sec:app-exp}

In this section, we report the complete results of data sharing in \emph{Walker, Quadruped, and Jaco Arm} domains. For each domain, we report 
\begin{itemize}
	\item the single-task training score without data sharing. 
	\item the direct data-sharing score with relabeled rewards, and the policy is trained by CQL. 
	\item the result comparison between the proposed UTDS and CDS \citep{CDS-2021} that performs conservative data selection.
	\item the result comparison between the proposed UTDS and CDS-Zero \citep{UCDS-2022} that performs conservative data selection with zero relabeled reward.
\end{itemize}

\subsection{Single-task results}

UTDS provides a unified view for single-task and multi-task training, as shown in Fig.~\ref{fig:intro}. Thus, UTDS can be directly used for training in single-task datasets. For CDS and CDS-Zero, performing single-task training degenerates into the CQL algorithm \citep{cql-2020} that does not perform data selection. Figure~\ref{fig:app-single-walker}, Figure~\ref{fig:app-single-quad}, and Figure~\ref{fig:app-single-jaco} show the single-task training result for the \emph{Walker, Quadruped, and Jaco Arm} domains, respectively. 

\begin{figure*}[h!]
\centering
\includegraphics[width=0.8\columnwidth]{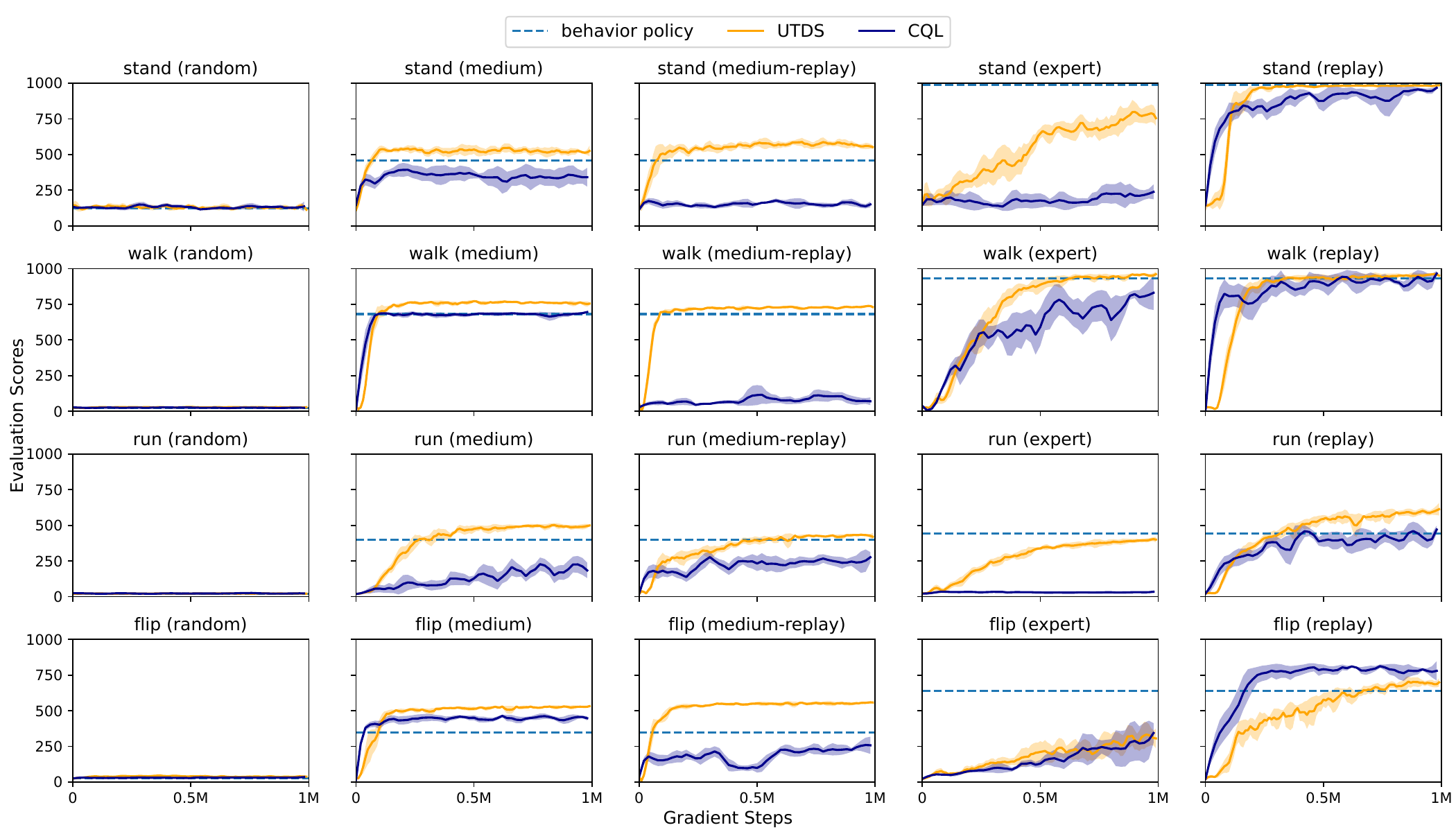}
\caption{Result comparison of single-task training in the \emph{Walker} domain. UTDS outperforms CQL in most tasks, especially for \emph{medium}, \emph{medium-replay}, and \emph{expert} datasets. Both methods perform poorly in random datasets.}
\label{fig:app-single-walker}
\end{figure*}

\begin{figure*}[h!]
\centering
\includegraphics[width=0.8\columnwidth]{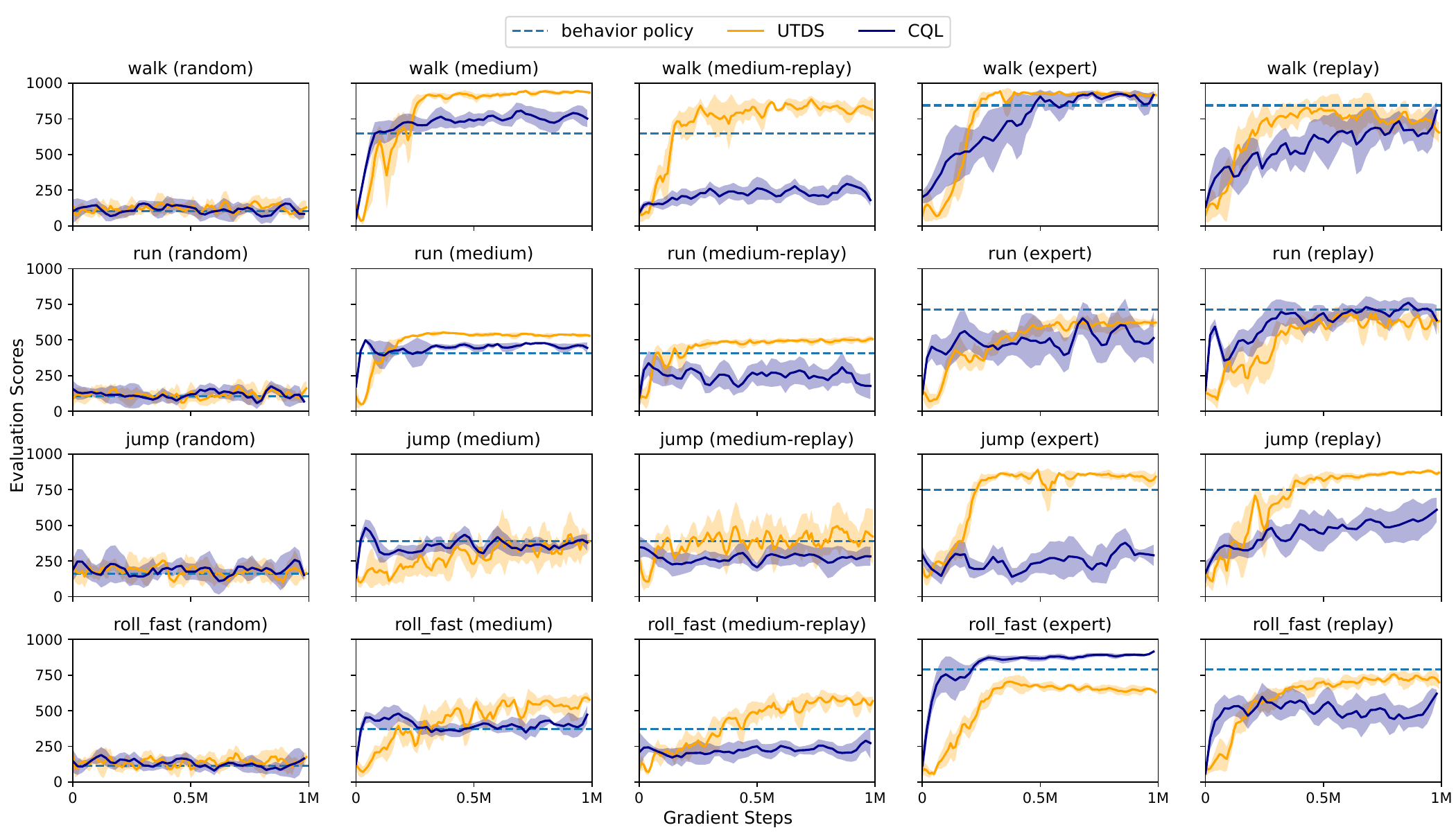}
\caption{Result comparison of single task training in \emph{Quadruped} domain. UTDS outperforms CQL in most tasks, except for the \emph{Roll-Fast} expert dataset. Both methods perform poorly in random datasets.}
\label{fig:app-single-quad}
\end{figure*}

\begin{figure*}[h!]
\centering
\includegraphics[width=0.8\columnwidth]{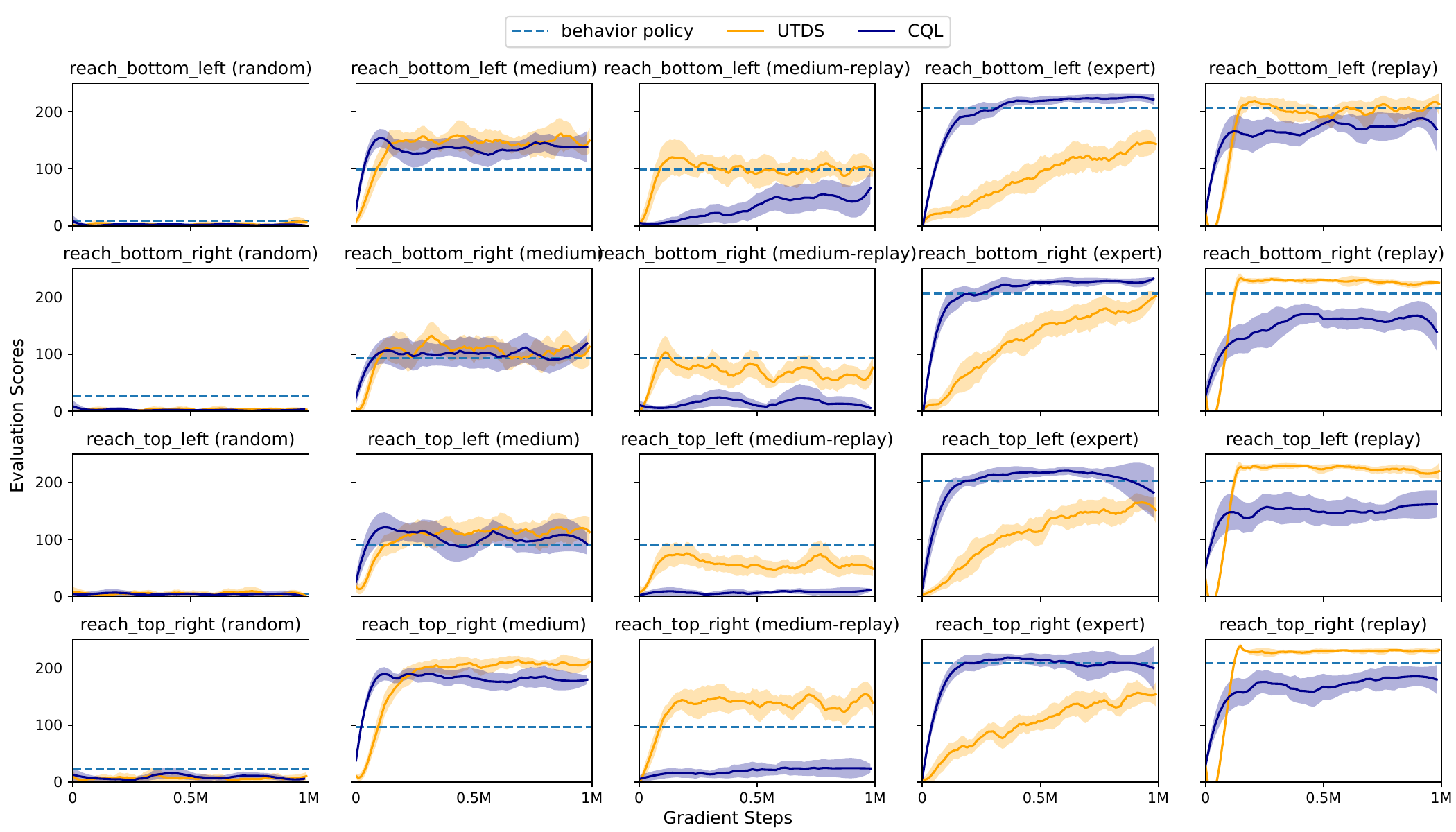}
\caption{Result comparison of single-task training in \emph{Jaco Arm} domain. We find CQL outperforms UTDS in expert dataset. The policy constraints direct imitate the expert policies, which make CQL converge faster with optimal trajectories.}
\label{fig:app-single-jaco}
\end{figure*}

\clearpage
\subsection{Direct data-sharing results}

For offline RL, direct sharing with policy constraint methods can exacerbate the distribution shift when the sharing tasks are very different from the main task. Figure~\ref{fig:app-cql-walker}, Figure~\ref{fig:app-cql-quad}, and Figure~\ref{fig:app-cql-jaco} show the result of the direct sharing for the \emph{Walker, Quadruped,} and \emph{Jaco Arm} domains, respectively. In each figure, we also show the single-task training results for the main task in the leftmost bar (i.e., the shadow bar). 

\begin{enumerate}
\item In \emph{Walker} domain, direct data sharing slightly improves the performance, the reason is the four tasks in \emph{Walker} is closely related. For example, the agent should first \emph{Stand} and then \emph{Walk, Run} and \emph{Flip}, which makes the dataset of \emph{Walk, Run} and \emph{Flip} also contain the experiences of \emph{Stand}. As a result, sharing data from other three tasks often improves the performance of \emph{Stand}.  
\item In \emph{Quadruped} domain, \emph{Jump} task can benefit from data sharing from other tasks since first \emph{Walk}, \emph{Run} or \emph{Roll} make it easier for the agent to \emph{Jump}. Since the behavior policies of \emph{Jump} and other tasks are related, performing direct data sharing also helps learning. For other sharing tasks, direct sharing data degenerates the performance compared to single-task training in most cases.
\item In \emph{Jaco Arm} domain, since the different tasks drive agents to different directions, the behavior policies between tasks can be very different. Direct data sharing significantly exacerbates the distribution shift in offline RL and deteriorates performance. 
\end{enumerate}

We summarize the mean and medium scores for single-task training (with CQL \citep{cql-2020} and PBRL \citep{PBRL-2022}), and direct data sharing with CQL (i.e., CQL-share) in each domain. The results are shown in Table~\ref{tab:res-single}. The result shows direct sharing data degenerates the performance compared to single-task training in most cases. Meanwhile, PBRL outperforms the behavior policy in all domains.

\begin{table}[h!]
\caption{Comparison of single-task training and direct sharing in three domains. The maximum scores of the \emph{Walker} and \emph{Quadruped} domains are 1000, and for \emph{Jaco Arm} is 250.}
\label{tab:res-single}
\centering
\small
\begin{tabular}{c|cc|cc|cc}
\hline
                 & \multicolumn{2}{c|}{Walker} & \multicolumn{2}{c|}{Quadruped} & \multicolumn{2}{c}{Jaco Arm} \\ \hline
                 & Mean         & Median       & Mean           & Median        & Mean          & Median       \\ \hline
Behavior policy & 497.96 & 450.75 & 515.33 & 526.98  & 123.46 & 97.7  \\
Single-task training (CQL) & 363.57       & 267.13       & 437.10         & 411.88        & 106.66        & 128.94       \\ 
Single-task training (PBRL) & 522.17 & 542.33 & 561.75 & 599.58 & 125.43 & 141.37     \\ \hline
Direct share (CQL) & 419.35 & 357.97 & 435.96 & 357.97 & 54.64 & 20.47     \\ \hline
\end{tabular}
\end{table}

\begin{figure*}[h!]
\centering
\includegraphics[width=0.7\columnwidth]{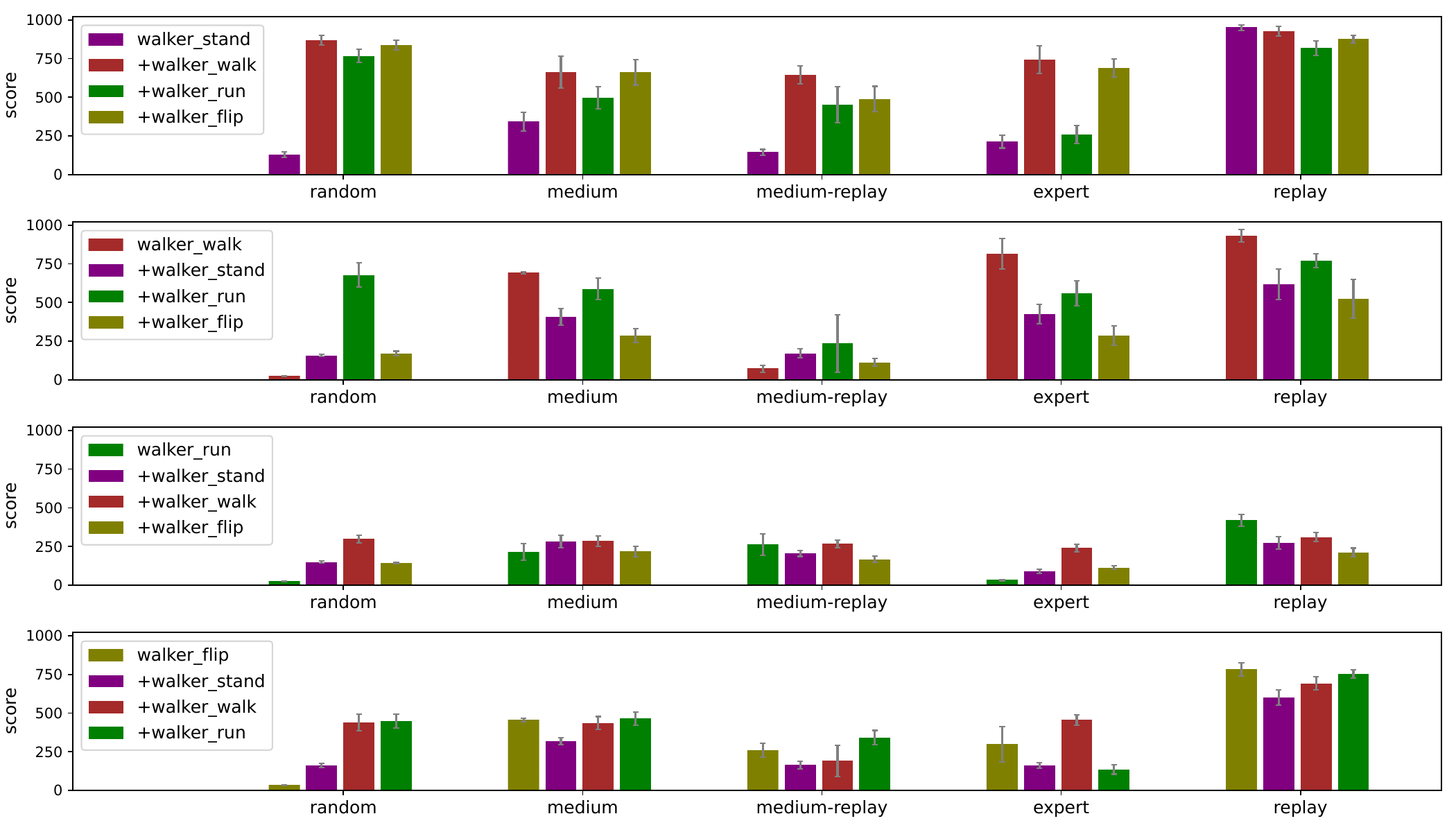}
\caption{Result comparison of single-task training and direct sharing with CQL in \emph{Walker} domain.}
\label{fig:app-cql-walker}
\end{figure*}

\begin{figure*}[h!]
\centering
\includegraphics[width=0.7\columnwidth]{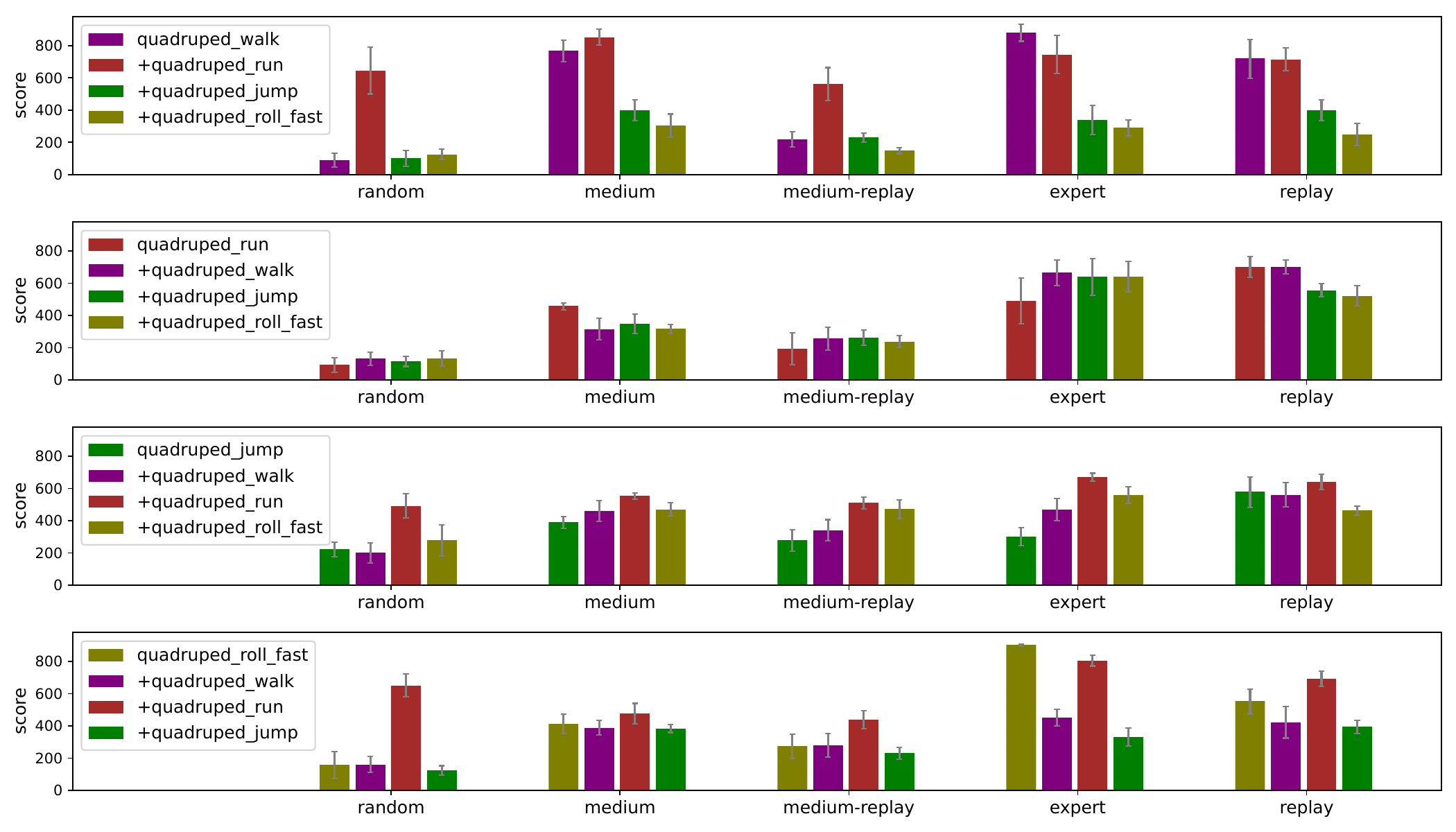}
\caption{Result comparison of single-task training and direct sharing with CQL in \emph{Quadruped} domain.}
\label{fig:app-cql-quad}
\end{figure*}

\begin{figure*}[h!]
\centering
\includegraphics[width=0.7\columnwidth]{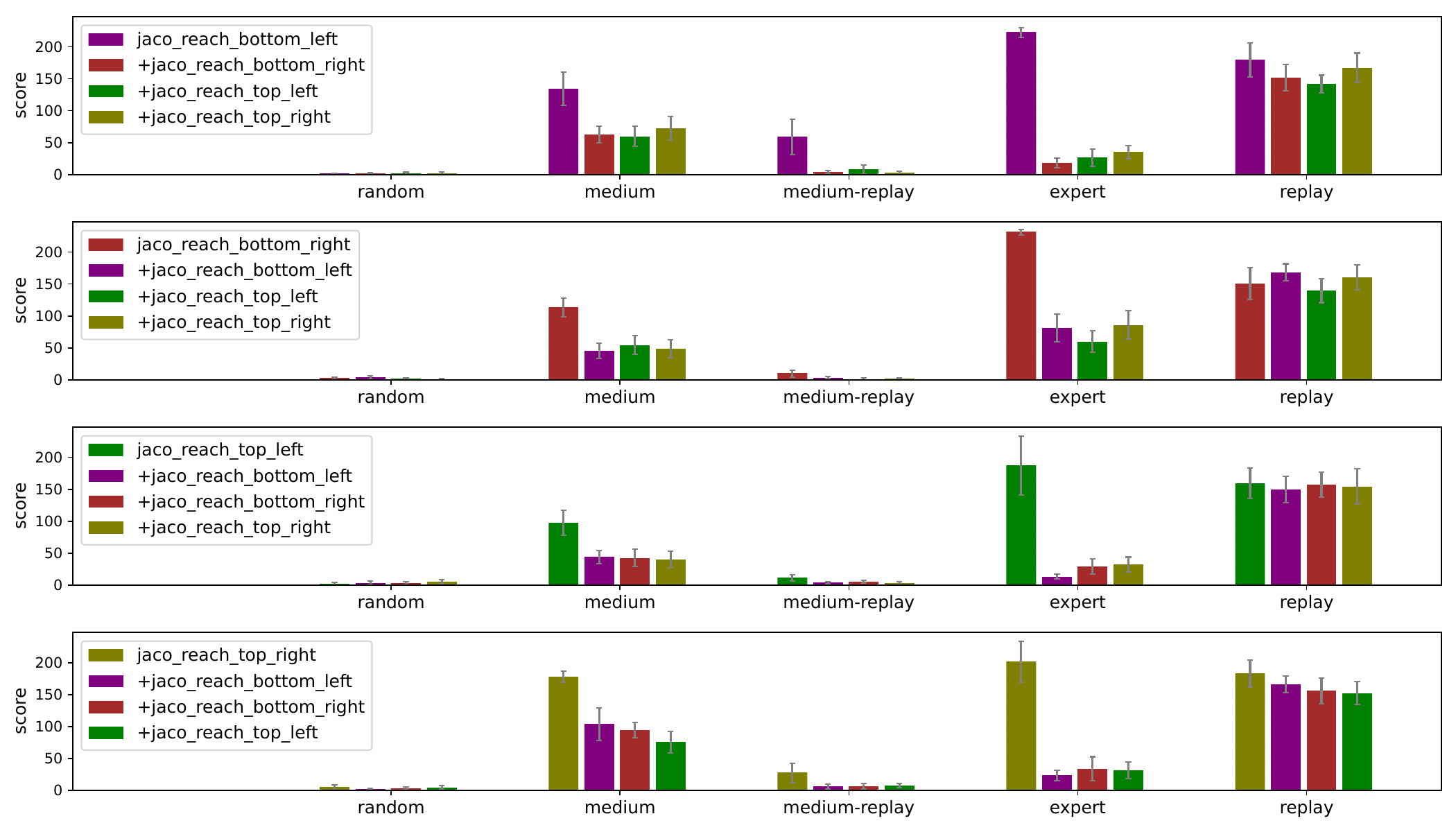}
\caption{Result comparison of single-task training and direct sharing with CQL in \emph{Jaco Arm} domain. Performing direct data sharing with policy constraints methods degrades the performance in most sharing settings.}
\label{fig:app-cql-jaco}
\end{figure*}

\clearpage
\subsection{Result comparison between UTDS and CDS}
\label{sec:app-utds-cds}

We compare the performance of UTDS and CDS in three domains, and the results are shown in
Figure~\ref{fig:app-UTDS-cds-walk}, Figure~\ref{fig:app-UTDS-cds-quad}, and Figure~\ref{fig:app-UTDS-cds-jaco}. In each figure, we show the single-task training results for the main task in the shadow bar.
\begin{enumerate}
\item In \emph{Walker} domain, we find that UTDS generally improves performance through data sharing in the \emph{Stand, Walk,} and \emph{Flip} tasks. An exception is the \emph{Run} task, where data sharing does not significantly improve performance. We hypothesize that the \emph{Run} task is the most difficult task in the \emph{Walker} domain, thus the shared data do not contain state-action pairs of the optimal trajectories in the \emph{Run} task. As a result, sharing data in \emph{Run} task does not bring tighter optimality bound but makes the feature representation learn slower. 
\item In \emph{Quadruped} domain, we find that UTDS generally improves performance in the \emph{Stand, Jump,} and \emph{Roll fast} tasks, especially for non-expert datasets. Similar to \emph{Walker} domain, since \emph{Run} task is more difficult than other tasks, data sharing cannot help reduce the expected uncertainty of trajectories induced by the optimal policy. Compared to UTDS, CDS cannot obtain significant improvement compared to single-task training. 
\item In \emph{Jaco Arm} domain, the behavior policies between tasks can be very different since different tasks drive agents to different directions. CDS based on policy constraints cannot obtain improvements in most cases. In contrast, we find that UTDS improves performance compared to single-task training in most sharing tasks.
\end{enumerate}

\begin{figure*}[h!]
\centering
\includegraphics[width=1.0\columnwidth]{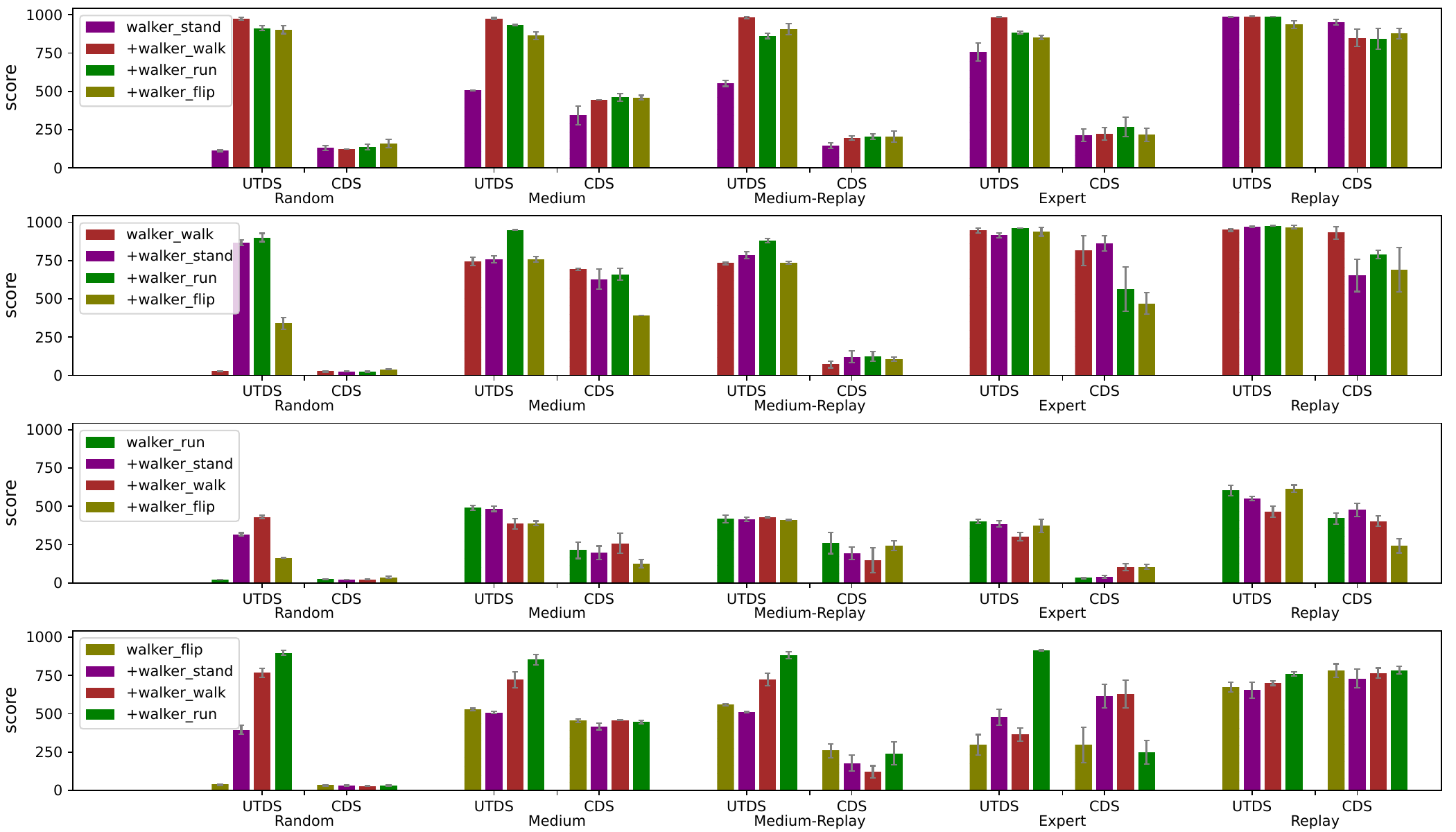}
\caption{Result comparison of the proposed UTDS and CDS in \emph{Walker} domain.}
\label{fig:app-UTDS-cds-walk}
\end{figure*}

\begin{figure*}[h!]
\centering
\includegraphics[width=1.0\columnwidth]{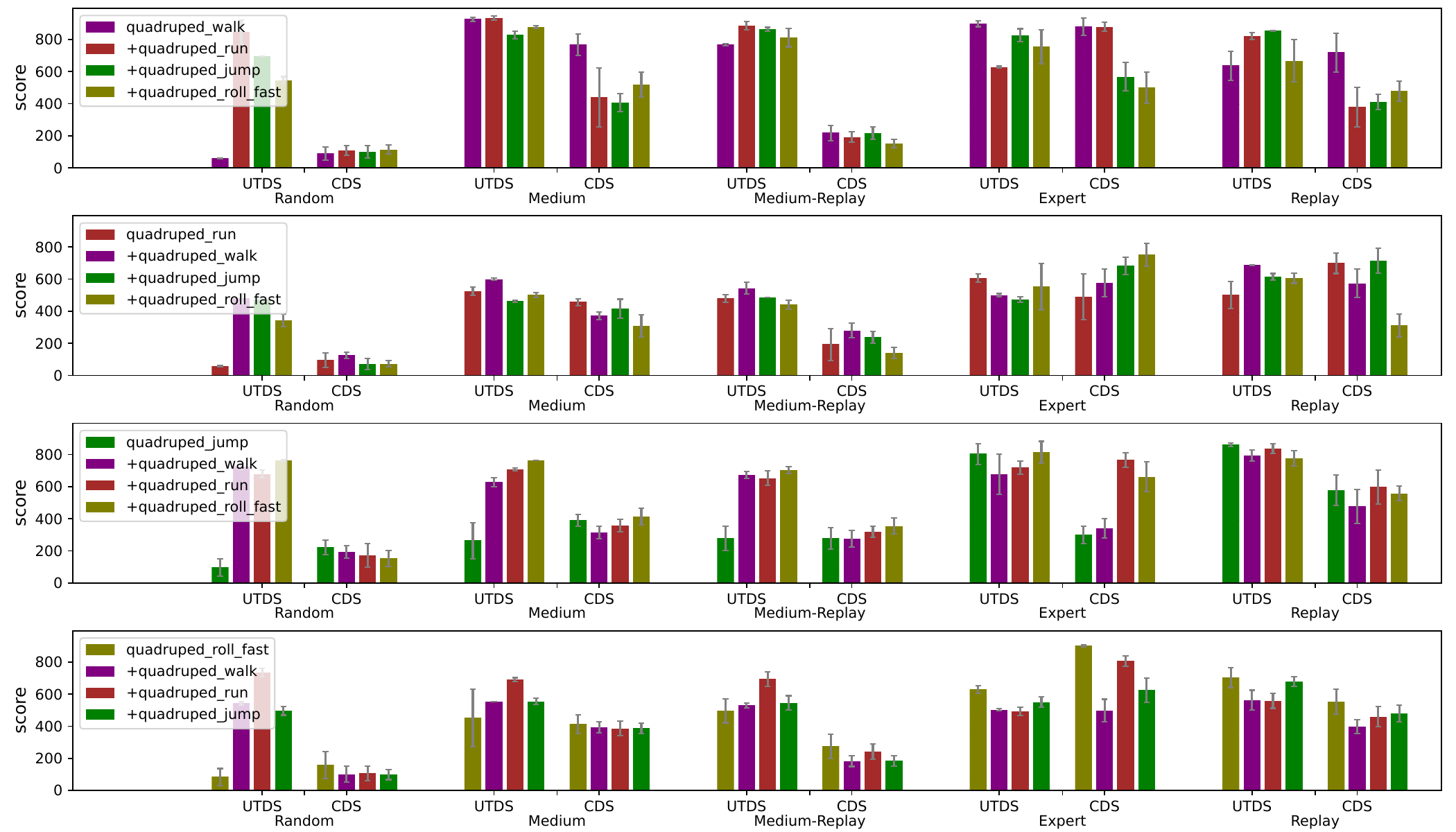}
\caption{Result comparison of the proposed UTDS and CDS in \emph{Quadruped} domain.}
\label{fig:app-UTDS-cds-quad}
\end{figure*}

\begin{figure*}[h!]
\centering
\includegraphics[width=1.0\columnwidth]{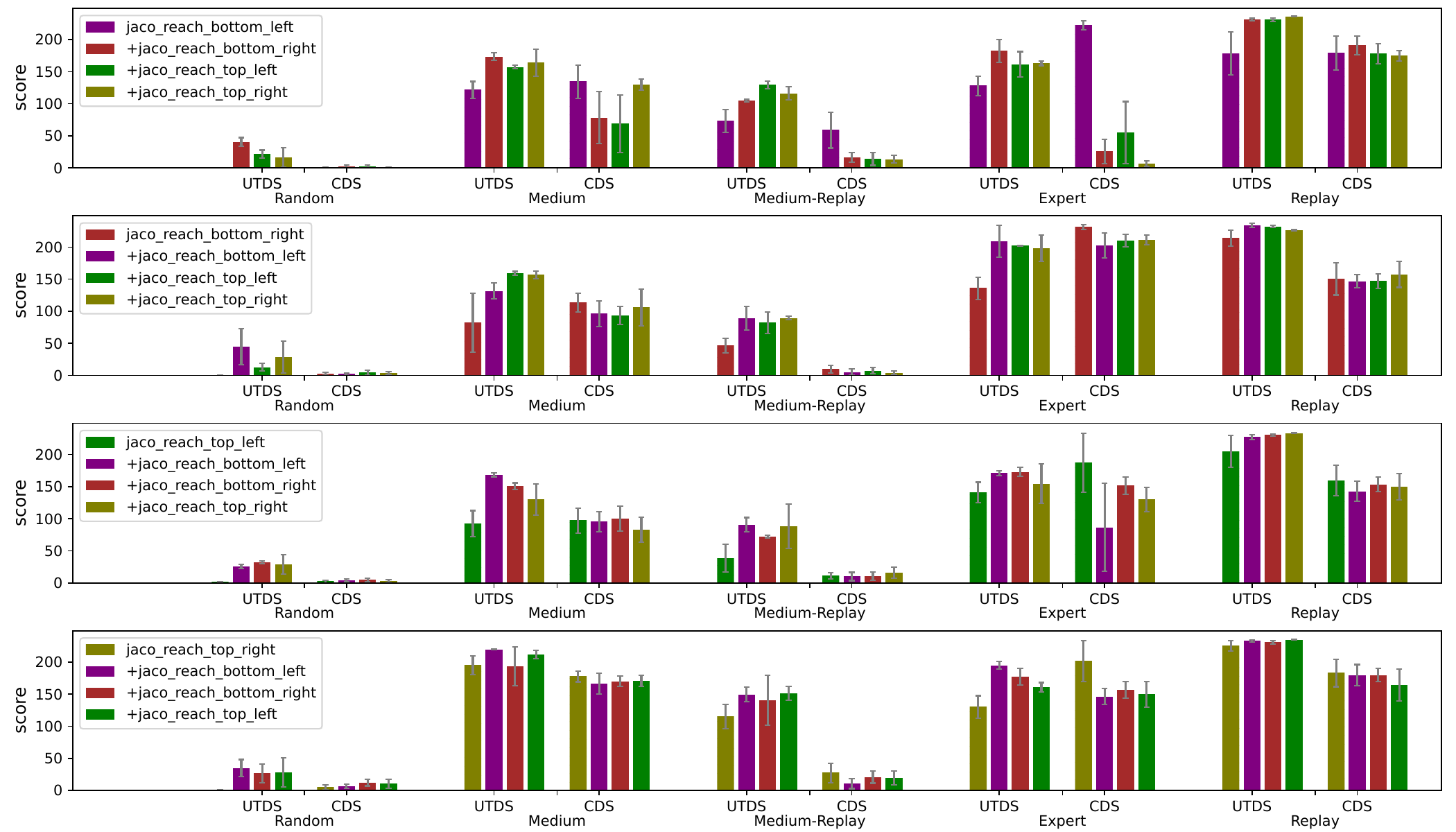}
\caption{Result comparison of the proposed UTDS and CDS in \emph{Jaco Arm} domain.}
\label{fig:app-UTDS-cds-jaco}
\end{figure*}

\clearpage

\subsection{Result comparison between UTDS and CDS-Zero}

We compare the performance of UTDS and CDS-Zero in three domains, and the results are shown in
Figure~\ref{fig:app-UTDSz-cds-walk}, Figure~\ref{fig:app-UTDSz-cds-quad}, and Figure~\ref{fig:app-UTDSz-cds-jaco}. The performance of CDS and CDS-Zero in data sharing are similar. We refer to \S\ref{sec:app-utds-cds} for the analysis.

\begin{figure*}[h!]
\centering
\includegraphics[width=0.9\columnwidth]{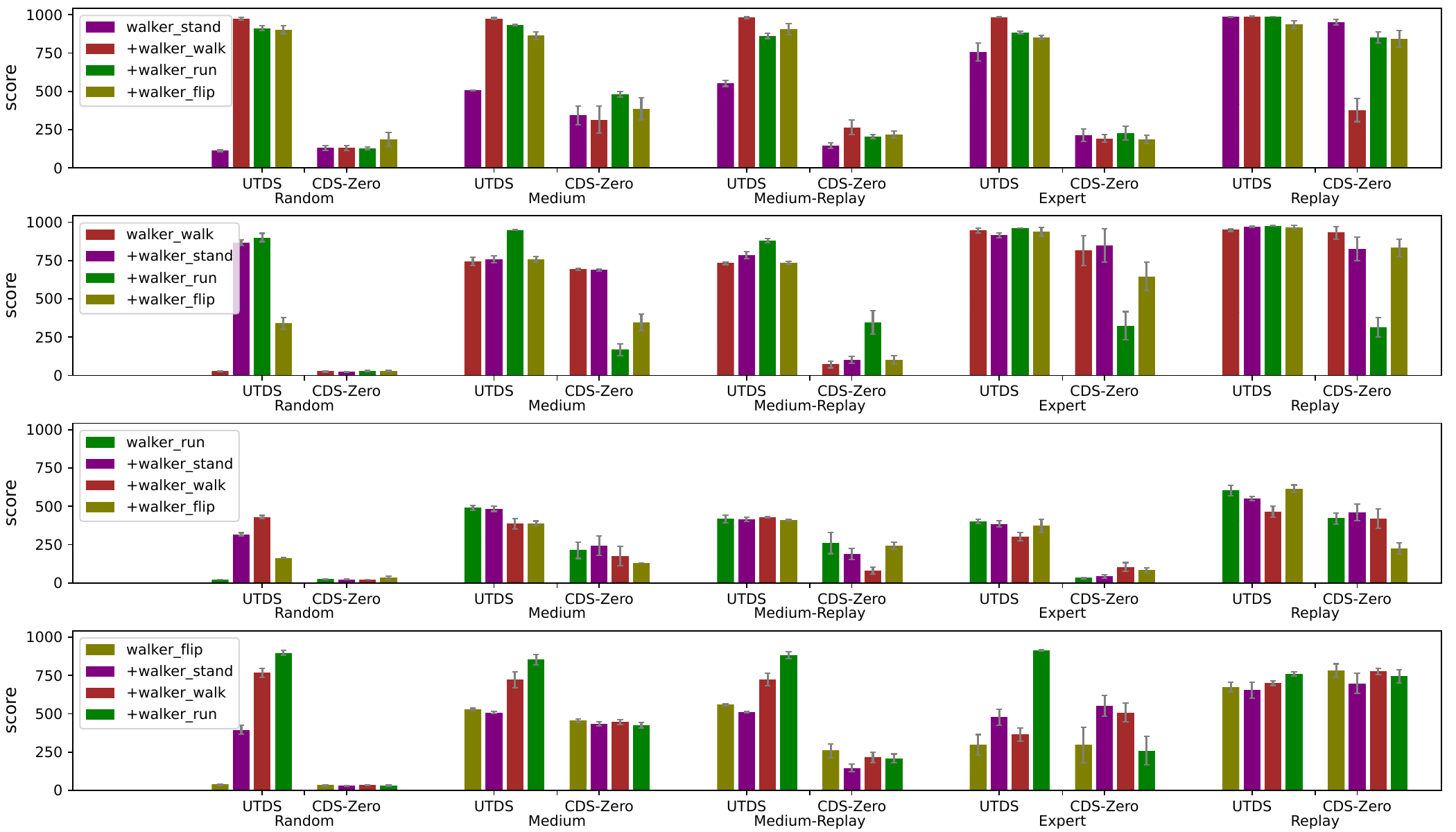}
\caption{Result comparison of the proposed UTDS and CDS-Zero in \emph{Walker} domain.}
\label{fig:app-UTDSz-cds-walk}
\end{figure*}

\begin{figure*}[h!]
\centering
\includegraphics[width=0.9\columnwidth]{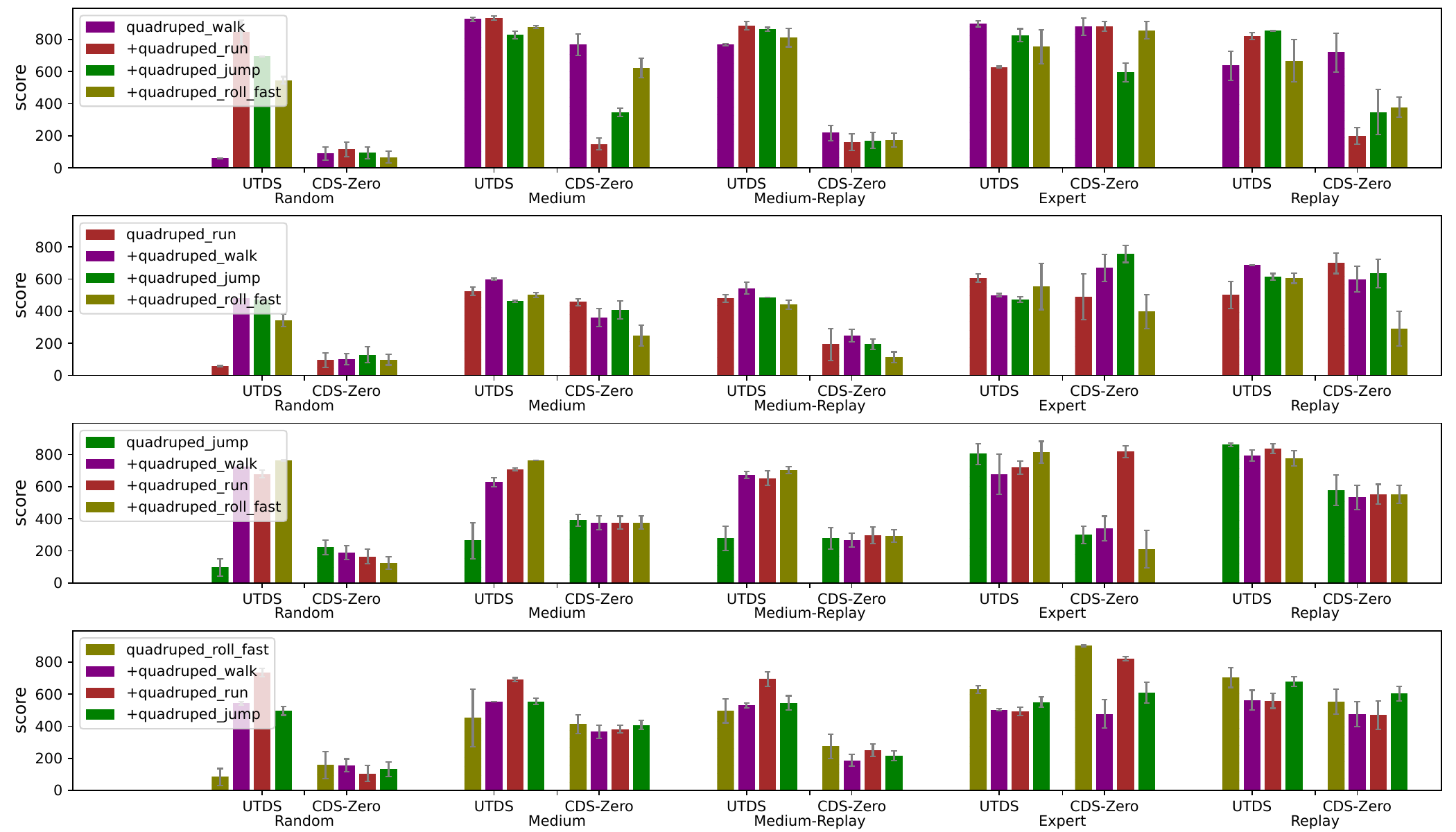}
\caption{Result comparison of the proposed UTDS and CDS-Zero in \emph{Quadruped} domain.}
\label{fig:app-UTDSz-cds-quad}
\end{figure*}

\begin{figure*}[h!]
\centering
\includegraphics[width=0.9\columnwidth]{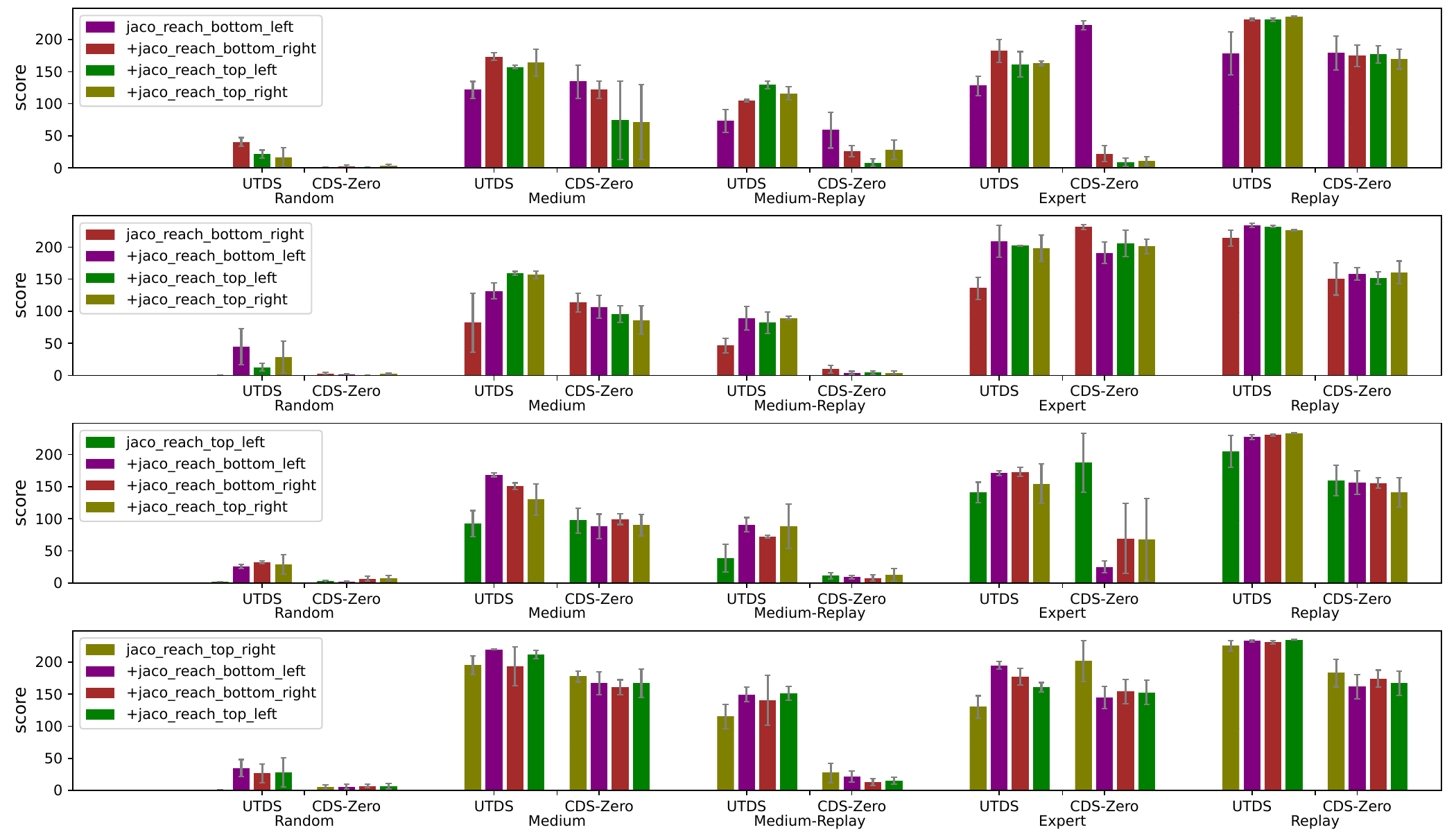}
\caption{Result comparison of the proposed UTDS and CDS-Zero in \emph{Jaco Arm} domain.}
\label{fig:app-UTDSz-cds-jaco}
\end{figure*}

\clearpage

\subsection{Aggregate Evaluation}

We follow the reliable principles \citep{agarwal2021deep} to evaluation the statistical performance. Since the ordinary aggregate measures like \emph{mean} can be easily dominated by a few outlier scores, \citep{agarwal2021deep} presents several robust alternatives that are not unduly affected by outliers and have small uncertainty even with a handful of runs. Based on bootstrap Confidence Intervals (CIs), we can extract aggregate metrics from score distributions, including median, mean, interquartile mean (IQM), and optimality gap. IQM discards the bottom and top 25\% of the runs and calculates the mean score of the remaining 50\% runs. Optimality gap calculates the amount of runs that fail to meet a minimum score of $\eta= 50.0$. We also give performance profiles that reveal performance variability through score distributions. A score distribution shows the fraction of runs above a certain score and is given by $\hat{F}(\tau)=\hat{F}(\tau;x_{1:M,1:N})=\frac{1}{M}\sum_{m=1}^{M}\frac{1}{N}\sum_{n=1}^{N}\mathbbm{1}[x_{m,n}\ge \tau].$ The aggregated results and performance profiles for three domains are given in Fig.~\ref{fig:aggregate-result} and Fig.~\ref{fig:fraction-result}, respectively.

\vspace{5em}
\begin{figure}[h!]
\centering
\subfigure[Walker domain]{
\includegraphics[width=0.9\textwidth]{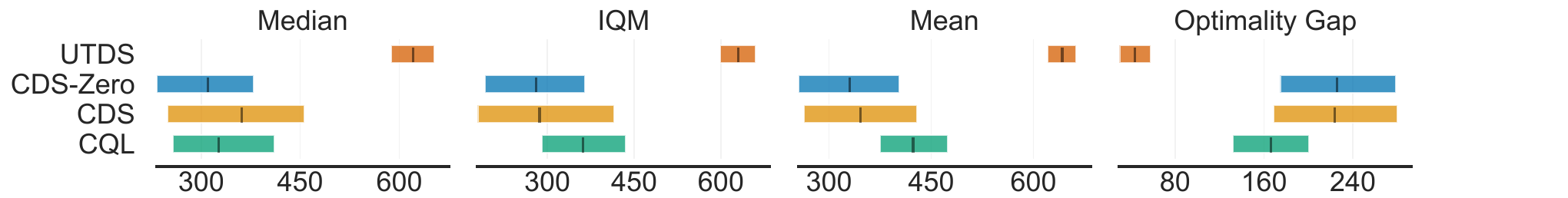}}
\subfigure[Quadruped domain]{
\includegraphics[width=0.9\textwidth]{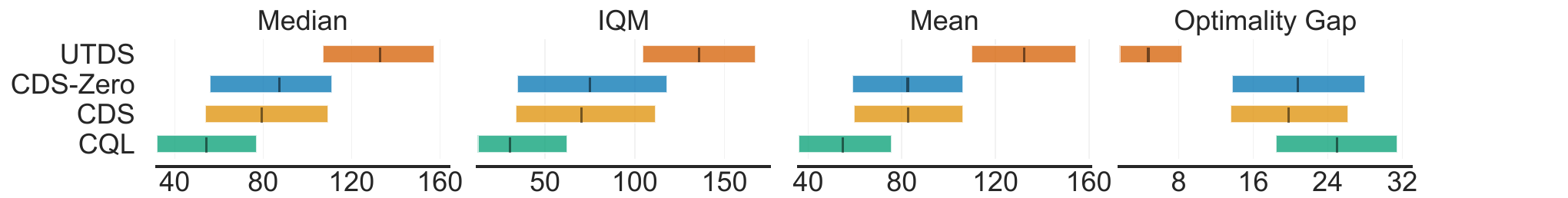}}
\subfigure[Jaco Arm domain]{
\includegraphics[width=0.9\textwidth]{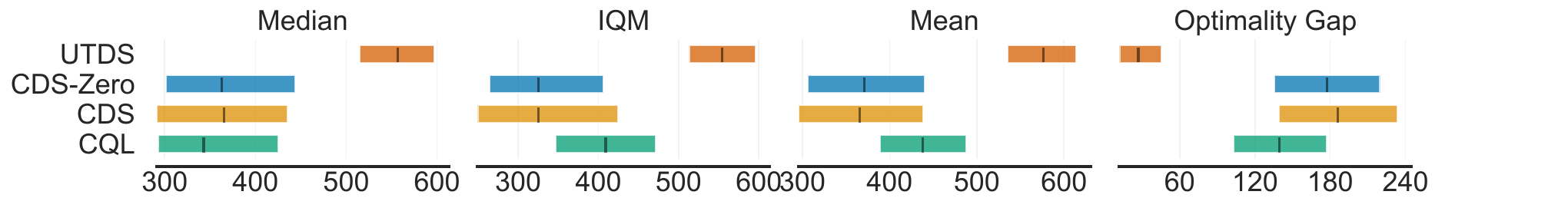} }
\caption{Aggregate metrics with 95\% CIs based on 12 shared tasks among 5 dataset types for each task. Higher mean, median and IQM scores, and lower optimality gap are better. The CIs are estimated using the percentile bootstrap with stratified sampling.}
\label{fig:aggregate-result}
\end{figure}

\begin{figure}[h!]
\centering
\subfigure[Walker domain]{
\includegraphics[width=0.7\textwidth]{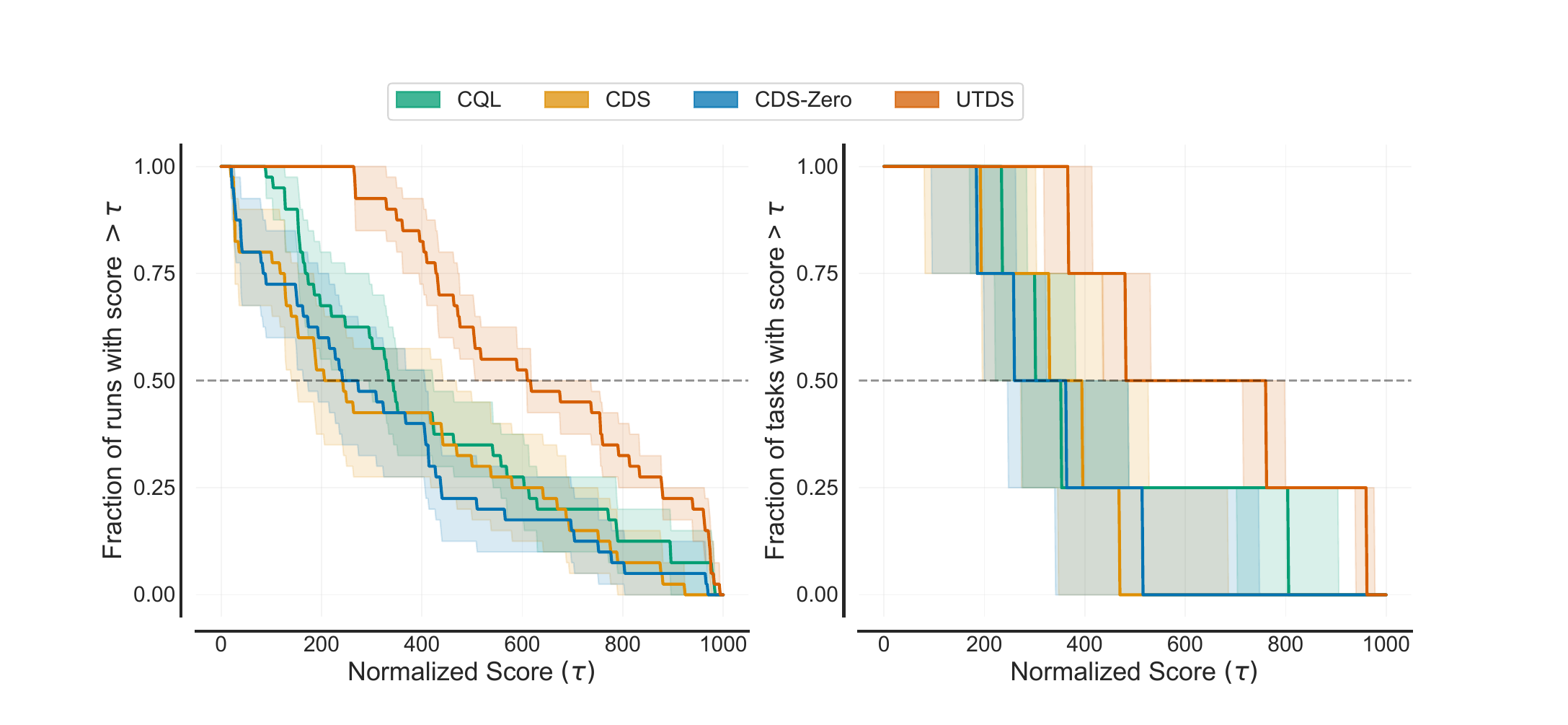}}
\subfigure[Quadruped domain]{
\includegraphics[width=0.7\textwidth]{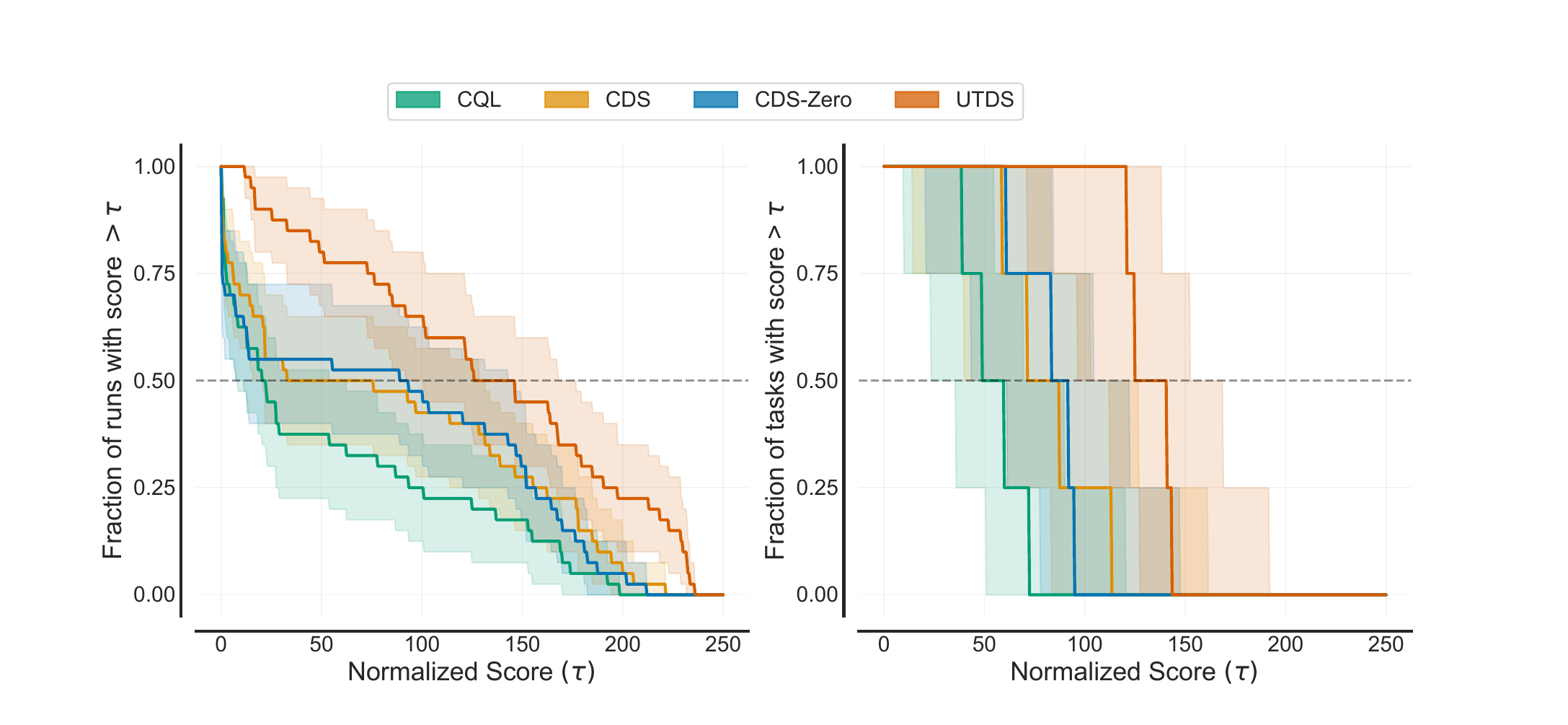}}
\subfigure[Jaco Arm domain]{
\includegraphics[width=0.7\textwidth]{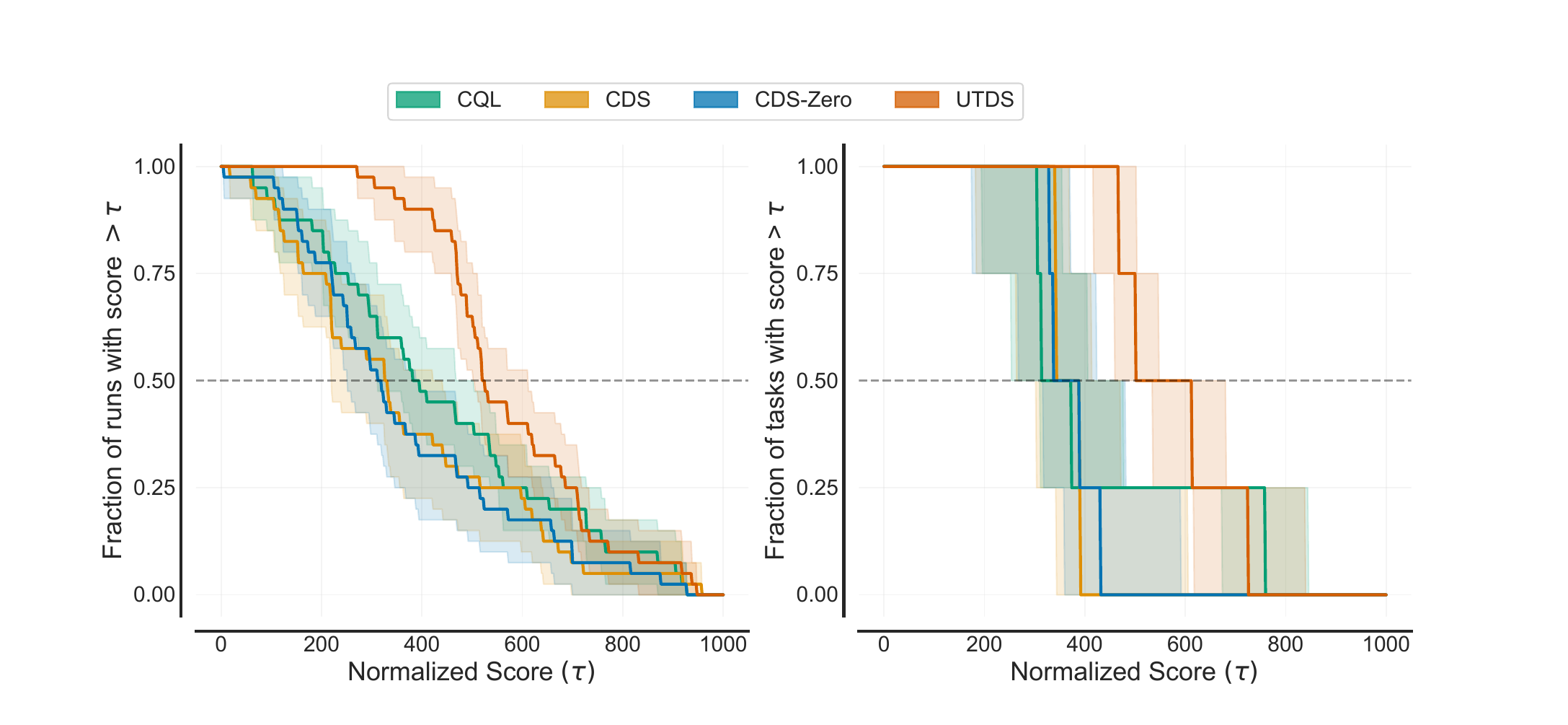} }
\caption{Performance profiles based on score distributions (left), and average score distributions (right). Shaded regions show pointwise 95\% confidence bands based on percentile bootstrap with stratified sampling. The $\tau$ value where the profiles intersect $y = 0.5$ shows the median, and the area under the performance profile corresponds to the mean.}
\label{fig:fraction-result}
\end{figure}

\clearpage
\subsection{\tcb{Ablation Study}}

\tcb{
We remark that $\beta_2$ is used for constructing the pseudo-target for the OOD datapoint, as
\begin{equation}\label{eq:ood-target}
\hT^{\rm ood}Q_i(s,a^{\rm ood}):=Q_i(s,a^{\rm ood})-\beta_2\Gamma_i(s,a^{\rm ood}),
\end{equation}
We adopt an exponentially decay schedule for $\beta_2$. Empirically, we use 3 factors to control the decay process of $\beta_2$, including the initial value, the end value, and the decay factor $\alpha$. In UTDS, we set the initial value to $3.0$, the end value to $0.01$, and the decal factor to $0.99995$. We conducted ablation studies and the result is given in Figure \ref{fig:init}, Figure \ref{fig:end}, and Figure \ref{fig:decay}. 

\begin{figure}[h!]
\centering
\begin{minipage}{0.42\linewidth}
		\centering
		\includegraphics[width=0.9\linewidth]{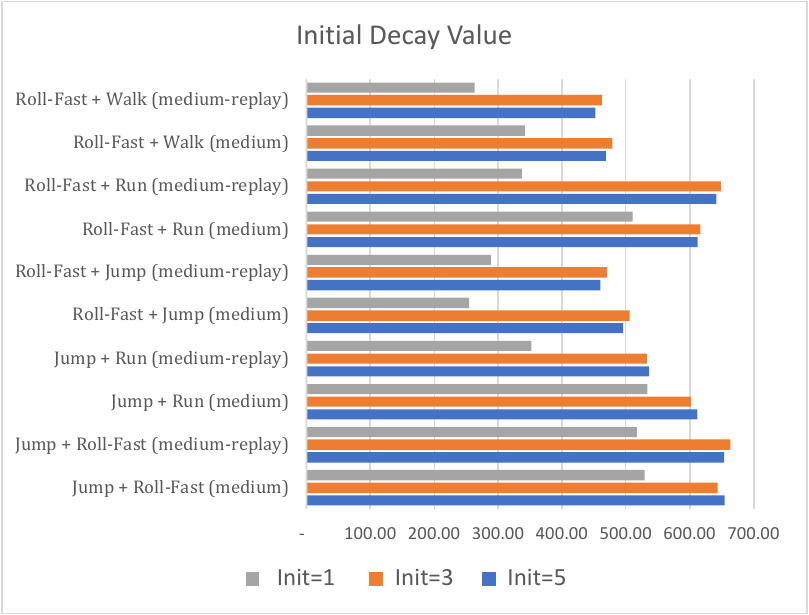}
		\caption{Initial Decay Value}
		\label{fig:init}
\end{minipage}
\begin{minipage}{0.45\linewidth}
		\centering
		\includegraphics[width=0.95\linewidth]{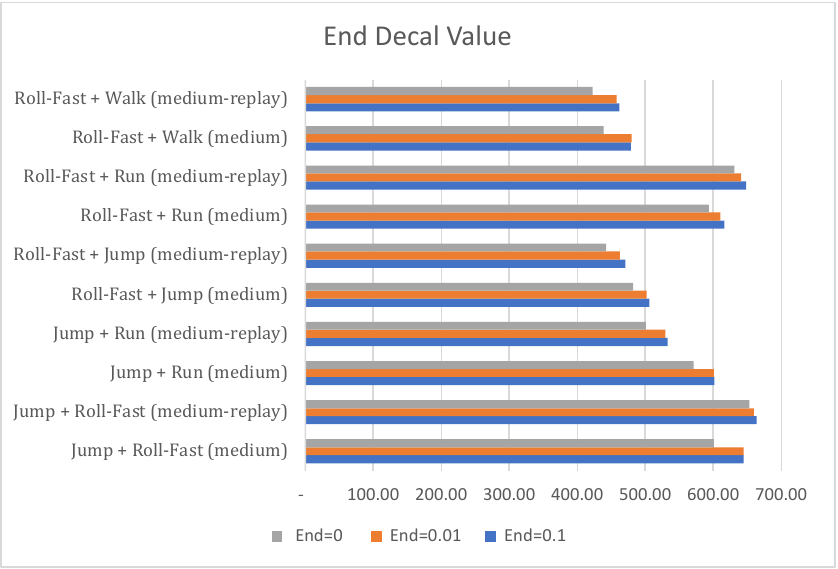}
		\caption{End Decay Value}
		\label{fig:end}
\end{minipage}
\end{figure}

We conduct experiments on several data-sharing tasks from the Quadruped domain. According to Figure 1, we find a large initial value (i.e., 3 or 5) is important for a stable performance. Since the uncertainty quantifier for OOD states and actions is inaccurate initially, a large penalty is required to enforce a strong regularization for the value function of OOD data. In contrast, a small factor will cause over-estimation of OOD data. 

Figure \ref{fig:end} shows the experiment result of different end values in the Quadruped domain. We find that setting the end value larger than $0.01$ is suitable, which provides a small penalty in the final training process. In contrast, decreasing $\beta_2$ to $0$ may cause over-estimation and lead to slightly worse performance.

\begin{figure}[h]
\centering
\includegraphics[width=0.5\textwidth]{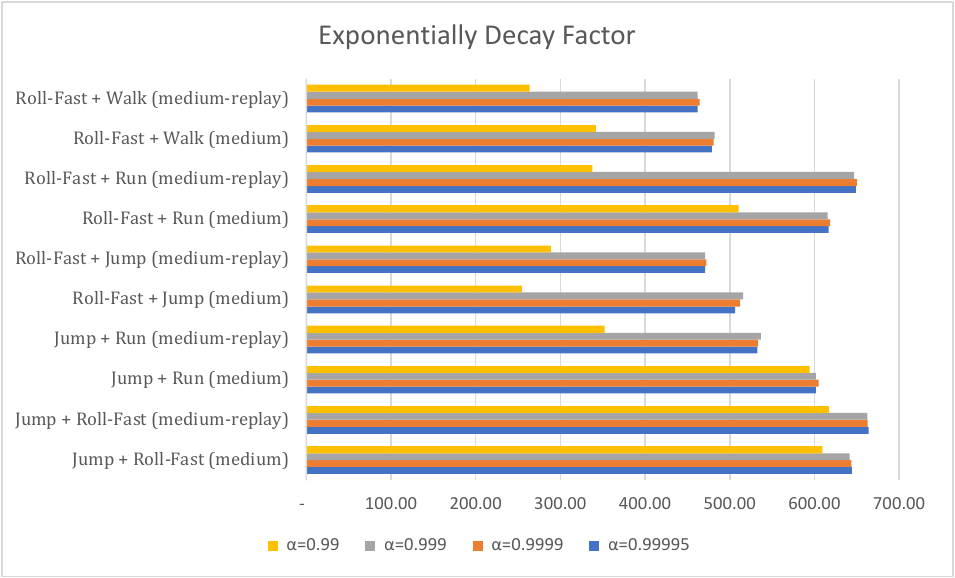}
\caption{Ablation study of the exponentially decay factor of $\beta_2$ in OOD target.}
\label{fig:decay}
\end{figure}

According to our theoretical analysis, an exponential decay factor has well theoretical property to make the OOD operator not lead to negative infinity, which makes a contraction mapping for the OOD target. We try different decay factors in the experiment, the result in Figure~\ref{fig:decay} shows a slow decay process is important to obtain a stable process. As the uncertainty quantifier becomes more accurate in the training process, we decay the penalty factor to make the pseudo-target dependent on the estimated uncertainty gradually.
}

\end{document}